\newtheorem*{rep@theorem}{\rep@title}
\newenvironment{oneshot}[1]{\def\rep@title{#1} \begin{rep@theorem}}{\end{rep@theorem}}
\renewcommand{\cite}[1]{\citep{#1}}
\def\shownotes{0}  
\newcommand{\authnote}[2]{{$\ll$\textsf{\footnotesize #1 notes: #2}$\gg$}}
\newcommand{\authnote}[2]{}
\newtheorem{thm}{Theorem}[section]
\newtheorem{cor}[thm]{Corollary}
\newtheorem{lem}[thm]{Lemma}
\numberwithin{equation}{section}
\newcommand{\E}{\mathbb{E}}
\newcommand{\Exp}{\mathop{\mathbb E}\displaylimits}
\newcommand{\R}{\mathbb{R}}
\newcommand{\LengthDocument}{L}
\newcommand{\expinner}[1]{\exp(\langle #1\rangle)}
\newcommand{\mul}{\textsf{Mul}}
\newcommand{\indicator}[1]{\mathbf{1}_{#1}}
\title{RAND-WALK: A latent variable model approach to word embeddings}
\author{
Sanjeev Arora
\and Yuanzhi Li
\and Yingyu Liang
\and Tengyu Ma
\and Andrej Risteski \thanks{Princeton University, Computer Science Department. \texttt{\{arora,yuanzhil,yingyul,tengyu,risteski\}@cs.princeton.edu}. This work was supported in part by NSF grants CCF-0832797, CCF-1117309, CCF-1302518, DMS-1317308, Simons Investigator Award, and Simons Collaboration Grant. Tengyu Ma was also supported by Simons Award for Graduate Students in Theoretical Computer Science and IBM PhD Fellowship. }
}
\date{}
\begin{document}
\maketitle


\newcommand{\newcite}[1]{\citet{#1}} 

\begin{abstract}
Semantic word embeddings represent  the meaning of a word via a vector, and are created by diverse methods. Many use nonlinear operations on co-occurrence statistics, and have hand-tuned hyperparameters and reweighting methods.

This paper proposes a new generative model, a dynamic version of the log-linear topic model of~\citet{mnih2007three}.  The methodological novelty is to use the prior to  compute  closed form expressions for word statistics. This provides a theoretical justification for nonlinear models like PMI, word2vec, and GloVe, as well as some hyperparameter choices. It also helps explain why low-dimensional semantic embeddings contain linear algebraic structure that allows solution of word analogies, as shown by~\citet{mikolov2013efficient} and many subsequent papers.
 
Experimental support is provided for the generative model assumptions, the most important of which is  that latent word vectors are fairly uniformly dispersed in space. 
 
\end{abstract}

%
%
 %
 %

\section{Introduction} \label{sec:intro}
Vector representations of words (word embeddings) try to capture relationships between words as distance or angle, and have many applications in computational linguistics and machine learning. 
They are constructed by various models whose unifying  philosophy is that the meaning of a word is defined by  \textquotedblleft the company it keeps\textquotedblright~\cite{firth1957a}, namely, co-occurrence statistics. The simplest methods use word vectors that explicitly represent co-occurrence statistics.
Reweighting heuristics are known to improve these methods, as is dimension reduction~\cite{deerwester1990indexing}.  Some reweighting methods are nonlinear, which include taking the square root of co-occurrence counts~\cite{rohde2006}, or the logarithm, or 
the related Pointwise Mutual Information (PMI)~\cite{church1990word}.
These are collectively referred to as Vector Space Models, surveyed in~\cite{turney2010frequency}.

Neural network language models~\cite{hinton1984distributed,rumelhart1988learning,bengio2006neural,collobert2008unified} propose another way to construct embeddings: the word vector is simply the neural network's internal representation for the word. This method is nonlinear and nonconvex. It was popularized via
word2vec, a family of energy-based models in~\cite{mikolov2013distributed,mikolov2013linguistic}, followed by a matrix factorization approach called GloVe~\cite{pennington2014glove}.
The first paper also showed how to solve analogies using linear algebra on word embeddings.
Experiments and theory were used to suggest that these newer methods
are related to the older PMI-based models, but with new hyperparameters and/or
term reweighting methods~\cite{levy2014neural}. 

But note that even the old PMI method is a bit mysterious. The simplest version considers a symmetric matrix with each row/column indexed by a word. The entry for $(w, w')$ 
is $\mbox{PMI}(w, w') =\log \frac{p(w, w')}{p(w)p(w')}$, where 
$p(w, w')$ is the empirical
probability  of words $w, w'$ appearing within a window of
certain size in the corpus,  and $p(w)$ is the marginal probability of $w$. (More complicated models could use 
asymmetric matrices with columns corresponding to context words or phrases, and also involve tensorization.)
Then word vectors are obtained by low-rank SVD on this matrix, or a related matrix with term reweightings.   
In particular, the PMI matrix is found to be closely approximated by a low rank matrix:
there exist word vectors in say $300$ dimensions, which is much smaller than the number of words in the dictionary, such that
\begin{align} \label{eqn:whypmi}
\inner{v_w}{v_{w'}} \approx \mbox{PMI}(w, w')
\end{align}
where $\approx$ should be interpreted loosely.

There appears to be no theoretical explanation for this empirical
finding about the approximate low rank of the PMI matrix. The current paper addresses this.
Specifically, we propose a probabilistic model of text generation that augments the log-linear topic model of~\citet{mnih2007three} with dynamics, in the form of
a random walk over a latent discourse space.  
The chief methodological contribution is using the model priors to analytically derive a closed-form expression that directly explains~(\ref{eqn:whypmi}); see Theorem~\ref{thm:main} in Section~\ref{sec:model}. 
Section~\ref{sec:unification} builds on this insight to give a rigorous justification for
models such as word2vec and GloVe, including the hyperparameter choices for the latter.  
The insight also leads to a mathematical explanation for why these word embeddings allow analogies to be solved using linear algebra; see Section~\ref{sec:interpretability}.
Section~\ref{sec:experiment} shows good empirical fit to this model's assumtions and predictions, including the surprising one that word vectors are pretty uniformly distributed (isotropic) in space.%
\footnote{The code is available at \url{https://github.com/PrincetonML/SemanticVector} }

\subsection{Related work}

Latent variable probabilistic models of language have been used for word embeddings before, including
Latent Dirichlet Allocation (LDA) and its more complicated variants (see the survey~\cite{blei2012probabilistic}), and some neurally inspired nonlinear models~\cite{mnih2007three,DBLP:conf/acl/MaasDPHNP11}. In fact, LDA evolved
out of efforts in the 1990s to provide a generative model that \textquotedblleft explains\textquotedblright\ 
the success of older vector space methods like Latent Semantic Indexing~\cite{Papadimitriouetal98,hofmann1999probabilistic}. However, none of these earlier generative models
has been linked to PMI models.

\citet{levy2014neural} tried to relate word2vec to PMI models. They showed that if there were no dimension constraint in word2vec, specifically, the \textquotedblleft skip-gram with negative sampling (SGNS)\textquotedblright\ version of the model, then its solutions would
satisfy (\ref{eqn:whypmi}), provided 
the right hand side
were replaced by $\mbox{PMI}(w, w') -\beta$ for some scalar $\beta$.
However, skip-gram is a discriminative model (due to the use of negative sampling), not generative. Furthermore,  their argument only applies to very high-dimensional word embeddings, and thus does not address low-dimensional embeddings, which have superior quality in applications.

\citet{hashimoto2016word} focuses on issues similar to our paper. They model text generation as a random walk on words, which are assumed to be embedded as vectors in a geometric space. Given that the last word produced was $w$, the probability that the next word is $w'$ is assumed to be given by
$h(|v_w -v_{w'}|^2)$ for a suitable function $h$,  and this model leads to an explanation of 
(\ref{eqn:whypmi}). 
 By  contrast our random walk involves a latent discourse vector, which has a clearer semantic interpretation and has proven useful in subsequent work, e.g. understanding structure of word embeddings for polysemous words~\newcite{arora2016linear}. Also our work clarifies 
 some weighting and bias terms in the training objectives of previous methods (Section~\ref{sec:unification})
and also the phenomenon discussed in the next paragraph. 

Researchers have tried to understand why vectors obtained from the highly nonlinear word2vec models exhibit linear structures~\cite{levy2014linguistic,pennington2014glove}. Specifically, for analogies like ``\emph{man}:\emph{woman}::\emph{king}:{??},"  \emph{queen} happens to be the word whose vector  $v_{queen}$  is the most similar to the vector $v_{king} -v_{man} + v_{woman}$. This suggests that simple semantic relationships, such as
{\em masculine} vs  {\em feminine} tested in the above example, correspond approximately to a single direction in space, a phenomenon we will henceforth refer to as {\sc relations=lines}.

Section~\ref{sec:interpretability} surveys earlier attempts to explain this phenomenon and their shortcoming, namely, that they ignore the large approximation error in relationships like (\ref{eqn:whypmi}). This error appears larger than the  
difference between the best solution and the second best (incorrect) solution in analogy solving, so that
this error could in principle lead to a complete failure in analogy solving. In our explanation, the low dimensionality 
of the word vectors plays a key role. 
This can also be seen as a theoretical 
explanation of the old observation
that dimension reduction improves the quality of word embeddings for various tasks.
The intuitive explanation often given ---that smaller models generalize better---turns out to be fallacious, since the training method for creating embeddings makes no reference to analogy solving.
Thus there is no a priori reason why low-dimensional model parameters (i.e., lower model capacity) should lead to better performance in analogy solving, just as there is no reason they are better at some other unrelated task like predicting the weather.

\subsection{Benefits of generative approaches}
\label{subsec:genbenefit}
In addition to giving some form of \textquotedblleft unification\textquotedblright\ of existing methods, our generative model also brings more intepretability to word embeddings beyond traditional cosine similarity and even analogy solving.  For example, it led to an understanding of how
the different senses of a polysemous word  (e.g., {\em bank}) reside in linear superposition within the word embedding~\cite{arora2016linear}.  Such insight into embeddings may prove useful in the numerous settings in NLP and neuroscience where they are used.

Another new explanatory feature of our model is that low dimensionality of word embeddings plays a key theoretical role ---unlike in previous papers where the model is agnostic about the dimension of the embeddings, and the superiority of low-dimensional embeddings is an empirical finding
(starting with~\newcite{deerwester1990indexing}). Specifically, our theoretical analysis makes the key assumption that the set of all word vectors
(which are latent variables of the generative model) are spatially isotropic, which means that
they have no preferred direction in space.  Having $n$ vectors be isotropic in $d$ dimensions 
requires $d \ll n$.
This isotropy is needed in the calculations (i.e., multidimensional integral) that yield~(\ref{eqn:whypmi}). It also holds empirically for our word vectors, as shown in 
Section~\ref{sec:experiment}.  

The isotropy of low-dimensional word vectors also plays a key role in our explanation of the {\sc relations=lines} phenomenon~(Section~\ref{sec:interpretability}). The  isotropy  has  a \textquotedblleft purification\textquotedblright\ effect that  mitigates the effect 
of the (rather large) approximation error in the PMI models.




\section{Generative model and its properties}\label{sec:model}

\newtheorem{model}{Model}
\newcommand{\tv}{\tilde{v}}
\newcommand{\consWordVector}{\Gamma}

The model treats corpus generation as a dynamic process, where the $t$-th word is produced at step $t$. The process is driven by the random walk of a discourse vector $c_t \in \Re^d$. Its coordinates represent what is being talked about.\footnote{This is a different interpretation of the term \textquotedblleft discourse\textquotedblright\ compared to some other settings in computational linguistics.} Each word has a  (time-invariant) latent vector $v_w \in \Re^d$ that captures its correlations with the discourse vector.
We model this bias with a log-linear word production model: 
\begin{equation} 
\displaystyle 
 \Pr[w~\mbox{emitted at time $t$}~|~c_t] \propto \expinner{c_t,v_w}. \label{eqn:themodel}
\end{equation}

The discourse vector $c_t$ does a slow random walk  (meaning that $c_{t+1}$ is obtained from $c_t$ by adding a small random displacement vector), so that nearby words are generated under similar discourses.
We are interested in the probabilities that word pairs co-occur near each other, so occasional big jumps in the random walk are allowed because they have negligible effect on these  probabilities.
 
A similar log-linear model appears in~\citet{mnih2007three} but without the random walk. 
The linear chain CRF of~\citet{lafferty2001conditional} is more general.
The dynamic topic model of~\citet{blei2006dynamic} utilizes topic dynamics, but with a linear word production model. \citet{belanger2015linear} have proposed a dynamic model for text using Kalman Filters, where the sequence of words is generated from  Gaussian linear dynamical systems, rather than the log-linear model in our case.

The novelty here over such past works is a theoretical analysis  in the method-of-moments tradition~\cite{hsu2012spectral,cohen2012spectral}. 
Assuming a prior on the random walk we analytically integrate out the hidden random variables and compute a simple closed form expression that approximately connects the model parameters to the observable joint probabilities (see Theorem~\ref{thm:main}). This is reminiscent of analysis of similar random walk models in finance~\cite{black1973pricing}.

\paragraph{Model details.}  
Let $n$ denote the number of words and $d$ denote the dimension of the discourse space, where $1\le d\le n$.  
Inspecting (\ref{eqn:themodel}) suggests word vectors need to have  varying lengths, to fit the empirical finding that word probabilities satisfy a power law. Furthermore, we will assume that 
in the bulk, the word vectors are distributed uniformly in space, earlier referred to as isotropy. 
This can be quantified as a prior in the Bayesian tradition. More precisely, the ensemble of word vectors consists of i.i.d draws generated by $v =  s \cdot \hat{v}$, where $\hat{v}$ is from the spherical Gaussian distribution, and $s$ is a scalar random variable.
We assume $s$ is a random scalar with expectation $\tau = \Theta(1)$ and 
 $s$ is always upper bounded by $\kappa $, which is another constant. Here $\tau$ governs the expected magnitude of $\inner{v}{c_t}$, 
 and it is particularly important to choose it to be $\Theta(1)$ so that the distribution $\Pr[w\vert c_t] \propto \exp(\inner{v_w}{c_t})$ is interesting.\footnote{A larger $\tau$ will make $\Pr[w\vert c_t]$ too peaked and a smaller one will make it too uniform.} Moreover, the dynamic range of word probabilities will roughly equal $\exp(\kappa^2)$, so one should think of $\kappa$ as an absolute constant like $5$. These details about $s$ are important for realistic modeling but not too important in our analysis. (Furthermore, readers uncomfortable with this simplistic Bayesian prior should look at Section~\ref{sec:weakassumption} below.)

Finally, we clarify the nature of the random walk. We assume that the stationary distribution of the random walk is uniform over the unit sphere, denoted by $\mathcal C$. 
The transition kernel of the random walk can be in any form so long as 
at each step the movement of the discourse vector is at most $\epsilon_2/\sqrt{d}$
in $\ell_2$ norm.\footnote{ 
	More precisely, the proof extends to any symmetric product stationary distribution $\mathcal C$ with sub-Gaussian coordinate satisfying $\Exp_{c}\left[\|c\|^2\right] = 1$, and the steps are such that for all $c_t$, $\Exp_{p(c_{t+1}|c_t)}[\exp(\kappa\sqrt{d}\|c_{t+1}-c_t\|)]\le 1+\epsilon_2$ for some small $\epsilon_2$.
}
This is still fast enough to let the walk mix quickly in the space.  

The following lemma (whose proof appears in the appendix) is central to the analysis. It says that under the Bayesian prior, 
the partition function $Z_c = \sum_{w} \exp(\inner{v_w}{c})$, which is the implied normalization in equation~\eqref{eqn:themodel}, is close to some constant $Z$ for most of the discourses $c$.    
This can be seen as a plausible theoretical explanation of a phenomenon called {\em self-normalization} in log-linear models:  ignoring the partition function or treating it as a constant (which greatly simplifies training) is known to often give good results.  This has also been studied in~\cite{andreas2014when}.

\begin{lem}[Concentration of partition functions]\label{thm:Z_c}
	If the word vectors satisfy the Bayesian prior described in the model details, then 
	\begin{equation}
		\Pr_{c\sim \mathcal{C}}\left[(1-\epsilon_z)Z\le Z_c\le (1+\epsilon_z) Z\right]\ge 1-\delta \label{eqn:lem_zc},
	\end{equation}
	%
	%
	for $\epsilon_z = \widetilde{O}(1/\sqrt{n})$, and $\delta = \exp(-\Omega(\log^2 n)) $.
\end{lem}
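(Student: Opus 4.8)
The plan is to let $Z := \Exp_v[Z_c]$ and prove that $Z_c$ concentrates around this quantity. By the spherical symmetry of the word‑vector prior, $\Exp_v[Z_c] = n\,\Exp_v[\exp(\langle v, c\rangle)]$ does not depend on which unit vector $c$ is, and since $\Exp_v[\exp(\langle v,c\rangle)] = \Exp_s[e^{s^2/2}]\in[1,e^{\kappa^2/2}]$ we have $Z=\Theta(n)$, so $Z$ is a legitimate constant. The key reduction is to swap the two sources of randomness: it suffices to show that for \emph{every fixed} unit vector $c$,
\[
  \Pr_v\!\left[\,|Z_c - Z| > \epsilon_z Z\,\right] \;\le\; \delta ,
\]
since then Fubini gives $\Exp_v\big[\Pr_{c\sim\mathcal C}[\,|Z_c-Z|>\epsilon_z Z\,]\big]\le\delta$, so for all but an $\exp(-\Omega(\log^2 n))$ fraction of the sampled word‑vector ensembles the claimed bound \eqref{eqn:lem_zc} holds (with $\delta$ possibly replaced by $\sqrt\delta$, which is of the same order in the exponent). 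In particular the geometry of the random walk never enters. One might instead try concentration of measure on the sphere via Lipschitzness of $c\mapsto Z_c$, but the Lipschitz constant is of order $\kappa\sqrt d\,Z\sim\kappa\sqrt d\,n$, which yields only $\exp(-\Omega(\epsilon_z^2))$ — useless at the $\epsilon_z\sim 1/\sqrt n$ scale; the Fubini route sidesteps this.

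Everything now reduces to a scalar concentration inequality for the sum $Z_c=\sum_w X_w$ of the i.i.d.\ terms $X_w:=\exp(\langle v_w,c\rangle)$. Conditioned on $s_w\le\kappa$, the inner product $\langle v_w,c\rangle$ is $\mathcal N(0,s_w^2)$, so $X_w$ has mean $Z/n=\Theta(1)$ and variance $\sigma^2=\Exp_s[e^{2s^2}]-\Exp_s[e^{s^2/2}]^2=O(1)$; hence the standard deviation of $Z_c$ is $\sqrt n\,\sigma$ against a mean of order $n$, which is exactly why $\epsilon_z=\widetilde O(1/\sqrt n)$ is the right target. The one genuinely delicate point is that $X_w$ is log‑normal‑like: it has a heavy upper tail and in fact \emph{no finite moment generating function}, so the usual sub‑exponential Bernstein bound is unavailable. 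I would handle this by truncation: set $\Lambda=\Theta(\kappa\log n)$ and $\tilde X_w:=X_w\,\indicator{\langle v_w,c\rangle\le\Lambda}$. Then $\Pr_v[\exists w:\langle v_w,c\rangle>\Lambda]\le n\exp(-\Omega(\log^2 n))=\exp(-\Omega(\log^2 n))$, the $\tilde X_w$ are independent, bounded by $M=e^\Lambda=\mathrm{poly}(n)$, and have essentially the same mean and variance as the $X_w$, so Bernstein's inequality applies to $\sum_w\tilde X_w$. (Equivalently one can bound $\Exp[(Z_c-Z)^{2k}]$ via the moment estimates $\Exp[X_w^j]\le e^{j^2\kappa^2/2}$ for $j\le 2k$ and use Markov with $k=\Theta(\log n)$; the truncation level and the usable moment order are dual.)

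To finish, Bernstein gives $\Pr_v\big[\,|\sum_w\tilde X_w-\Exp\sum_w\tilde X_w|>t\,\big]\le 2\exp\!\big(-t^2/(2(n\sigma^2+Mt/3))\big)$. Taking $t=\tfrac12\epsilon_z Z=\Theta(\epsilon_z n)$ with $\epsilon_z=C\log n/\sqrt n$, a short check shows the constants can be chosen so that simultaneously $M\epsilon_z=o(1)$ — putting the bound in its sub‑Gaussian regime, where it equals $\exp(-\Omega(\epsilon_z^2 n))=\exp(-\Omega(\log^2 n))$ — and the truncation‑failure term is also $\exp(-\Omega(\log^2 n))$; since $\Exp\sum_w\tilde X_w$ differs from $Z$ by only $o(\epsilon_z Z)$ and $\sum_w\tilde X_w=Z_c$ on the non‑truncation event, this delivers the fixed‑$c$ bound with $\epsilon_z=\widetilde O(1/\sqrt n)$ and $\delta=\exp(-\Omega(\log^2 n))$, completing the reduction. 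The main obstacle throughout is precisely this heavy tail: matching the failure probability to $\exp(-\Omega(\log^2 n))$ rather than a mere polynomial forces the truncation level (equivalently the moment order) to be $\Theta(\log n)$, and one must then verify that this choice is still compatible with a deviation threshold at the $\widetilde O(1/\sqrt n)$ scale.
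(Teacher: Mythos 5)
Your proposal is correct and follows essentially the same route as the paper's proof: fix $c$, use spherical symmetry so that the mean $Z$ is independent of the unit vector $c$, compute mean and variance of $Z_c$ by conditioning on the scaling $s_w$, truncate $\exp(\langle v_w,c\rangle)$ at level $\Theta(\log n)$ to tame the heavy, non-sub-exponential tail, apply Bernstein's inequality, and transfer from fixed $c$ to random $c\sim\mathcal{C}$ by a Fubini/Markov averaging argument. The only cosmetic differences are that the paper centers $Z_c$ at the truncated (conditional) mean rather than the full mean and truncates at exactly $\frac{1}{2}\log n$, relying on the sub-exponential branch of Bernstein rather than forcing $M\epsilon_z=o(1)$; neither changes the substance.
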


The concentration of the partition functions then leads to our main theorem (the proof is in the appendix).
The theorem gives simple closed form approximations for $p(w)$, the  probability of word $w$ in the corpus, and $p(w,w')$, the probability that two words $w,w'$ occur next to each other. The theorem states the result for  the window size $q = 2$,  but the same analysis works for pairs that appear in a small window, say of size $10$, as stated in Corollary~\ref{cor:qsize}. 
Recall that $\mbox{PMI}(w, w') = \log [p(w,w')/(p(w)p(w'))]$. 
  
  \begin{thm}\label{thm:main}
  	\label{l:pairs} 
  	Suppose the word vectors satisfy the inequality~\eqref{eqn:lem_zc}, and window size $q = 2$. Then, 
\begin{align}
	\log p(w,w')   & =  \frac{\|v_w + v_{w'}\|_2^2 }{2d}- 2\log Z  \pm \epsilon,  \label{eqn:twowords_cooc} \\
	\log p(w) & =  \frac{\nbr{v_{w}}_2^2}{2d}  - \log Z \pm \epsilon. \label{eqn:singleword}
\end{align}
for $\epsilon = O(\epsilon_z) +\widetilde{O}(1/d) + O(\epsilon_2)$. 
  Jointly these imply: 
\begin{equation}
	\mbox{PMI}~(w, w') = \frac{\langle v_w, v_{w'}\rangle}{d}  \pm O(\epsilon). \label{eqn:PMI}
\end{equation}
  \end{thm}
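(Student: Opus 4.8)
The model gives $p(w) = \Exp_{c\sim\mathcal C}\left[\expinner{v_w,c}/Z_c\right]$ and, for window size $q=2$, $p(w,w') = \Exp_{c_t,c_{t+1}}\left[\frac{\expinner{v_w,c_t}}{Z_{c_t}}\cdot\frac{\expinner{v_{w'},c_{t+1}}}{Z_{c_{t+1}}}\right]$, where $c_t\sim\mathcal C$ is stationary and $c_{t+1}$ is obtained from $c_t$ by one random-walk step (so $c_{t+1}$ is also marginally distributed as $\mathcal C$). I would establish the two displayed estimates through three reductions: (i) replace every $Z_c$ by the constant $Z$ via Lemma~\ref{thm:Z_c}; (ii) replace $c_{t+1}$ by $c_t$ inside the exponent using the slow-walk assumption; and (iii) evaluate the residual integral $\Exp_{c\sim\mathcal C}[\expinner{u,c}]$ for a fixed vector $u$, an (approximate) Gaussian moment-generating function. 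Equation~\eqref{eqn:PMI} then follows by subtracting.

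For (i), let $F$ be the event $(1-\epsilon_z)Z\le Z_c\le(1+\epsilon_z)Z$, so $\Pr_{c\sim\mathcal C}[F]\ge 1-\delta$ by Lemma~\ref{thm:Z_c}. On $F$ one substitutes $Z_c=(1\pm\epsilon_z)Z$, costing only an additive $O(\epsilon_z)$ in $\log p(w)$. The delicate point — which I expect to be the main obstacle — is bounding the contribution of $F^c$, because a priori the atypical-partition-function event could coincide with the event that $\inner{v_w}{c}$ is atypically large. I would handle this with two facts: first, $\expinner{v_w,c}/Z_c\le 1$ pointwise (since $Z_c$ is a sum of positive terms, one of them $\expinner{v_w,c}$), so the $F^c$-contribution to $p(w)$ is at most $\delta$; second, to drop the indicator $\indicator{F}$ from the main term, $\Exp_c[\expinner{v_w,c}\indicator{F^c}]\le\sqrt{\Exp_c[\exp(2\inner{v_w}{c})]}\cdot\sqrt{\delta}$, where $\Exp_c[\exp(2\inner{v_w}{c})]=e^{O(\kappa^2)}$ is an absolute constant by step (iii) together with $\|v_w\|_2\le\kappa\sqrt{d}$. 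Since $p(w)=\Omega(1/n)$ — because $Z=\Theta(n)$, itself a consequence of step (iii) and the concentration of $Z_c$ — while $\delta=\exp(-\Omega(\log^2 n))$ is superpolynomially small, both errors are absorbed into $\epsilon$. The same reasoning, now with a product of two pointwise-$\le 1$ ratios and an extra factor $2$ on $\delta$, works for $p(w,w')$.

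For (ii), the assumption is that one step moves $c$ by at most $\epsilon_2/\sqrt{d}$ in $\ell_2$ norm (or, in the relaxation of the footnote, $\Exp_{c_{t+1}\mid c_t}[\exp(\kappa\sqrt{d}\,\|c_{t+1}-c_t\|_2)]\le 1+\epsilon_2$). Either way $|\inner{v_{w'}}{c_{t+1}-c_t}|\le\|v_{w'}\|_2\,\|c_{t+1}-c_t\|_2\le\kappa\epsilon_2$ (pointwise, resp.\ in expectation), so $\Exp_{c_{t+1}\mid c_t}[\expinner{v_{w'},c_{t+1}}]=\expinner{v_{w'},c_t}\,(1\pm O(\epsilon_2))$. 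After (i) this turns $p(w,w')$ into $\frac{1\pm O(\epsilon_z)}{Z^2}(1\pm O(\epsilon_2))\,\Exp_{c_t}[\expinner{v_w+v_{w'},c_t}]$, reducing the pair case to a single integral with $u=v_w+v_{w'}$, exactly as $p(w)=\frac{1\pm O(\epsilon_z)}{Z}\,\Exp_c[\expinner{v_w,c}]$ reduces to $u=v_w$.

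For (iii), a direct moment computation shows that for fixed $u$ the variable $\inner{u}{c}$ (for $c$ uniform on the unit sphere, or a symmetric sub-Gaussian product distribution as in the footnote) has mean $0$, variance $\|u\|_2^2/d$, vanishing odd moments, and $\Exp[\inner{u}{c}^{2k}]\le(2k-1)!!\,(\|u\|_2^2/d)^k$ with a matching lower bound up to a $1-\widetilde{O}(1/d)$ factor; summing the moment-generating series therefore gives $\log\Exp_c[\expinner{u,c}]=\|u\|_2^2/(2d)\pm\widetilde{O}(1/d)$ whenever $\|u\|_2^2=O(d)$. This applies to $u=v_w$ and to $u=v_w+v_{w'}$, since under the prior $\|v_w\|_2^2\le\kappa^2 d$. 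Plugging $u=v_w$ yields~\eqref{eqn:singleword} and $u=v_w+v_{w'}$ yields~\eqref{eqn:twowords_cooc}, with cumulative error $\epsilon=O(\epsilon_z)+\widetilde{O}(1/d)+O(\epsilon_2)$. Finally $\mbox{PMI}(w,w')=\log p(w,w')-\log p(w)-\log p(w')$: the terms $-2\log Z+\log Z+\log Z$ cancel and $\|v_w+v_{w'}\|_2^2-\|v_w\|_2^2-\|v_{w'}\|_2^2=2\inner{v_w}{v_{w'}}$, giving $\mbox{PMI}(w,w')=\inner{v_w}{v_{w'}}/d\pm O(\epsilon)$. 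Aside from step (i)'s correlation issue, all of this is routine.
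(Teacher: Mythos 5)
Your proposal is correct and follows the same skeleton as the paper's proof: marginalize over the discourse pair, split on the event $F$ that both partition functions lie in $(1\pm\epsilon_z)Z$, use the slow-walk assumption to collapse $c_{t+1}$ to $c_t$, and evaluate $\Exp_c[\expinner{u,c}]$ as an approximate Gaussian moment-generating function. Where you genuinely diverge is in the technical heart of step (i), namely controlling the bad event $\overline{F}$ and removing the indicator from the main term. The paper bounds $\Exp\bigl[\tfrac{1}{Z_cZ_{c'}}\expinner{v_w,c}\expinner{v_{w'},c'}\mathbf{1}_{\overline{F}}\bigr]$ by Cauchy--Schwarz followed by a continuous Abel's inequality (Lemma~\ref{lem:abel}) and bespoke tail bounds for $\langle v,c\rangle$ on the sphere (Lemmas~\ref{l:sphericalangle} and~\ref{l:abelspherical}), arriving at $\exp(-\Omega(\log^{1.8}n))$; this machinery occupies most of the appendix proof. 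You instead observe that $\expinner{v_w,c}/Z_c\le 1$ pointwise (since $\expinner{v_w,c}$ is one summand of $Z_c$), which kills the term carrying the partition functions at cost $\Pr[\overline{F}]$, and you handle the indicator removal in the lower-bound direction by Cauchy--Schwarz against the uniformly bounded exponential moment $\Exp_c[\exp(2\langle v_w,c\rangle)]=e^{O(\kappa^2)}$, which your step (iii) supplies for free because $\|2v_w\|_2^2=O(d)$. This is a genuinely simpler route to an $\exp(-\Omega(\log^2 n))$-type bound and correctly isolates the one real danger, the possible correlation between $\overline{F}$ and the upper tail of $\langle v_w,c\rangle$. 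Your step (iii) likewise replaces the paper's Gaussian-representation argument (Lemma~\ref{lem:helper1}, writing $c=g/\|g\|$ and conditioning on the radial part) with a direct even-moment expansion; both yield $\exp(\|u\|_2^2/(2d))(1\pm\widetilde{O}(1/d))$ for $\|u\|_2^2=O(d)$. Two small points you gloss over, neither a gap: for the pair case the indicator removal needs one further Cauchy--Schwarz because $c$ and $c'$ are dependent (reducing to fourth exponential moments, still $e^{O(\kappa^2)}$); and under the footnote's relaxed step condition the lower bound $A(c)\ge(1-\epsilon_2)\expinner{v_{w'},c}$ requires the convexity observation $\Exp[e^X]+\Exp[e^{-X}]\ge 2$ rather than a pointwise estimate.
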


\vspace{2mm}
\noindent
{\bf Remarks 1.} Since the word vectors have $\ell_2$ norm of the order of $\sqrt{d}$,  for two typical word vectors $v_w,v_{w'}$, $\|v_{w}+ v_{w'}\|_2^2$ is of the order of $\Theta(d)$. Therefore the noise level $\epsilon$ is very small compared to the leading term $\frac{1}{2d}\|v_w + v_{w'}\|_2^2$. For PMI however, the noise level $O(\epsilon)$ could be comparable to the leading term, and empirically we also find higher error here.

\vspace{2mm}
\noindent
{\bf Remarks 2.} 
Variants of the expression for joint probability in  (\ref{eqn:twowords_cooc}) had been hypothesized based upon empirical evidence in~\newcite{mikolov2013distributed} and also~~\newcite{globerson2007euclidean}, and~\newcite{DBLP:conf/nips/MaronLB10} . 

\vspace{2mm}
\noindent
{\bf Remarks 3.}
Theorem~\ref{thm:main} directly leads to the extension to a general window size $q$ as follows: 
\begin{cor}\label{cor:qsize}
	Let $p_q(w,w')$ be the co-occurrence probability in windows of size $q$, and $\mbox{PMI}_q(w,w')$ be the corresponding PMI value. Then 
	\begin{align}
	\log p_q(w,w')   & =  \frac{\|v_w + v_{w'}\|_2^2 }{2d}- 2\log Z  + \gamma \pm \epsilon,  \nonumber\\
\mbox{PMI}_{q}~(w, w') &= \frac{\langle v_w, v_{w'}\rangle}{d}  + \gamma \pm O(\epsilon). \nonumber
	\end{align}
	where $\gamma = \log \left(\frac{q(q-1)}{2}\right)$. 
\end{cor}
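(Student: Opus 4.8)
\noindent\textbf{Proof plan (Corollary~\ref{cor:qsize}).}
The plan is to reduce the general-$q$ statement to the $q=2$ case already proved in Theorem~\ref{thm:main}. First I would decompose $p_q(w,w')$ over the pairs of positions inside a length-$q$ window: such a window contains $\sum_{\ell=1}^{q-1}(q-\ell)=\frac{q(q-1)}{2}$ pairs $(t,t+\ell)$ at distances $1\le\ell\le q-1$, and up to lower-order events (a word occurring twice in the window, or $w=w'$) one has $p_q(w,w')=\sum_{\text{pairs }(t,t+\ell)}\Pr[\,w\text{ emitted at }t,\ w'\text{ emitted at }t+\ell\,]$; other natural conventions for $p_q$ only shift the constant inside $\gamma$ in the same way. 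Note that for $q=2$ there is a single pair and $\gamma=\log 1=0$, consistent with Theorem~\ref{thm:main}.

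The next step is to show that each summand equals $\frac{1}{Z^2}\exp\!\big(\|v_w+v_{w'}\|_2^2/(2d)\big)(1\pm O(\epsilon))$, i.e. the very same estimate as for adjacent words. Fixing a pair at distance $\ell$ and conditioning on the discourse vectors $c_t,\dots,c_{t+\ell}$, each random-walk step moves $c$ by at most $\epsilon_2/\sqrt d$ in $\ell_2$ norm, so since $q=O(1)$ the endpoints satisfy $\|c_{t+\ell}-c_t\|\le (q-1)\epsilon_2/\sqrt d=O(\epsilon_2/\sqrt d)$; under the product/sub-Gaussian formulation of the footnote this reads $\Exp[\exp(\kappa\sqrt d\,\|c_{t+\ell}-c_t\|)]\le(1+\epsilon_2)^{q-1}=1+O(\epsilon_2)$. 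Hence the proof of Theorem~\ref{thm:main} goes through verbatim with one random-walk step replaced by up to $q-1$ of them: invoking Lemma~\ref{thm:Z_c} to replace each $Z_c$ by $Z$ at a cost $1\pm\epsilon_z$, writing $p(w\mid c)=\tfrac1Z\expinner{v_w,c}(1\pm\epsilon_z)$, and applying the spherical moment estimate $\Exp_{c\sim\mathcal C}[\expinner{u,c}]=\exp(\|u\|_2^2/(2d))(1\pm\widetilde O(1/d))$ to $u=v_w+v_{w'}$, gives the per-pair estimate with $\epsilon=O(\epsilon_z)+\widetilde O(1/d)+O(\epsilon_2)$.

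Collecting the $\frac{q(q-1)}{2}$ identical leading terms and taking logarithms yields $\log p_q(w,w')=\frac{\|v_w+v_{w'}\|_2^2}{2d}-2\log Z+\log\frac{q(q-1)}{2}\pm\epsilon$, i.e. the stated formula with $\gamma=\log\big(\tfrac{q(q-1)}{2}\big)$. Since a single-word marginal $p(w)$ does not depend on the window size, \eqref{eqn:singleword} is unchanged; subtracting $\log p(w)+\log p(w')$ and using $\|v_w+v_{w'}\|_2^2=\|v_w\|_2^2+\|v_{w'}\|_2^2+2\langle v_w,v_{w'}\rangle$ then gives $\mbox{PMI}_q(w,w')=\langle v_w,v_{w'}\rangle/d+\gamma\pm O(\epsilon)$.

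The one point requiring care — and the main (though mild) obstacle — is that the error must stay $O(\epsilon)$ and not degrade with $q$: this is exactly where $q=O(1)$ is used, so that the accumulated drift $(q-1)\epsilon_2/\sqrt d$ and the multiplicative slack $(1+\epsilon_2)^{q-1}$ remain $O(\epsilon_2)$, and so that the omitted events (a repeated word in the window, or the diagonal $w=w'$) are genuinely lower order. Everything else is combinatorial bookkeeping together with the estimates already established for Theorem~\ref{thm:main}.
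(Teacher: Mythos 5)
Your proposal is correct and follows essentially the same route as the paper: the paper's own justification is precisely the observation that the pair $(w,w')$ can occupy any of $\binom{q}{2}$ position pairs in the window, that each such pair contributes the same leading term because the discourse vector drifts slowly, and that the error degrades with $q$ but remains $O(\epsilon)$ for constant window size. Your write-up simply makes the accumulated-drift bookkeeping $(q-1)\epsilon_2/\sqrt{d}$ explicit, which the paper leaves implicit.
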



It is quite easy to see that Theorem~\ref{thm:main} implies the Corollary~\ref{cor:qsize}, as when the window size is $q$ the pair $w,w'$ could appear in any of ${q \choose 2}$ positions within the window, and the joint probability of $w,w'$ is roughly the same for any positions because the discourse vector changes slowly. (Of course, the error term gets worse as we consider larger window sizes, although for any constant size, the statement of the theorem is correct.) This is also consistent with the shift $\beta$ for fitting PMI in~\cite{levy2014neural}, which showed that  
without dimension constraints, the solution to skip-gram with negative sampling satisfies $\mbox{PMI}~(w, w') - \beta =  \langle v_w, v_{w'}\rangle$ for a constant $\beta$ that is related to the negative sampling in the optimization. Our result justifies via a generative model why this should be satisfied even for low dimensional word vectors.



\subsection{Proof sketches}
Here we provide the proof sketches, while the complete proof can be found in the appendix. 

\paragraph{Proof sketch of Theorem~\ref{thm:main}}
Let $w$ and $w'$ be two arbitrary words. Let $c$ and $c'$ denote two consecutive context vectors, where $c\sim \mathcal{C}$ and $c'\vert c$ is defined by the Markov kernel $p(c'\mid c)$.  

We start by using the law of total expectation, integrating out the hidden variables $c$ and $c'$:
\begin{align}
 p(w,w')  & =  \Exp_{c,c'}\left[\Pr[w,w'\vert c,c']\right] \nonumber\\
 & = \Exp_{c,c'}[p(w\vert c)p(w'\vert c')] \nonumber\\
 & = \Exp_{c,c'}\left[  \frac{\expinner{v_w,c}}{Z_c}\frac{\expinner{v_{w'},c'}}{Z_{c'}} \right] \label{eqn:int}
\end{align}

An expectation like~\eqref{eqn:int} would normally be difficult to analyze because of the partition functions. However, we can assume the inequality~\eqref{eqn:lem_zc}, that is, the partition function typically does not vary much for most of context vectors $c$. Let $F$ be the event that both $c$ and $c'$ are within $(1\pm \epsilon_z)Z$. Then by~\eqref{eqn:lem_zc} and the union bound, event $F$ happens with probability at least $1-2\exp(-\Omega(\log^2 n))$. We will split the right-hand side (RHS) of~\eqref{eqn:int} into the parts according to whether $F$ happens or not.  
%
\begin{align}
\textup{RHS of}~\eqref{eqn:int} &= \underbrace{\Exp_{c,c'}\left[  \frac{\expinner{v_w,c}}{Z_c}\frac{\expinner{v_{w'},c'}}{Z_{c'}}\mathbf{1}_{F} \right]}_{T_1}\nonumber\\
&+ \underbrace{\Exp_{c,c'}\left[  \frac{\expinner{v_w,c}}{Z_c}\frac{\expinner{v_{w'},c'}}{Z_{c'}} \mathbf{1}_{\bar{F}}\right]}_{T_2}\label{eqn:eqn1}
\end{align}
where $\bar{F}$ denotes the complement of event $F$ and $\mathbf{1}_{F}$ and $\mathbf{1}_{\bar{F}}$ denote indicator functions for $F$ and $\bar{F}$, respectively. When $F$ happens, we can replace $Z_c$ by $Z$ with a $1\pm \epsilon_z$ factor loss: The first term of the RHS of~\eqref{eqn:eqn1} equals to
\begin{align}
 T_1 = \frac{1\pm O(\epsilon_z)}{Z^2}\Exp_{c,c'}\left[ \expinner{v_w,c}\expinner{v_{w'},c'}\mathbf{1}_{F} \right] \label{eqn:eqn2}
\end{align}

On the other hand, we can use $\Exp[\mathbf{1}_{\bar{F}}] = \Pr[\bar{F}] \le \exp(-\Omega(\log^2 n))$ to show that the second term of RHS of~\eqref{eqn:eqn1} is negligible,  
\begin{equation}
|T_2| = \exp(-\Omega(\log^{1.8} n))\,. \label{eqn:eqn5}
\end{equation}
This claim must be handled somewhat carefully since the RHS does not depend on $d$ at all. 
Briefly, the reason this holds is as follows: in the regime when $d$ is small ($\sqrt{d} = o(\log^2 n)$), any word vector $v_w$ and discourse $c$ satisfies that $\exp(\langle v_w, c\rangle) \leq \exp(\|v_w\|) = \exp(O(\sqrt{d}))$, and since $\Exp[\mathbf{1}_{\bar{F}}] = \exp(-\Omega(\log^2 n)) $, the claim follows directly; In the regime when $d$ is large ($\sqrt{d} = \Omega(\log^2 n)$), we can use concentration inequalities to show that except with a small probability $\exp(-\Omega(d))= \exp(-\Omega(\log^2 n))$, a uniform sample from the sphere behaves equivalently to sampling all of the coordinates from a standard Gaussian distribution with mean 0 and variance $\frac{1}{d}$, in which case the claim is not too difficult to show using Gaussian tail bounds. 
   
Therefore it suffices to only consider~\eqref{eqn:eqn2}. Our model assumptions state that $c$ and $c'$ cannot be too different. We leverage that by rewriting~\eqref{eqn:eqn2} a little, and get that
it equals
\begin{align}
T_1&= \frac{1 \pm O(\epsilon_z)}{Z^2} \Exp_c\left[\expinner{v_w,c}\Exp_{c'\mid c}\left[\expinner{v_{w'},c'} \right]\right] \nonumber \\
& = \frac{1 \pm O(\epsilon_z)}{Z^2} \Exp_c\left[\expinner{v_w,c}A(c)\right] \label{eqn:eqn3}
\end{align}
where $\displaystyle A(c) :=\Exp_{c'\mid c}\left[\expinner{v_{w'},c'} \right]$. We claim that $\displaystyle A(c) = (1 \pm O(\epsilon_2)) \expinner{v_{w'},c}$.
Doing some algebraic manipulations,  
\begin{align*}
A(c) 
&=\expinner{v_{w'},c}\Exp_{c'\mid c}\left[\expinner{v_{w'},c'-c} \right].
\end{align*}
Furthermore,  by our model assumptions, $\|c-c'\| \le \epsilon_2/\sqrt{d}$.  So 
$$ \langle v_{w},c- c'\rangle \le \|v_w\| \|c-c'\| = O(\epsilon_2)$$ 
and thus $\displaystyle A(c) = (1 \pm O(\epsilon_2)) \expinner{v_{w'},c}$. 
Plugging the simplification of $A(c)$ to~\eqref{eqn:eqn3}, 
\begin{eqnarray}
T_1 = \frac{1 \pm O(\epsilon_z)}{Z^2}\Exp[\expinner{v_w+v_{w'},c}]. \label{eqn:eqn4}
\end{eqnarray}
Since $c$ has uniform distribution over the sphere, the random variable $\inner{v_w+v_{w'}}{c}$ has distribution pretty similar to Gaussian distribution $\mathcal{N}(0,\|v_w+v_{w'}\|^2/d)$, especially when $d$ is relatively large. Observe that $\Exp[\exp(X)]$ has a closed form for Gaussian random variable $X\sim \mathcal{N}(0,\sigma^2)$, 
\begin{align}
\Exp[\exp(X)] &= \int_x \frac{1}{\sigma\sqrt{2\pi}} \exp(-\frac{x^2}{2\sigma^2}) \exp(x)dx \nonumber\\
&= \exp(\sigma^2/2) \,.\label{eqn:gaussian}
\end{align}

Bounding the difference between $\inner{v_w+v_{w'}}{c}$ from Gaussian random variable, 
we can show that for $\epsilon = \widetilde{O}(1/d)$, 
\begin{align}
\Exp[\expinner{v_w+v_{w'},c}] = (1\pm \epsilon ) \exp\left(\frac{\|v_w+v_{w'}\|^2}{2d}\right)\,.
\label{eqn:eqn6}
\end{align}


Therefore, the series of simplification/approximation above (concretely, combining equations~\eqref{eqn:int},~\eqref{eqn:eqn1},~\eqref{eqn:eqn5},~\eqref{eqn:eqn4}, and~\eqref{eqn:eqn6})  
lead to the desired bound on $\log p(w,w')$ for the case when the window size $q = 2$.
The bound on $\log p(w)$ can be shown similarly.


\paragraph{Proof sketch of Lemma~\ref{thm:Z_c}}
Note that for fixed $c$, when word vectors have Gaussian priors assumed as in our model, $Z_c = \sum_{w} \expinner{v_w,c}$ is a sum of independent random variables. 

We first claim that using proper concentration of measure tools, it can be shown that the variance of $Z_c$ are relatively small compared to its mean $\Exp_{v_w}[Z_c]$, and thus $Z_c$ concentrates around its mean. Note this is quite non-trivial: the random variable $\expinner{v_w,c}$ is neither bounded nor subgaussian/sub-exponential, since the tail is approximately inverse poly-logarithmic instead of inverse exponential.  
In fact, the same concentration phenomenon does not happen for $w$. The occurrence probability of word $w$ is not necessarily concentrated because the $\ell_2$ norm of $v_w$ can vary a lot in our model, which allows the frequency of the words to have a large dynamic range. 

 
So now it suffices to show that $\E_{v_w}[Z_c]$ for different $c$ are close to each other.
Using the fact that the word vector directions have a Gaussian distribution, 
$\Exp_{v_w}[Z_c]$ turns out to only depend on the norm of $c$ (which is equal to 1). 
More precisely, 
\begin{equation}
\Exp_{v_w}[Z_c]  = f(\|c\|_2^2) = f(1)
\end{equation}
where $f$ is defined as $f(\alpha) = n \Exp_{s}[\exp({s}^2\alpha/2)]$ and $s$ has the same distribution as the norms of the word vectors. We sketch the proof of this. 
In our model, $v_w = s_w \cdot \hat{v}_w$, where $\hat{v}_w$ is a Gaussian vector with identity covariance $I$. 
Then
\begin{align*}
  \Exp_{v_w}[Z_c] &= n\Exp_{v_w}[\exp(\langle v_w, c\rangle)] \\
	&=n \Exp_{s_w}\left[\Exp_{v_w|s_w}\left[\exp(\langle v_w, c\rangle)\mid s_w\right]\right]
\end{align*}
where the second line is just an application of the law of total expectation, if we pick the norm of the (random) vector $v_w$ first, followed by its direction.  
Conditioned on $s_w$,  $\langle v_w,c \rangle$ is a Gaussian random variable with variance $\|c\|_2^2 s_w^2$, and therefore using similar calculation as in~\eqref{eqn:gaussian}, we have
$$\Exp_{v_w|s_w}\left[\exp(\langle v_w, c\rangle)\mid s_w\right] = \exp({s}^2\|c\|_2^2/2)\,.$$ Hence, 
$\Exp_{v_w}[Z_c] = n \Exp_{s}[\exp({s}^2\|c\|_2^2/2)]$ as needed. 

\paragraph{Proof of Theorem~\ref{thm:noise-reduction}}
The proof uses the standard analysis of linear regression. Let $V = P \Sigma Q^T$
be the SVD of $V$ and let $\sigma_1, \ldots, \sigma_d$
be the left singular values of $V$ (the diagonal entries of $\Sigma$). For notational ease we omit the subscripts in $\bar{\zeta}$ and $\zeta'$ since they are not relevant for this proof. Since $V^{\dagger} = Q\Sigma^{-1}P^T$ and thus $\bar{\zeta} = V^{\dagger}\zeta'= Q\Sigma^{-1}P^T \zeta'$, we have 
\begin{align}
\|\bar{\zeta}\|_2 \le \sigma_d^{-1}\|P^T\zeta'\|_2. \label{eqn:bzeta}
\end{align}

We claim
\begin{align}
\sigma_d^{-1} \le \sqrt{\frac{1}{c_1 n}}. \label{eqn:term1}
\end{align}
Indeed, $\sum_{i=1}^d \sigma^2_i = O(n d)$, since the average squared norm of a word vector is $d$. The claim then follows from the first assumption. 
Furthermore, by the second assumption,
$\|P^T\zeta'\|_{\infty}\le \frac{c_2}{\sqrt{n}}\|\zeta'\|_2$, so
\begin{align}
\|P^T\zeta'\|_2^2 \le \frac{c_2^2 d}{n} \|\zeta'\|_2^2. \label{eqn:term2}
\end{align}

Plugging \eqref{eqn:term1} and \eqref{eqn:term2} into \eqref{eqn:bzeta}, we get 
$$ \|\bar{\zeta}\|_2 \le \sqrt{\frac{1}{c_1 n}} \sqrt{\frac{c^2_2 d}{n}\|\zeta'\|_2^2} = \frac{c_2 \sqrt{d}}{\sqrt{c_1}n}\|\zeta'\|_2$$ 
as desired. 	The last statement follows because the norm of the signal, which is $d \log(\nu_R)$ originally and is $V^{\dagger} d \log(\nu_R) = v_a-v_b$ after dimension reduction, also gets reduced by a factor of $\sqrt{n}$.

\subsection{Weakening the model assumptions}  \label{sec:weakassumption}
For readers uncomfortable with Bayesian priors,  we can 
replace our assumptions with  concrete properties of  word vectors that are empirically 
verifiable (Section~\ref{subs: mverif}) for our final word vectors, and in fact also for word vectors computed using other recent methods.

The word meanings are assumed to be represented by some \textquotedblleft ground truth\textquotedblright\ vectors, which the experimenter is trying to recover. These ground truth vectors are assumed to be spatially isotropic in the bulk, in the following two specific ways:
(i) For almost all unit vectors $c$ the sum $\sum_{w} \expinner{v_w,c}$ is close to a constant $Z$; (ii) Singular values of the matrix of word vectors satisfy properties similar to those of random matrices, as formalized in the paragraph before Theorem~\ref{thm:noise-reduction}.
Our Bayesian prior on the word vectors happens to imply that these two conditions hold with high probability. But the conditions may hold even if the prior doesn't hold. Furthermore, they 
are compatible with all sorts of local structure among word vectors such as existence of clusterings, which would be absent in  truly random vectors drawn from our prior.




\section{Training objective and relationship to other models}
\label{sec:unification}

To get a training objective out of  Theorem~\ref{thm:main}, we reason as follows.
Let $X_{w,w'}$ be the number of times words $w$ and $w'$ co-occur within the same window in the corpus. The probability $p(w, w')$ of such a co-occurrence at any particular time is given by (\ref{eqn:twowords_cooc}). Successive samples from a random walk are not independent. But if the random walk mixes fairly quickly (the mixing time is related to the {\em logarithm} of the vocabulary size), then the distribution of $X_{w,w'}$'s is very close to a multinomial distribution $\mul(\tilde{L}, \{p(w,w')\})$, where $\tilde{L} = \sum_{w,w'} X_{w,w'}$ is the total number of word pairs.

Assuming this approximation, we show below that the maximum likelihood values for the word vectors correspond to the following optimization, 
\begin{align} 
\min_{\cbr{v_w}, C} & \sum_{w,w'} X_{w,w'} \rbr{ \log(X_{w,w'}) - \nbr{v_{w}\!+\! v_{w'}}_2^2 - C}^2 \nonumber 
\end{align}

As is usual, empirical performance is improved by weighting down very frequent word pairs,
possibly because very frequent words such as ``\emph{the}'' do not fit our model.
This is done by replacing the weighting $X_{w,w'}$ by its truncation $\min\{X_{w,w'}, X_\text{max}\}$ where $X_\text{max}$ is a constant such as $100$. 
We call this objective with the truncated weights \textbf{SN} ({\bf S}quared {\bf N}orm).

We now give its derivation.
Maximizing the likelihood of $\{X_{w,w'}\}$ is equivalent to maximizing
\begin{align*}
\ell & = \log\left(\prod_{(w, w')} p(w, w')^{X_{w,w'}} \right).
\end{align*}
Denote the logarithm of the ratio between the expected count and the empirical count as
\begin{align}
\Delta_{w,w'} = \log\left(\frac{\tilde{L} p(w, w')  }{ X_{w,w'} } \right).   \label{eqn:deltadefn}
\end{align}
Then with some calculation, we obtain the following where  $c$ is independent of the empirical observations $X_{w,w'}$'s.
\begin{align}
 \ell & =  c  +   \sum_{(w, w')}   X_{w,w'}  \Delta_{w,w'} \label{eqn:likelihood}
\end{align}

On the other hand, using $e^x \approx 1+x+x^2/2$ when $x$ is small,\footnote{This Taylor series approximation has an error of the order of $x^3$, but  ignoring it can be theoretically justified as follows. For a large $X_{w,w'}$, its value approaches its expectation and thus the corresponding $\Delta_{w,w'}$  is close to 0 and thus ignoring $\Delta_{w,w'}^3$ is well justified. The terms where $\Delta_{w,w'}$ is significant correspond to $X_{w,w'}$'s that are small.  But empirically, $X_{w,w'}$'s obey a power law distribution (see, e.g.~\newcite{pennington2014glove}) using which it can be shown that these terms  contribute a small fraction of the final objective (\ref{eqn:finalexpresion}). So we can safely ignore the errors. Full details appear in the ArXiv version of this paper~\cite{AroLiLiaMaetal15}.} we have
\begin{align*}
\tilde{L} 
& = \sum_{(w,w')} \tilde{L} p_{w,w'}  = \sum_{(w,w')} X_{w,w'} e^{\Delta_{w,w'} } \\
& \approx \sum_{(w,w')}  X_{w,w'} \left(1 + \Delta_{w,w'} + \frac{\Delta_{w,w'}^2}{2}\right).
\end{align*}
Note that $\tilde{L} = \sum_{(w,w')} X_{w,w'}$, so 
\begin{align*}
 \sum_{(w,w')}  X_{w,w'} \Delta_{w,w'}  \approx  - \frac{1}{2}\sum_{(w,w')}  X_{w,w'} \Delta_{w,w'}^2.
\end{align*}
Plugging this into (\ref{eqn:likelihood}) leads to
\begin{align}
2(c - \ell)  \approx \sum_{(w,w')}  X_{w,w'} \Delta_{w,w'}^2. \label{eqn:finalexpresion}
\end{align}
So maximizing the likelihood is approximately equivalent to minimizing the right hand side, which (by examining (\ref{eqn:deltadefn})) leads to our objective.

\paragraph{Objective for training with PMI.}
A similar objective {\bf PMI} can be obtained from (\ref{eqn:PMI}), 
by computing an approximate MLE, using the fact that the error between the empirical and true value of $\mbox{PMI}(w, w')$ is driven by the smaller term $p(w,w')$, and not the larger terms $p(w), p(w')$.

\begin{align*} 
\min_{\cbr{v_w}, C} & \sum_{w,w'} X_{w,w'} \rbr{\mbox{PMI}(w, w') - \inner{v_{w}}{v_{w'}}}^2 
\end{align*}
This is of course very analogous to classical VSM methods, with a novel reweighting method. 

Fitting to either of the objectives involves solving a version of {\em Weighted SVD} which is NP-hard, but empirically seems solvable in our setting via AdaGrad~\cite{duchi2011adaptive}.

\paragraph{Connection to GloVe.}
Compare  \textbf{SN} with the objective used by GloVe~\cite{pennington2014glove}:
\begin{equation*} 
\sum_{w,w'} f(X_{w,w'})( \log(X_{w,w'}) - \inner{v_w}{v_{w'}} - s_w - s_{w'} - C)^2
\end{equation*} 
with $f(X_{w,w'}) = \min\{ X_{w,w'}^{3/4}, 100\}.$
Their weighting methods and the need for {\em bias} terms $s_w, s_{w'},C$ were derived by trial and error; here
they are all predicted and given  meanings due to Theorem~\ref{thm:main}, specifically 
$s_w = \nbr{v_w}^2$. 


\paragraph{Connection to word2vec(CBOW).} 
The CBOW model in word2vec posits that the probability of a word $w_{k+1}$ as a function of the previous $k$ words $w_1, w_2, \ldots, w_k$:
$$
   p\rbr{ w_{k+1} \big|\cbr{w_i}_{i=1}^{k}} \propto \expinner{v_{w_{k+1}},  \frac{1}{k}\sum_{i=1}^{k} v_{w_i} }.
$$

This  expression seems mysterious since it depends upon the {\em average} word vector for the previous $k$ words. We show it  can be theoretically justified. Assume a simplified version of our model, where a small window of $k$ words is generated as follows: sample $c \sim \mathcal{C}$, where $\mathcal{C}$ is a uniformly random unit vector, then sample $\left(w_1, w_2, \dots, w_k\right) \sim \expinner{\sum^k_{i=1} v_{w_i}, c}/Z_c$. Furthermore, assume $Z_c = Z$ for any $c$.  
\begin{lem} In the simplified version of our model, the Maximum-a-Posteriori (MAP) estimate of $c$ given $\left(w_1, w_2, \dots, w_k\right)$ is $\frac{\sum_{i=1}^k v_{w_i}}{\|\sum_{i=1}^k v_{w_i} \|_2}$. 
\end{lem}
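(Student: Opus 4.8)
The plan is to compute the posterior density of $c$ by Bayes' rule and observe that, after dropping all factors that do not depend on $c$, it reduces to $\exp(\langle \sum_{i=1}^k v_{w_i}, c\rangle)$ restricted to the unit sphere; maximizing this is then a one-line Cauchy--Schwarz argument. Concretely, first I would write
\[
  p\!\left(c \mid w_1,\dots,w_k\right) \;\propto\; p(c)\cdot p\!\left(w_1,\dots,w_k \mid c\right)
  \;=\; p(c)\cdot \frac{\expinner{\sum_{i=1}^k v_{w_i},\, c}}{Z_c},
\]
where the proportionality is up to the normalizing factor $p(w_1,\dots,w_k)$, which is a constant independent of $c$ and hence irrelevant to the MAP estimate.

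Next I would invoke the two simplifying assumptions of the stated model: the prior $\mathcal C$ on $c$ is uniform over the unit sphere, so its density with respect to the surface measure is a constant; and $Z_c = Z$ for every $c$, so it too is a constant. Absorbing both constants, the posterior density over the unit sphere is proportional to $\expinner{\sum_{i=1}^k v_{w_i},\, c}$. Therefore the MAP estimate is
\[
  \hat c \;=\; \arg\max_{\|c\|_2 = 1}\ \expinner{\sum_{i=1}^k v_{w_i},\, c}
  \;=\; \arg\max_{\|c\|_2 = 1}\ \Big\langle \sum_{i=1}^k v_{w_i},\, c\Big\rangle,
\]
the last equality because $t \mapsto e^t$ is strictly increasing.

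Finally I would finish with Cauchy--Schwarz: for any $c$ with $\|c\|_2 = 1$ we have $\langle \sum_i v_{w_i}, c\rangle \le \|\sum_i v_{w_i}\|_2$, with equality precisely when $c$ points in the direction of $\sum_i v_{w_i}$, i.e. $c = \sum_{i=1}^k v_{w_i} / \|\sum_{i=1}^k v_{w_i}\|_2$ (assuming this sum is nonzero, which holds generically). This is exactly the claimed estimate. There is no real obstacle here; the only point that deserves a sentence of care is that the prior should be treated as a density with respect to the uniform (surface) measure on the sphere, so that "uniform prior'' legitimately contributes only a multiplicative constant to the posterior, and likewise that the evidence $p(w_1,\dots,w_k)$ and the assumed-constant partition function $Z$ drop out of the $\arg\max$.
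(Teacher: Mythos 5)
Your proposal is correct and follows essentially the same route as the paper's proof: apply Bayes' rule, note that the uniform prior and the constant partition function $Z$ contribute only constants, and maximize $\langle \sum_i v_{w_i}, c\rangle$ over the unit sphere. The paper simply asserts the maximizer is "clearly" the normalized sum, whereas you spell out the Cauchy--Schwarz step; the substance is identical.
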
 
\begin{proof} 
The $c$ maximizing $ p\rbr{ c | w_1, w_2, \dots, w_k } $ is the maximizer of $p(c) p\rbr{ w_1, w_2, \dots, w_k |c }$.
Since $p(c) = p(c')$ for any $c,c'$, and we have 
$
  p\rbr{ w_1, w_2, \dots, w_k | c } = \expinner{\sum_i v_{w_i}, c }/Z,
$ 
the maximizer is clearly $c = \frac{\sum_{i=1}^k v_{w_i}}{\|\sum_{i=1}^k v_{w_i} \|_2}$. 
\end{proof} 
Thus using the MAP estimate of $c_t$ gives  essentially the same expression as CBOW
apart from the rescaling, which is often omitted due to computational efficiency in empirical works.

\section{Explaining {\sc relations=lines}} 
\label{sec:interpretability}

As mentioned, word analogies like \textquotedblleft $a$:$b$::$c$:??\textquotedblright\ can be  
solved via a linear algebraic expression:
\begin{equation} 
\argmin_{d} \nbr{v_a - v_b - v_c + v_d}_2^2, \label{eqn:query}
\end{equation}
where vectors have been normalized such that $\nbr{v_d}_2=1$. 
This suggests that the semantic relationships being tested in the analogy are characterized by a straight line,\footnote{Note that this interpretation has been disputed; e.g., it is argued in~\newcite{levy2014linguistic} that (\ref{eqn:query}) can be understood using only the classical
connection between inner product and word similarity, using which the objective (\ref{eqn:query}) is slightly improved to a different objective called 3COSMUL. However, this \textquotedblleft explanation\textquotedblright\ is still dogged by the issue of large termwise error pinpointed here, since inner product is only a rough approximation to word similarity. Furthermore, the experiments in Section~\ref{sec:experiment} clearly support the {\sc relations=lines} interpretation.}  referred to earlier as {\sc relations=lines}.

Using our model we will show the following for low-dimensional embeddings: for each such relation $R$ there is a direction $\mu_R$ in space such that for any word pair $a, b$ satisfying the relation,  $v_a -v_b$  is like $\mu_R$ plus some noise vector. This happens for relations satisfying a certain condition described below.  Empirical results supporting this theory appear in Section~\ref{sec:experiment}, where this linear structure is further leveraged to slightly improve analogy solving. 

A side product of our argument will be a mathematical explanation of the empirically well-established superiority of 
low-dimensional word embeddings over high-dimensional ones in this setting~\cite{levy2014linguistic}. 
As mentioned earlier, the usual explanation that smaller models generalize better is fallacious.

We first sketch what was missing in prior attempts to prove versions of {\sc relations=lines} from first principles.  
The basic issue is approximation error: the difference between the best 
solution and the 2nd best solution to (\ref{eqn:query}) is typically small, 
whereas the approximation error in the objective in the low-dimensional solutions
is larger. For instance, if one uses our {\bf PMI} objective, then the weighted average of the termwise error in~(\ref{eqn:PMI}) is $17\%$, and the expression in (\ref{eqn:query}) above contains six inner products. Thus in principle the approximation error could lead to a failure of the method and the emergence of linear relationship, but it does not.

\paragraph{Prior explanations.}
\citet{pennington2014glove} try to propose a model 
where such linear relationships should occur {\em by design}. 
They posit that {\em queen} is a solution to the analogy
\textquotedblleft {\em man}:{\em woman}::{\em king}:??\textquotedblright\ because 
\begin{equation} \label{eqn:analogy}
\frac{p(\chi\mid king)}{p(\chi \mid queen)} \approx \frac{p(\chi\mid man)}{p(\chi \mid woman)},
\end{equation}
where $p(\chi\mid king)$ denotes the conditional probability of seeing word $\chi$ in a small window
of text around ${king}$. Relationship (\ref{eqn:analogy}) is intuitive since  both sides  will be $\approx 1$ for gender-neutral $\chi$ like ``{\em walks}'' or ``{\em food}'',  will be $>1$ when $\chi$ is like ``{\em he, Henry}'' and will be $<1$ when $\chi$ is like ``{\em dress, she, Elizabeth}.'' This was also observed by~\citet{levy2014linguistic}.
 Given (\ref{eqn:analogy}), they then posit that the correct model describing word embeddings in terms of word occurrences must be a {\em homomorphism} from $({\Re^d, +})$ to $(\Re^+, \times)$,
 so vector differences map to ratios of probabilities.  This leads to the expression
$$ p_{w,w'} = \inner{v_{w}}{v_{w'}} + b_w + b_{w'},$$
and their method is a (weighted) least squares fit for this expression. 
One shortcoming of this argument is that the homomorphism assumption  {\em assumes} the linear relationships instead of explaining them from a more basic principle. More importantly, 
the empirical fit to the homomorphism has nontrivial approximation error, high enough that  it does not 
imply the desired strong linear relationships.

\citet{levy2014neural} show that empirically, skip-gram vectors satisfy 
\begin{equation}  \inner{v_{w}}{ v_{w'} }\approx \mbox{PMI}(w, w') \label{eqn:pmi2}
\end{equation}
up to some shift.
They also give an argument suggesting this relationship must be present  if the solution is  allowed to be very high-dimensional. Unfortunately, that  argument does not 
extend to low-dimensional embeddings. 
Even if it did, the issue of termwise approximation error remains.
%

\paragraph{Our explanation.}
The current paper has introduced a generative model to theoretically explain  the emergence of relationship (\ref{eqn:pmi2}). However, as noted after Theorem~\ref{thm:main}, the issue of high approximation error does not go away either in theory or in the empirical fit.
We now  show that the isotropy of word vectors  (assumed in the theoretical model and verified empirically) implies
that even a weak version of~(\ref{eqn:pmi2}) is enough to imply the emergence of the observed linear relationships in low-dimensional embeddings.


This argument will assume the analogy in question involves a relation that obeys Pennington et al.'s suggestion in (\ref{eqn:analogy}).
 Namely, for such a relation $R$ there exists  function
$\nu_R(\cdot)$ depending only upon $R$ such that for any $a, b$ satisfying $R$ there is a {\em noise function} $\xi_{a,b,R}(\cdot )$  for which:
	\begin{equation}
	\frac{p(\chi\mid a)}{p(\chi \mid b)} = \nu_R(\chi)\cdot \xi_{a,b,R}(\chi) \label{eqn:relation-mutliplicative}
	\end{equation}
For different words $\chi$ there is huge variation in (\ref{eqn:relation-mutliplicative}), so the multiplicative noise may be large.

Our goal is to show that the low-dimensional word embeddings have the property that there is a vector $\mu_R$ such that for every pair of words $a, b$ in that relation, $v_{a} - v_{b} = \mu_R + \mbox{noise vector}$, where the noise vector is small.

Taking logarithms of~\eqref{eqn:relation-mutliplicative} results in:
\begin{equation}
	\log\left(\frac{p(\chi\mid a)}{p(\chi \mid b)}\right) = \log(\nu_R(\chi) )+ \zeta_{a,b,R}(\chi)\label{eqn:relationship-log}
\end{equation}

Theorem~\ref{thm:main} implies that the left-hand side simplifies to $\log\left(\frac{p(\chi\mid a)}{p(\chi \mid b)}\right) = \frac{1}{d}\inner{v_{\chi}}{v_a-v_b} +\epsilon_{a,b}(\chi)$ where  $\epsilon$ captures the small approximation errors induced by the inexactness of Theorem~\ref{thm:main}. This adds yet more noise! Denoting by $V$ the $n\times d$  matrix whose rows are the $v_{\chi}$ vectors, we rewrite~\eqref{eqn:relationship-log} as:
\begin{equation}
V(v_a-v_b) = d\log(\nu_R) + \zeta'_{a,b,R}\label{eqn:linear-regression}
\end{equation}
where $\log(\nu_R)$ in the element-wise log of vector $\nu_R$ and $\zeta'_{a,b,R} = d(\zeta_{a,b,R} - \epsilon_{a,b,R})$ is the noise. 

In essence,~\eqref{eqn:linear-regression} shows that $v_a -v_b$ is a solution to a linear regression in $d$ variables and $m$ constraints, with $\zeta'_{a, b, R}$ being the \textquotedblleft noise.\textquotedblright\  
 The \emph{design matrix} in the regression is $V$, the matrix of all word vectors, which in our model
(as well as empirically) satisfies an isotropy condition. This  makes
it random-like, and thus solving the regression by left-multiplying by $V^{\dagger}$, the pseudo-inverse of $V$,   
ought to \textquotedblleft denoise\textquotedblright\
effectively. We now show that it does. 

Our model assumed the set of all word vectors satisfies bulk properties similar to a 
set of Gaussian vectors.  
The next theorem will only need the following weaker properties. 
(1) The smallest non-zero singular value of $V$ is larger than some constant $c_1$ times the quadratic mean of the singular values, namely,  $\|V\|_F/\sqrt{d}$. 
Empirically we find $c_1 \approx 1/3$ holds; see Section~\ref{sec:experiment}.
(2) The left singular vectors behave like random vectors with respect to $\zeta'_{a,b,R}$, namely, have inner product at most $c_2 \|\zeta'_{a,b,R}\|/\sqrt{n}$ with  $\zeta'_{a,b,R}$, for some constant $c_2$. 
(3) The max norm of a row in $V$ is $O(\sqrt{d})$.
The proof is included in the appendix. 
 


\begin{thm}[Noise reduction]
\label{thm:noise-reduction}
Under the conditions of the previous paragraph, the noise in the dimension-reduced semantic vector space satisfies 
\begin{equation*}
\|\bar{\zeta}_{a,b,R}\|_2 \lesssim \|\zeta'_{a,b,R}\|_2\frac{\sqrt{d}}{n}.
\end{equation*} 
As a corollary, the relative error in the dimension-reduced space is a factor of $\sqrt{d/n}$ smaller. 
\end{thm}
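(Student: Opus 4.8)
The plan is to read \eqref{eqn:linear-regression} as an over-determined linear regression with design matrix $V$, unknown $v_a-v_b$, right-hand side $d\log(\nu_R)$, and noise $\zeta'_{a,b,R}$, and then run the textbook SVD analysis of least squares, tracking separately how left-multiplication by the pseudo-inverse $V^{\dagger}$ acts on the signal and on the noise. Write the SVD $V = P\Sigma Q^{T}$ with $P\in\Re^{n\times d}$ having orthonormal columns $P_1,\dots,P_d$, $Q\in\Re^{d\times d}$ orthogonal, and $\Sigma=\mathrm{diag}(\sigma_1,\dots,\sigma_d)$ the nonzero singular values; since $V$ has full column rank, $V^{\dagger}V=I_d$ and $V^{\dagger}=Q\Sigma^{-1}P^{T}$. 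The dimension-reduced estimate of the relation vector is $\widehat{v_a-v_b}:=V^{\dagger}\,d\log(\nu_R)$, and multiplying \eqref{eqn:linear-regression} by $V^{\dagger}$ gives $\widehat{v_a-v_b} = (v_a-v_b) - \bar\zeta_{a,b,R}$ with $\bar\zeta_{a,b,R}:=V^{\dagger}\zeta'_{a,b,R}$; so it suffices to bound $\|V^{\dagger}\zeta'_{a,b,R}\|_2$.

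First I would turn the pseudo-inverse into a scalar times a projection: by orthogonality of $Q$ and of the columns of $P$, $\|\bar\zeta_{a,b,R}\|_2 = \|\Sigma^{-1}P^{T}\zeta'_{a,b,R}\|_2 \le \sigma_d^{-1}\,\|P^{T}\zeta'_{a,b,R}\|_2$. To control $\sigma_d^{-1}$ I use $\sum_{i=1}^d\sigma_i^2 = \|V\|_F^2 = \sum_\chi\|v_\chi\|_2^2 = \Theta(nd)$, which holds because there are $n$ word vectors, each of squared norm $\Theta(d)$ by property (3); combined with property (1), $\sigma_d \ge c_1\|V\|_F/\sqrt d = \Omega(\sqrt n)$, hence $\sigma_d^{-1}=O(1/\sqrt n)$. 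For the other factor, the $i$-th coordinate of $P^{T}\zeta'_{a,b,R}$ is $\langle P_i,\zeta'_{a,b,R}\rangle$, bounded in absolute value by $c_2\|\zeta'_{a,b,R}\|_2/\sqrt n$ by property (2), so summing $d$ such squared coordinates gives $\|P^{T}\zeta'_{a,b,R}\|_2 \le c_2\sqrt d\,\|\zeta'_{a,b,R}\|_2/\sqrt n$. Multiplying the two estimates yields $\|\bar\zeta_{a,b,R}\|_2 \lesssim \sqrt d\,\|\zeta'_{a,b,R}\|_2/n$, the claimed bound.

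For the corollary comparing relative errors, I would estimate how much the \emph{signal} shrinks under dimension reduction. The pre-reduction signal satisfies $d\log(\nu_R) = V(v_a-v_b) - \zeta'_{a,b,R}$, and because the rows of $V$ are (assumed) spatially isotropic with squared norm $\Theta(d)$, $\|V(v_a-v_b)\|_2^2 = \sum_\chi\langle v_\chi, v_a-v_b\rangle^2 = \Theta\!\big(n\|v_a-v_b\|_2^2\big)$; thus $\|d\log(\nu_R)\|_2 = \Theta(\sqrt n)\cdot\|v_a-v_b\|_2$, i.e.\ the signal norm is a factor $\Theta(\sqrt n)$ larger before reduction than the norm $\|v_a-v_b\|_2$ of the reduced signal. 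Dividing the noise bound by the signal norm in each space, the relative error goes from $\|\zeta'_{a,b,R}\|_2/\|d\log(\nu_R)\|_2$ to $\|\bar\zeta_{a,b,R}\|_2/\|v_a-v_b\|_2 \lesssim \big(\sqrt d/n\big)\|\zeta'_{a,b,R}\|_2 \big/ \big(\|d\log(\nu_R)\|_2/\sqrt n\big) = \sqrt{d/n}\cdot\big(\|\zeta'_{a,b,R}\|_2/\|d\log(\nu_R)\|_2\big)$, an improvement by exactly the factor $\sqrt{d/n}$.

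Inside this argument I do not expect a real obstacle: given properties (1)--(3) it is essentially bookkeeping with the SVD. The only care points are (i) correctly identifying $\bar\zeta_{a,b,R}$ as the residual $V^{\dagger}\zeta'_{a,b,R}$ of the least-squares solution and checking that the signal term collapses to $v_a-v_b$ via $V^{\dagger}V=I_d$, and (ii) extracting the two order-of-magnitude facts $\|V\|_F^2=\Theta(nd)$ and $\|V(v_a-v_b)\|_2^2=\Theta(n\|v_a-v_b\|_2^2)$ from the norm/isotropy assumptions. The genuinely substantive point lies \emph{outside} this theorem: justifying property (2), that the noise $\zeta'_{a,b,R}$ is nearly orthogonal to the top-$d$ left singular subspace of $V$. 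This is precisely where the ``$V$ looks random'' (isotropy) hypothesis does the denoising, and it is the assumption one would want to check empirically rather than prove.
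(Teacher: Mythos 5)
Your proposal is correct and follows essentially the same route as the paper's proof: the SVD bound $\|\bar\zeta\|_2 \le \sigma_d^{-1}\|P^{T}\zeta'\|_2$, the estimate $\sigma_d^{-1} = O(1/\sqrt{n})$ from $\|V\|_F^2 = \Theta(nd)$ together with assumption (1), the coordinatewise bound $\|P^{T}\zeta'\|_2 \le c_2\sqrt{d}\,\|\zeta'\|_2/\sqrt{n}$ from assumption (2), and the observation that the signal norm shrinks by a factor of $\sqrt{n}$ under $V^{\dagger}$. Your treatment of the corollary is in fact slightly more explicit than the paper's one-line remark.
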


\section{Experimental verification}\label{sec:experiment} 

In this section, we provide experiments empirically supporting our generative model. 

\paragraph{Corpus.}
All word embedding vectors are trained on the English Wikipedia (March 2015 dump). It is pre-processed by standard approach (removing non-textual elements, sentence splitting, and tokenization), leaving about $3$ billion tokens. 
Words that appeared less than $1000$ times in the corpus are ignored, resulting in a vocabulary of $68,430$.
The co-occurrence is then computed using windows of 10 tokens to each side of the focus word.

\paragraph{Training method.}
Our embedding vectors are trained by optimizing  the \textbf{SN} objective 
using AdaGrad~\cite{duchi2011adaptive} with initial learning rate of $0.05$ and 100 iterations.  The \textbf{PMI} objective derived from (\ref{eqn:PMI}) was also used. \textbf{SN} has average (weighted) term-wise error of $5$\%, and \textbf{PMI} has $17$\%.  We observed that \textbf{SN} vectors typically fit the model better and have better performance, which can be explained by larger errors in PMI, as implied by Theorem~\ref{thm:main}.  So, we only report the results for \textbf{SN}. 

For comparison, GloVe and two variants of word2vec (skip-gram and CBOW) vectors are trained. 
GloVe's vectors are trained on the same co-occurrence as \textbf{SN} with the default parameter values.\footnote{\url{http://nlp.stanford.edu/projects/glove/}} word2vec vectors are trained using a window size of 10, with other parameters set to default values.\footnote{\url{https://code.google.com/p/word2vec/}}

\newcommand{\scalefac}{0.18}
\newcommand{\hmargin}{-.1in}
\newcommand{\vmargin}{-.0in}

\begin{figure*}
	\centering
	\hspace{\hmargin}
	\subfigure[\textbf{SN}]{		\includegraphics[height=\scalefac\textwidth]{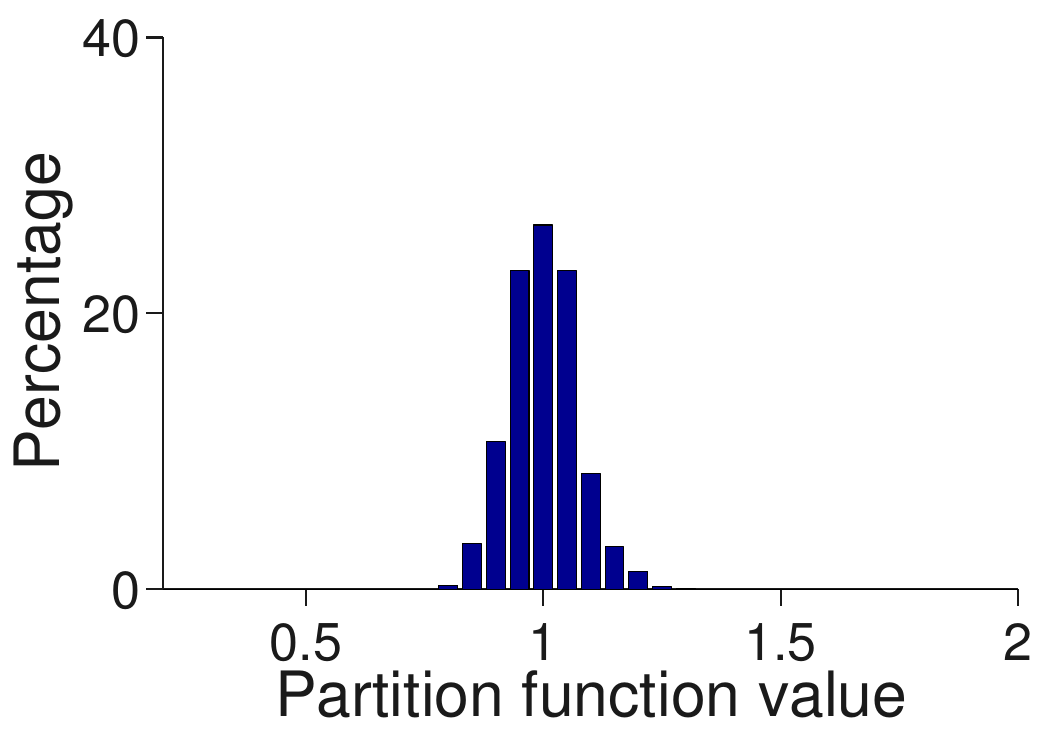}}
	\hspace{\hmargin}
	\subfigure[GloVe]{		\includegraphics[height=\scalefac\textwidth]{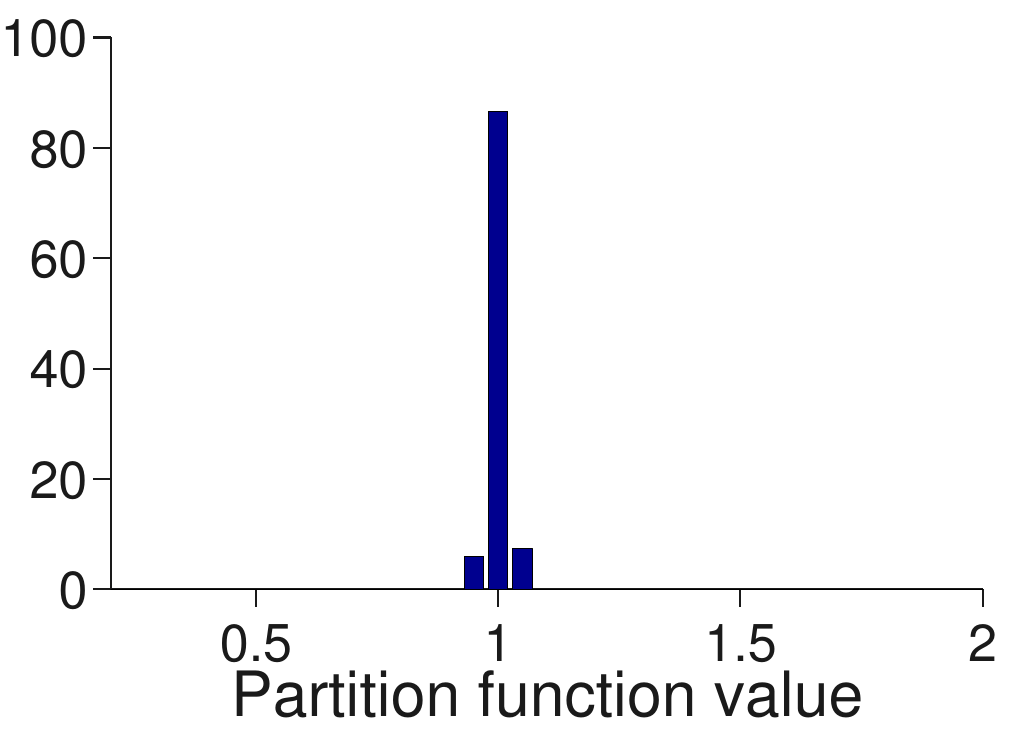}}
	\hspace{\hmargin}
	\subfigure[CBOW]{	\includegraphics[height=\scalefac\textwidth]{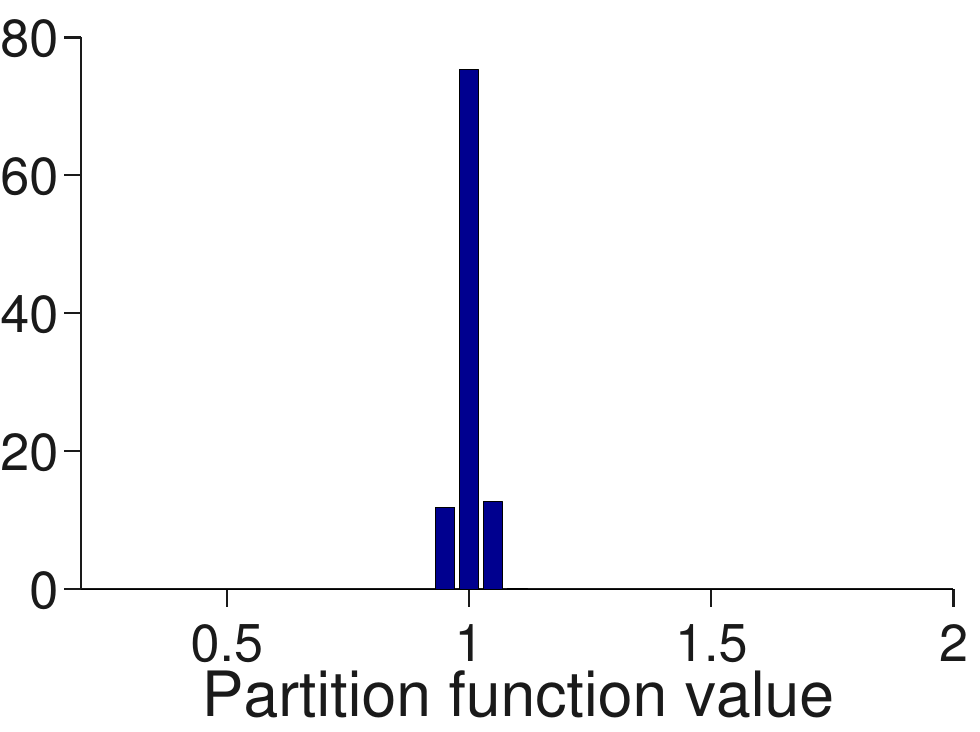}}
	\hspace{\hmargin}
	\subfigure[skip-gram]{		\includegraphics[height=\scalefac\textwidth]{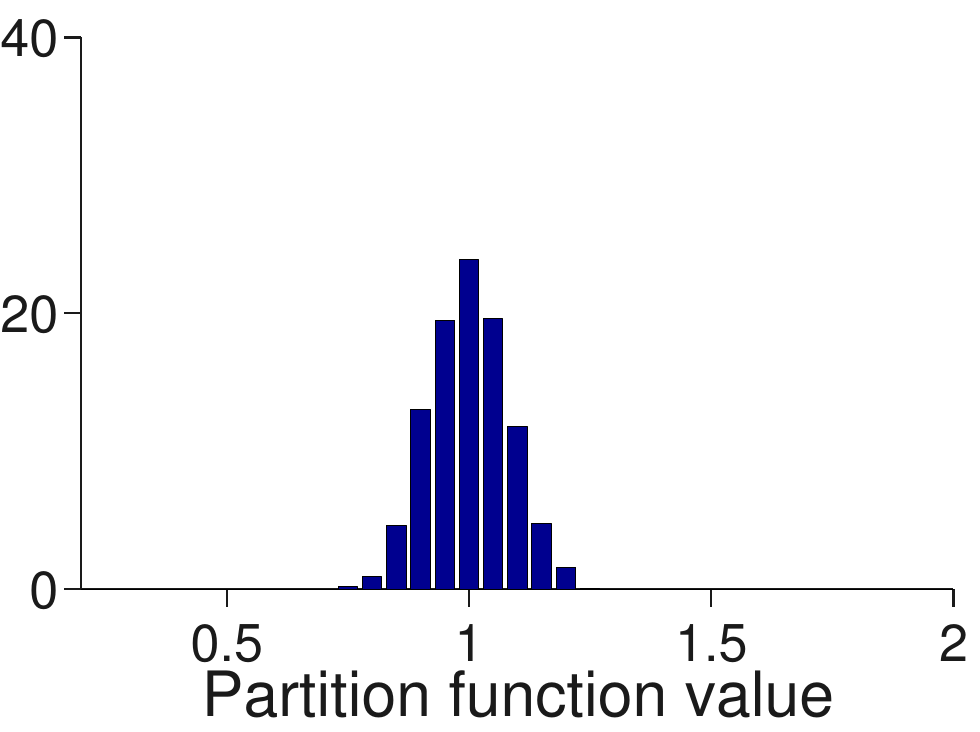}}
	\caption{The partition function $Z_c$. The figure shows the histogram of $Z_c$ for $1000$ random vectors $c$ of appropriate norm, as defined in the text. The $x$-axis is normalized by the mean of the values. The values $Z_c$ for different $c$ concentrate around the mean, mostly in $[0.9, 1.1]$. This concentration phenomenon is predicted by our analysis.
	\label{fig:zc}}
	\vspace{-.1in}
\end{figure*} 

\subsection{Model verification}
\label{subs: mverif} 
Experiments were run to test our modeling assumptions. First, we tested two counter-intuitive properties: the concentration of the partition function $Z_c$ for different discourse vectors $c$ (see Theorem~\ref{thm:Z_c}), and the random-like behavior of the matrix of word embeddings in terms of its singular values   (see Theorem~\ref{thm:noise-reduction}).
 For comparison we also 
 tested these properties for word2vec and GloVe vectors, though they are trained by different objectives. 
 Finally, we tested the linear relation between the squared norms of our word vectors and the logarithm of the word frequencies, as implied by Theorem~\ref{thm:main}.

\begin{figure}[!t]
	\centering
	\includegraphics[height=0.3\columnwidth]{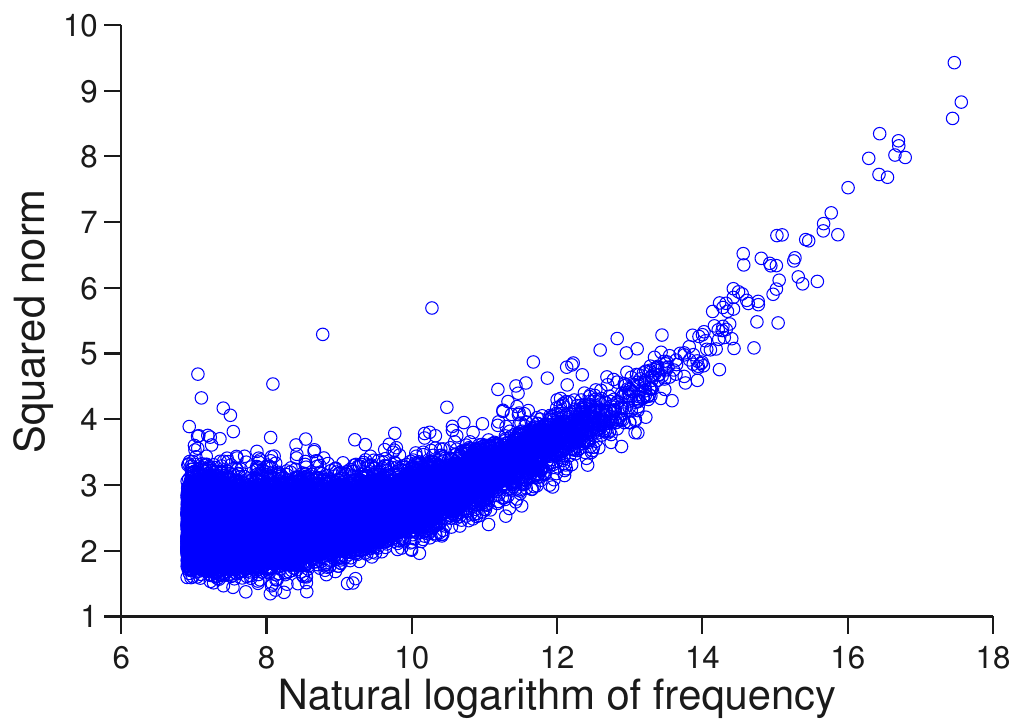}
	\caption{The linear relationship between the squared norms of our word vectors and the logarithms of the word frequencies.
		Each dot in the plot corresponds to a word, where $x$-axis is the natural logarithm of the word frequency, and $y$-axis is the squared norm of the word vector. The Pearson correlation coefficient between the two is 0.75, indicating a significant linear relationship, which strongly supports our mathematical prediction, that is, equation~\eqref{eqn:singleword} of Theorem~\ref{thm:main}. 
	\label{fig:normVSfreq}
	}
	\vspace{-2mm}
\end{figure}

\paragraph{Partition function.} Our theory predicts the counter-intuitive concentration of the partition function  $Z_{c} = \sum_{w'} \exp(c^\top v_{w'})$ for a random discourse vector $c$ (see Lemma~\ref{thm:Z_c}). This is verified empirically by picking a uniformly random direction, of norm $\|c\|  = 4/\mu_w$, where $\mu_w$ is the average norm of the word vectors.\footnote{Note that our model uses the inner products between the discourse vectors and word vectors, so it is invariant  if the discourse vectors are scaled by $s$ while the word vectors are scaled by $1/s$ for any $s>0$. Therefore, one needs to choose the norm of $c$ properly. We assume $\|c\| \mu_w = \sqrt{d}/\kappa \approx 4$ for a constant $\kappa = 5$ so that it gives a reasonable fit to the predicted dynamic range of word frequencies according to our theory; see model details in Section~\ref{sec:model}.}
 Figure~\ref{fig:zc}(a) shows the histogram of $Z_c$ for $1000$ such randomly chosen $c$'s for our vectors. The values are concentrated, mostly in the range $[0.9,1.1]$ times the mean.  Concentration is also observed for other types of vectors, especially for GloVe and CBOW.

\paragraph{Isotropy with respect to singular values.} Our theoretical explanation of {\sc relations=lines} assumes that the matrix of word vectors behaves like a random matrix with respect to the properties of singular values.  In our embeddings, the quadratic mean of the singular values is 34.3, while the minimum non-zero singular value of our word vectors is 11. Therefore, the ratio between them is a small constant, consistent with our model. 
The ratios for GloVe, CBOW, and skip-gram are 1.4, 10.1, and 3.1, respectively, which are also small constants.

\paragraph{Squared norms v.s. word frequencies.}  
Figure~\ref{fig:normVSfreq} shows a scatter plot for the squared norms of our vectors and the logarithms of the word frequencies. A linear relationship is observed (Pearson correlation 0.75), thus supporting Theorem~\ref{thm:main}. The correlation is stronger  for high frequency words, possibly because the corresponding terms have higher weights in the training objective.

This correlation  
 is much weaker for other types of word embeddings. This is possibly because they have more free parameters (\textquotedblleft knobs to turn\textquotedblright), which imbue the embeddings with other properties.
 This can also cause the difference in the concentration of the partition function for the two methods.

\subsection{Performance on analogy tasks} 
We compare the performance of our word vectors on analogy tasks, specifically the 
two  testbeds GOOGLE and MSR~\cite{mikolov2013efficient,mikolov2013linguistic}.
The former contains $7874$ semantic questions such as ``\emph{man}:\emph{woman}::\emph{king}:??", and $10167$ syntactic ones such as ``\emph{run}:\emph{runs}::\emph{walk}:??."  The latter has $8000$ syntactic questions for adjectives, nouns, and verbs. 

To solve these tasks, we use linear algebraic queries.\footnote{One can instead use the 3COSMUL in~\cite{levy2014linguistic}, which increases the accuracy by about $3\%$. But it is not linear while our focus here is the linear algebraic structure.}
That is, first normalize the vectors to unit norm and then solve ``\emph{a}:\emph{b}::\emph{c}:??''  by 
\vspace{-.1in}
\begin{align}\label{eqn:exp_linearquery}
\argmin_{d} \nbr{v_a - v_b - v_c + v_d}_2^2.
\vspace{-.2in}
\end{align}
The algorithm succeeds if the best $d$ happens to be correct.
\begin{table}
	\centering
		\begin{tabular}{c | l | c c c c c c }
		  \hline
			& Relations                             & \textbf{SN}  & GloVe  & CBOW  & skip-gram   \\
			\hline\hline
			\multirow{3}{*}{G}
			& semantic                             &       0.84   &  0.85    &  0.79     & 0.73           \\
			& syntactic                              &       0.61   &    0.65    &  0.71    & 0.68            \\
			& total                                     &        0.71     &    0.73   & 0.74   & 0.70        \\
			\hline\hline
			\multirow{4}{*}{M}
			&    adjective        &     0.50 &  0.56  & 0.58 &    0.58     \\
			& noun                  &      0.69 &  0.70 &  0.56 & 0.58      \\
			& verb                   &      0.48   &    0.53 &   0.64    &0.56 \\
			&  total                    & 0.53   &  0.57       &    0.62    & 0.57   \\
			\hline
		\end{tabular}
	\caption{The accuracy on two word analogy task testbeds: G (the GOOGLE testbed); M (the MSR testbed). Performance is close to the state of the art despite using a generative model with provable properties. \vspace{-.15in} } 
	\label{tab:acc}	
\end{table}

The performance of different methods is presented in Table~\ref{tab:acc}.  Our vectors achieve performance comparable to the state of art on semantic analogies (similar accuracy as GloVe, better than word2vec).  On syntactic tasks, they achieve accuracy $0.04$ lower than GloVe and skip-gram, while CBOW typically outperforms the others.\footnote{It was earlier reported that skip-gram outperforms CBOW~\cite{mikolov2013efficient,pennington2014glove}. This may be due to the different training data sets and hyperparameters used.}
The reason is probably that our model ignores local word order, whereas the other models capture it to some extent. For example, a word ``\emph{she}'' can affect the context by a lot and determine if the next word is ``\emph{thinks}'' rather than ``\emph{think}''. 
Incorporating  such linguistic features in the model is left for future work.

%
 
\subsection{Verifying {\sc relations=lines}}
The theory in Section~\ref{sec:interpretability} predicts the existence of a direction for a relation, whereas  earlier~\citet{levy2014linguistic} had questioned if this phenomenon is real.  The experiment uses the analogy testbed, where each relation is tested using $20$ or more analogies. For each relation, we take the set of vectors $v_{ab} = v_a -v_b$ where the word pair $(a, b)$ satisfies the relation. Then calculate the top singular vectors of the matrix formed by these $v_{ab}$'s, and compute the cosine similarity (i.e., normalized inner product) of individual $v_{ab}$ to  the singular vectors. 
We observed that most $(v_a -v_b)$'s are correlated with the first singular vector, but have inner products around 0 with the second singular vector.  Over all relations, the average projection on the first singular vector is 0.51  (semantic: 0.58; syntactic: 0.46), and the average on the second singular vector is 0.035.
For example, Table~\ref{tab:verifyrd} shows the mean similarities and standard deviations on the first and second singular vectors for 4 relations. 
Similar results are also obtained for word embedings by GloVe and word2vec.
Therefore, the first singular vector can be taken as the direction associated with this relation, while the other components are like random noise, in line with our model. 

\begin{table*}
\vspace{5mm}
\centering
{
\begin{tabular}{c|ccccccc}
\hline 
relation & 1 & 2 & 3 & 4 & 5 & 6 & 7 \\
\hline\hline
1st & 0.65 $\pm$ 0.07 & 0.61 $\pm$ 0.09 & 0.52 $\pm$ 0.08 & 0.54 $\pm$ 0.18 & 0.60 $\pm$ 0.21 & 0.35 $\pm$ 0.17 & 0.42 $\pm$ 0.16 \\
2nd & 0.02 $\pm$ 0.28 & 0.00 $\pm$ 0.23 & 0.05 $\pm$ 0.30 & 0.06 $\pm$ 0.27 & 0.01 $\pm$ 0.24 & 0.07 $\pm$ 0.24 & 0.01 $\pm$ 0.25 \\
\hline
\end{tabular}
\begin{tabular}{c|ccccccc}
\hline
relation & 8 & 9 & 10 & 11 & 12 & 13 & 14 \\
\hline\hline
1st & 0.56 $\pm$ 0.09 & 0.53 $\pm$ 0.08 & 0.37 $\pm$ 0.11 & 0.72 $\pm$ 0.10 & 0.37 $\pm$ 0.14 & 0.40 $\pm$ 0.19 & 0.43 $\pm$ 0.14 \\
2nd & 0.00 $\pm$ 0.22 & 0.01 $\pm$ 0.26 & 0.02 $\pm$ 0.20 & 0.01 $\pm$ 0.24 & 0.07 $\pm$ 0.26 & 0.07 $\pm$ 0.23 & 0.09 $\pm$ 0.23 \\ 
\hline
\end{tabular}
}
\caption{The verification of relation directions on 2 semantic and 2 syntactic relations in the GOOGLE testbed. Relations include cap-com: capital-common-countries; cap-wor: capital-world; adj-adv: gram1-adjective-to-adverb; opp: gram2-opposite. For each relation, take $v_{ab} = v_a -v_b$ for pairs $(a, b)$ in the relation, and then calculate the top singular vectors of the matrix formed by these $v_{ab}$'s.  The row with label ``1st''/``2nd'' shows the cosine similarities of individual $v_{ab}$ to  the 1st/2nd singular vector (the mean and standard deviation). 
}
\label{tab:verifyrd}
\end{table*}

\begin{table}
\centering
\begin{tabular}{c|cccc}
\hline 
& \textbf{SN} & GloVe & CBOW & skip-gram \\
\hline\hline
w/o \textbf{RD} & 0.71 & 0.73 & 0.74 & 0.70\\
\hline
\textbf{RD}($k$ = 20) & 0.74 & 0.77 & 0.79 & 0.75 \\
\textbf{RD}($k$ = 30) & 0.79 & 0.80 & 0.82 & 0.80 \\
\textbf{RD}($k$ = 40) & 0.76 & 0.80 & 0.80 & 0.77 \\
\hline
\end{tabular}
\caption{The accuracy of the \textbf{RD} algorithm (i.e., the cheater method) on the GOOGLE testbed. 
The \textbf{RD} algorithm is described in the text.
For comparison, the row ``w/o \textbf{RD}'' shows the accuracy of the old method without using \textbf{RD}.}
\label{tab:rdperf}
\end{table}

\paragraph{Cheating solver for analogy testbeds.} The above linear structure suggests a better (but cheating) way to solve the analogy task. This uses the fact that the same semantic relationship (e.g., masculine-feminine,
singular-plural) is tested many times in the testbed. If a relation $R$ is represented by a direction $\mu_R$ then
the cheating algorithm can learn this direction (via rank 1 SVD) after seeing a few examples of the relationship. Then use the following method of solving ``\emph{a}:\emph{b}::\emph{c}:??'': look for a word $d$ such that $v_c -v_d$ has the largest projection on $\mu_R$, the relation direction for $(a,b)$. This can boost success rates by about $10\%$. 

The testbed can try to combat such cheating by giving analogy questions in a random order. But the cheating  algorithm can just {\em cluster}
the presented analogies to learn which of them are in the same relation. 
Thus the final algorithm, named analogy solver with relation direction ($\textbf{RD}$), is: 
take all vectors  $v_a - v_b$ for all the word pairs $(a,b)$ presented among the analogy questions and do $k$-means clustering on them; for each $(a,b)$, estimate the relation direction by taking the first singular vector of its cluster, and substitute that for $v_a -v_b$ in \eqref{eqn:exp_linearquery} when solving the analogy. 
Table~\ref{tab:rdperf} shows the performance on GOOGLE with different values of $k$; e.g.
using our \textbf{SN} vectors and $k=30$ leads to $0.79$ accuracy.
Thus future designers of analogy testbeds should remember not to test the same relationship too many times!
This still leaves other ways to cheat, such as learning the directions for  interesting semantic relations
from other collections of analogies. 

\paragraph{Non-cheating solver for analogy testbeds.} Now we show that even if a relationship
is tested only once in the testbed, there is a  way to use the above structure. 
 Given ``\emph{a}:\emph{b}::\emph{c}:??,'' the solver first finds the top $300$ nearest neighbors of $a$ and those of $b$, and then finds among these neighbors the top $k$ pairs $(a',b')$ so that the cosine similarities between $v_{a'} - v_{b'}$ and $v_a - v_b$ are largest. Finally, the solver uses these pairs to estimate the relation direction (via rank 1 SVD), and substitute this (corrected)
 estimate for $v_a -v_b$ in \eqref{eqn:exp_linearquery} when solving the analogy. 
This algorithm is named analogy solver with relation direction by nearest neighbors ($\textbf{RD-nn}$). 
Table~\ref{tab:rdnnperf} shows its performance, which consistently improves over the old method by about $3\%$.

\begin{table}
\centering
\begin{tabular}{c|cccc}
\hline 
& \textbf{SN} & GloVe & CBOW & skip-gram \\
\hline\hline
               w/o \textbf{RD-nn} & 0.71 & 0.73 & 0.74 & 0.70\\
\hline
\textbf{RD-nn} ($k$ = 10) & 0.71 & 0.74 & 0.77 & 0.73 \\
\textbf{RD-nn} ($k$ = 20) & 0.72 & 0.75 & 0.77 & 0.74 \\
\textbf{RD-nn} ($k$ = 30) & 0.73 & 0.76 & 0.78 & 0.74 \\
\hline
\end{tabular}
\caption{The accuracy of the \textbf{RD-nn} algorithm on the GOOGLE testbed. 
The algorithm is described in the text.
For comparison, the row ``w/o \textbf{RD-nn}'' shows the accuracy of the old method without using \textbf{RD-nn}.}
\label{tab:rdnnperf}
\end{table}
\section{Conclusions}
\label{sec:conclusions}
\vspace{-.05in}
A simple generative model has been introduced to explain
the classical PMI based word embedding models, as well as recent variants involving
energy-based models and matrix factorization. The model yields an optimization objective with essentially \textquotedblleft no knobs to turn\textquotedblright, yet the embeddings lead to good 
performance on analogy tasks, and fit other predictions of our  generative model. A model with fewer knobs to turn should be seen as a better scientific explanation ({\em Occam's razor}), and certainly makes the embeddings more interpretable. 

The  spatial isotropy of word vectors is both an assumption in our model, and also a
new empirical finding of our paper. We feel it may help with further development of language models.  It is important for explaining the success of solving analogies via low dimensional 
vectors ({\sc relations=lines}). It also implies that semantic relationships among words manifest themselves as special directions among word embeddings (Section~\ref{sec:interpretability}), which lead to a cheater algorithm for solving analogy testbeds.


Our model is tailored to capturing semantic similarity, more akin to a log-linear dynamic topic model. In particular, local word order is unimportant. Designing similar generative models (with provable and
interpretable properties) with linguistic features is left for future work. 

\section*{Acknowledgements}

We thank the editors of TACL for granting a special relaxation of the page limit for our paper.  We thank Yann LeCun, Christopher D. Manning, and Sham Kakade for helpful
discussions at various stages of this work. 

This work was supported in part by NSF grants 
CCF-1527371, DMS-1317308, Simons Investigator Award, Simons Collaboration Grant,
and ONR-N00014-16-1-2329. Tengyu Ma was supported in addition by Simons Award in Theoretical Computer Science and IBM PhD Fellowship.

\bibliographystyle{plainnat}
\bibliography{semantic_vector}

\newpage
\appendix

\section{Proofs of Theorem 1} \label{sec:proofmain}

In this section we prove Theorem~\ref{thm:main} and Lemma~\ref{thm:Z_c} (restated below). 

\begin{oneshot}{Theorem~\ref{thm:main}}
	Suppose the word vectors satisfy equation~\eqref{eqn:lem_zc}, and window size $q = 2$. Then, 
	\begin{align}
	\log p(w,w')   & =  \frac{\|v_w + v_{w'}\|_2^2 }{2d}- 2\log Z  \pm \epsilon,  \label{eqn:twowords_cooc_app} \\
	\log p(w) & =  \frac{\nbr{v_{w}}_2^2}{2d}  - \log Z \pm \epsilon. \label{eqn:singleword_app}
	\end{align}
	for $\epsilon = O(\epsilon_z) +\widetilde{O}(1/d) + O(\epsilon_2)$. 
	Jointly these imply: 
	\begin{equation}
	\mbox{PMI}~(w, w') = \frac{\langle v_w, v_{w'}\rangle}{d}  \pm O(\epsilon). \label{eqn:PMI_app}
	\end{equation}
\end{oneshot}

\begin{oneshot}{Lemma~\ref{thm:Z_c}}
		If the word vectors satisfy the bayesian prior $v =  s \cdot \hat{v}$, where $\hat{v}$ is from the spherical Gaussian distribution, and $s$ is a scalar random variable, then with high probability the entire
		ensemble of word vectors satisfies that
		\begin{equation}
		\Pr_{c\sim \mathcal{C}}\left[(1-\epsilon_z)Z\le Z_c\le (1+\epsilon_z) Z\right]\ge 1-\delta \label{eqn:lem_zc_app},
		\end{equation}
		%
		%
		for $\epsilon_z = \widetilde{O}(1/\sqrt{n})$, and $\delta = \exp(-\Omega(\log^2 n)) $.
\end{oneshot}


We first prove Theorem~\ref{thm:main} using Lemma~\ref{thm:Z_c}, and Lemma~\ref{thm:Z_c} will be proved in Section~\ref{sec:zc}. Please see Section~\ref{sec:model} of the main paper for the intuition of the proof and a cleaner sketch without too many technicalities. 
\begin{proof}[Proof of Theorem~\ref{thm:main}]

Let $c$ be the hidden discourse that determines the probability of word $w$, and $c'$ be the next one that determines $w'$. We use $p(c'\vert c)$ to denote the Markov kernel (transition matrix) of the Markov chain.  Let $\mathcal{C}$ be the stationary distribution of discourse vector $c$, and $\mathcal{D}$ be the joint distribution of $(c,c')$. 
We marginalize over the contexts $c,c'$ and then use the independence of $w,w'$ conditioned on $c,c'$,

\begin{equation}
p(w,w') = \Exp_{(c,c')\sim \mathcal{D}}\left[\frac{\expinner{v_w,c}}{Z_c}\frac{\expinner{v_{w'},c'}}{Z_{c'}}\right]\label{eqn:eqn20}\end{equation}

We first get rid of the partition function $Z_c$ using Lemma~\ref{thm:Z_c}. As sketched in the main paper, essentially we will replace $Z_c$ by $Z$ in equation~\eqref{eqn:eqn20}, though a very careful control of the approximation error is required. Formally, 
Let $\mathcal{F}_1$ be the event that $c$ satisfies
\begin{equation}
	(1-\epsilon_z)Z\le Z_c\le (1+\epsilon_z)Z\,.\label{eqn:inter10}
\end{equation} Similarly, let $\mathcal{F}_2$ be the even that $c'$ satisfies	$(1-\epsilon_z)Z\le Z_{c'}\le (1+\epsilon_z)Z$, and let $\mathcal{F} = \mathcal{F}_1\cap \mathcal{F}_2$, and $\overline{\mathcal{F}}$ be its negation. Moreover, let $\mathbf{1}_{\mathcal{F}}$ be the indicator function for the event $\mathcal{F}$. Therefore by Lemma~\ref{thm:Z_c} and union bound, we have $\Exp[\mathbf{1}_{\mathcal{F}}] = \Pr[\mathcal{F}] \ge 1-\exp(-\Omega(\log^2 n))$.  

We first decompose the integral (\ref{eqn:eqn20}) into the two parts according to whether event $\mathcal{F}$ happens, 

\begin{align}
p(w,w') \nonumber
&=  \Exp_{(c,c')\sim \mathcal{D}}\left[ \frac{1}{Z_cZ_{c'}}\expinner{v_w,c}\expinner{v_{w'},c'} \mathbf{1}_{\mathcal{F}}\right]
\nonumber\\
& + \Exp_{(c,c')\sim \mathcal{D}}\left[\frac{1}{Z_cZ_{c'}}\expinner{v_w,c}\expinner{v_{w'},c'}\mathbf{1}_{\overline{\mathcal{F}}}\right]
\label{eqn:inter3}
\end{align}
We bound the first quantity on the right hand side using (\ref{eqn:lem_zc}) and the definition of $\mathcal{F}$. 
\newcommand{\superexp}[1]{\Exp_{(c,c')\sim \mathcal{D}}\left[#1\right]}
\newcommand{\littleexp}[1]{\Exp\left[#1\right]}
\newcommand{\outexpc}[1]{\Exp_{c}\left[#1\right]}
\newcommand{\outexpcp}[1]{\Exp_{c'}\left[#1\right]} 
\newcommand{\innerexpc}[1]{\Exp_{c' \mid c}\left[#1\right]} 
\newcommand{\innerexpcp}[1]{\Exp_{c \mid c'}\left[#1\right]}
\begin{align}
	& \Exp_{(c,c')\sim \mathcal{D}}\left[\frac{1}{Z_cZ_{c'}}\expinner{v_w,c}\expinner{v_{w'},c'} \mathbf{1}_{\mathcal{F}}\right]\nonumber \\
	& \le (1+\epsilon_z)^2 \frac{1}{Z^2} \superexp{\expinner{v_w,c}\expinner{v_{w'},c'} \mathbf{1}_{\mathcal{F}}}\label{eqn:inter1}
\end{align}
For the second quantity of the right hand side of~\eqref{eqn:inter3}, we have by Cauchy-Schwartz, 
\begin{align}
	&\left(\superexp{\frac{1}{Z_cZ_{c'}}\expinner{v_w,c}\expinner{v_{w'},c'} \mathbf{1}_{\overline{\mathcal{F}}}}\right)^2\nonumber\\
	& \le \left(\superexp{\frac{1}{Z_c^2}\expinner{v_w,c}^2\mathbf{1}_{\overline{\mathcal{F}}}}\right)\left(\superexp{\int_{c,c'} \frac{1}{Z_{c'}^2}\expinner{v_{w'},c'}^2\mathbf{1}_{\overline{\mathcal{F}}} }\right)\nonumber\\
	& \le \left(\outexpc{\frac{1}{Z_c^2}\expinner{v_w,c}^2\innerexpc{\mathbf{1}_{\overline{\mathcal{F}}}}}\right)\left(\outexpcp{ \frac{1}{Z_{c'}^2}\expinner{v_{w'},c'}^2\innerexpcp{\mathbf{1}_{\overline{\mathcal{F}}}}}\right)\,.\label{eqn:eqn9}
\end{align}
%


Using the fact that $Z_c\ge 1$, then we have that 
\begin{equation*}
\outexpc{\frac{1}{Z_c^2}\expinner{v_w,c}^2\innerexpc{\mathbf{1}_{\overline{\mathcal{F}}}}}\le \outexpc{\expinner{v_w,c}^2\innerexpc{\mathbf{1}_{\overline{\mathcal{F}}}}} 
\end{equation*}

We can split that expectation as 
\begin{equation}
\Exp_c \left[\expinner{v_w,c}^2 \mathbf{1}_{\langle v_w,c \rangle > 0}\ \innerexpc{\mathbf{1}_{\overline{\mathcal{F}}}} \right] + \Exp_c \left[ \expinner{v_w,c}^2 \mathbf{1}_{\langle v_w,c \rangle < 0} \innerexpc{\mathbf{1}_{\overline{\mathcal{F}}}}\right]\,.\label{eqn:eqn21}
\end{equation}
The second term of (\ref{eqn:eqn21}) is upper bounded by 
$$\Exp_{c,c'} [\mathbf{1}_{\overline{\mathcal{F}}}]\le \exp(-\Omega(\log^2 n))$$ 
We proceed to the first term of (\ref{eqn:eqn21}) and observe the following property of it:
$$ \Exp_c \left[\expinner{v_w,c}^2 \mathbf{1}_{\langle v_w,c \rangle > 0}\ \innerexpc{\mathbf{1}_{\overline{\mathcal{F}}}} \right] \leq \Exp_c \left[\expinner{\alpha v_w,c}^2 \mathbf{1}_{\langle v_w,c \rangle > 0}\ \innerexpc{\mathbf{1}_{\overline{\mathcal{F}}}} \right] \leq \Exp_c \left[\expinner{\alpha v_w,c}^2 \innerexpc{\mathbf{1}_{\overline{\mathcal{F}}}} \right]$$
where $\alpha > 1$. Therefore, it's sufficient to bound
$$\Exp_c \left[\expinner{v_w,c}^2 \innerexpc{\mathbf{1}_{\overline{\mathcal{F}}}} \right] $$
when $\|v_w\| = \Omega(\sqrt{d})$.   

Let's denote by $z$ the random variable $2\inner{v_w}{c}$. 

Let's denote 
$r(z) = \Exp_{c' \mid z} [\mathbf{1}_{\overline{\mathcal{F}}}]$ 
which is a function of $z$ between $[0,1]$. We wish to upper bound $\Exp_c \left[\exp(z)r(z)\right]$. 
The worst-case $r(z)$ can be quantified using a continuous version of Abel's inequality as proven in Lemma~\ref{lem:abel}, which gives 
\begin{equation}
\Exp_c[\exp(z)r(z)]\le \Exp\left[ \exp(z)\indicator{[t,+\infty]}(z)\right]
\end{equation}
where $t$ satisfies that 
$\Exp_c[\mathbf{1}_{[t,+\infty]}] = \Pr[z\ge t] = \Exp_c[r(z)] \leq \exp(-\Omega(\log^2 n))$. 
Then, we claim $\Pr[z\ge t] \le \exp(-\Omega(\log^2 n))$ implies that $t \ge \Omega(\log^{.9} n)$.

If $c$ were distributed as $\mathcal{N}(0, \frac{1}{d} I)$, this would be a simple tail bound. However, as $c$ is distributed uniformly on the sphere, this requires special care, and the claim follows by applying Lemma \ref{l:sphericalangle} instead. 
%
%
Finally, applying Corollary \ref{c:abelspherical}, we have: 
\begin{equation}
\Exp[\exp(z)r(z)]\le \Exp\left[ \exp(z)\indicator{[t,+\infty]}(z)\right] = \exp(-\Omega(\log^{1.8}n))
\end{equation}
We have the same bound for $c'$ as well.  
%
Hence, for the second quantity of the right hand side of~\eqref{eqn:inter3}, we have
\begin{align}
&\superexp{\frac{1}{Z_cZ_{c'}}\expinner{v_w,c}\expinner{v_{w'},c'} \mathbf{1}_{\overline{\mathcal{F}}}}\nonumber\\
&\le \left(\outexpc{\frac{1}{Z_c^2}\expinner{v_w,c}^2\innerexpc{\mathbf{1}_{\overline{\mathcal{F}}}}}\right)^{1/2} \left(\outexpcp{ \frac{1}{Z_{c'}^2}\expinner{v_{w'},c'}^2\innerexpcp{\mathbf{1}_{\overline{\mathcal{F}}}}}\right)^{1/2} \nonumber\\ 
& \le \exp(-\Omega(\log^{1.8}n))
\label{eqn:rhsbound}
\end{align}
%
where the first inequality follows from Cauchy-Schwartz, and the second from the calculation above. 
Combining (\ref{eqn:inter3}), (\ref{eqn:inter1}) and (\ref{eqn:rhsbound}), we obtain

\begin{align*}
	p(w,w')
	&\le (1+\epsilon_z)^2 \frac{1}{Z^2} \superexp{\expinner{v_w,c}\expinner{v_{w'},c'} \mathbf{1}_{\overline{\mathcal{F}}}}+ \frac{1}{n^2}\exp(-\Omega(\log^{1.8}n))\nonumber\\
	&\le (1+\epsilon_z)^2 \frac{1}{Z^2} \superexp{\expinner{v_w,c}\expinner{v_{w'},c'}} + \delta_0\nonumber\\
\end{align*}


where $\delta_0 = \exp(-\Omega(\log^{1.8}n))Z^2 \le \exp(-\Omega(\log^{1.8}n))$ by the fact that $Z \le \exp(2\kappa)n = O(n)$. Note that $\kappa$ is treated as an absolute constant throughout the paper. 
On the other hand, we can lowerbound similarly
\begin{align*}
	p(w,w') 
	&\ge (1-\epsilon_z)^2 \frac{1}{Z^2} \superexp{\expinner{v_w,c}\expinner{v_{w'},c'} \mathbf{1}_{\overline{\mathcal{F}}}} \\
	&\ge (1-\epsilon_z)^2 \frac{1}{Z^2} \superexp{\expinner{v_w,c}\expinner{v_{w'},c'}}  -  \frac{1}{n^2}\exp(-\Omega(\log^{1.8}n))\\
	&\ge (1-\epsilon_z)^2 \frac{1}{Z^2} \superexp{\expinner{v_w,c}\expinner{v_{w'},c'}}  - \delta_0
\end{align*}
Taking logarithm, the multiplicative error translates to a additive error 
\begin{align*}
	\log p(w,w') 
	&= \log\left(  \superexp{\expinner{v_w,c}\expinner{v_{w'},c'}}  \pm \delta_0\right) - 2\log Z + 2\log(1\pm \epsilon_z) \\
\end{align*}
For the purpose of exploiting the fact that $c,c'$ should be close to each other, we further rewrite $\log p(w,w')$ by re-organizing the expectations above,
\begin{align}
	\log p(w,w') 
	&= \log\left( \Exp_c\left[\expinner{v_w,c}\Exp_{c' \mid c}[\expinner{v_{w'},c'}]\right]\pm \delta_0\right) - 2\log Z + 2 \log(1\pm \epsilon_z)\nonumber\\
	& = \log\left( \Exp_c\left[\expinner{v_w,c} A(c)\right]\pm \delta_0\right) - 2\log Z + 2 \log(1\pm \epsilon_z)\label{eqn:ac}
\end{align}
where the inner integral which is denoted by $A(c)$, 
$$A(c) := \Exp_{c' \mid c}\left[\expinner{v_{w'},c'}\right]$$
Since $\|v_w\| \le \kappa\sqrt{d}$.  
 Therefore we have that $\langle v_{w},c- c'\rangle \le \|v_w\|\|c-c'\|\le \kappa\sqrt{d}\|c-c'\|$. 
	
Then we can bound $A(c)$ by

\begin{align*}
A(c) &= \Exp_{c' \mid c} \left[\expinner{v_{w'},c'}\right]	 \\
& = \expinner{v_{w'},c}\Exp_{c' \mid c}\left[\expinner{v_{w'},c'-c}\right]\\
& \le \expinner{v_{w'},c}\Exp_{c'|c}[\exp(\kappa\sqrt{d}\|c-c'\|)]\\
&\le (1+\epsilon_2)\expinner{v_{w'},c}
\end{align*}
where the last inequality follows from our model assumptions. To derive a lower bound of $A(c)$, observe that 

$$\Exp_{c'|c}[\exp(\kappa\sqrt{d}\|c-c'\|)] + \Exp_{c'|c}[\exp(-\kappa\sqrt{d}\|c-c'\|)]\ge 2$$

Therefore, our model assumptions imply that 

$$ \Exp_{c'|c}[\exp(-\kappa\sqrt{d}\|c-c'\|)]\ge 1-\epsilon_2$$

Hence,  
\begin{align*}
	A(c) 
	&=\expinner{v_{w'},c}\Exp_{c' \mid c}\expinner{v_{w'},c'-c} \\
	& \ge \expinner{v_{w'},c}\Exp_{c' \mid c}\exp(-\kappa\sqrt{d}\|c-c'\|)\\
	&\ge (1-\epsilon_2)\expinner{v_{w'},c}
\end{align*}

Therefore, we obtain that $A(c) = (1\pm \epsilon_2)\expinner{v_{w'},c}$. Plugging the just obtained estimate of $A(c)$ into the equation ~\eqref{eqn:ac}, we get that 


\begin{align}
	\log p(w,w') 
	&= \log\left( \Exp_c\left[\expinner{v_w,c}A(c)\right] \pm \delta_0\right) - 2\log Z + 2 \log(1\pm \epsilon_z)\nonumber\\
	& = \log\left(\Exp_{c}\left[(1 \pm \epsilon_2) \expinner{v_w,c}\expinner{v_{w'},c}\right] \pm \delta_0\right) - 2\log Z + 2 \log(1\pm \epsilon_z)\nonumber\\
	& = \log\left(\Exp_{c}\left[\expinner{v_w+v_{w'},c}\right]\pm \delta_0\right) - 2\log Z + 2\log(1\pm \epsilon_z) + \log(1\pm \epsilon_2)\label{eqn:eqn19}
\end{align}

Now it suffices to compute $ \Exp_{c}[\expinner{v_w+v_{w'},c}]$. Note that if $c$ had the distribution $\mathcal{N}(0,\frac{1}{d}I)$, which is very similar to uniform distribution over the sphere,  then we could get straightforwardly 
$ \Exp_{c}[\expinner{v_w+v_{w'},c}] = \exp(\|v_w+v_{w'}\|^2/(2d))$.  For $c$ having a uniform distribution over the sphere, by Lemma~\ref{lem:helper1}, the same equality holds approximately, 

\begin{equation}
\Exp_{c}[\expinner{v_w+v_{w'},c}] = (1\pm \epsilon_3)\exp(\|v_w+v_{w'}\|^2/(2d)) \label{eqn:eqn18}
\end{equation}
where $\epsilon_3  = \widetilde{O}(1/d)$. 

%
%
%
%
%
%

Plugging in equation~\eqref{eqn:eqn18} into equation~\eqref{eqn:eqn19}, we have that 
\begin{align*}
	\log p(w,w') 
	& = \log\left(  (1\pm \epsilon_3)\exp(\|v_w+v_{w'}\|^2/(2d)) \pm \delta_0 \right)- 2\log Z + 2\log(1\pm \epsilon_z) + \log(1\pm \epsilon_2)\\
	&  = \|v_w + v_{w'}\|^2/(2d) + O(\epsilon_3) + O(\delta_0') - 2\log Z \pm 2\epsilon_z\pm \epsilon_2\\
\end{align*}

where $\delta_0' = \delta_0\cdot \left(\Exp_{c\sim \mathcal{C}}[\expinner{v_w+v_{w'},c}]\right)^{-1}= \exp(-\Omega(\log^{1.8} n))$. 
Note that $\epsilon_3 = \widetilde{O}(1/d)$, $\epsilon_z = \widetilde{O}(1/\sqrt{n})$, and $\epsilon_2$ by assumption, therefore we obtain that

\begin{equation*}
\log p(w,w')   = \frac{1}{2d}\|v_w + v_{w'}\|^2 - 2\log Z \pm O(\epsilon_z) + O(\epsilon_2)+ \widetilde{O}(1/d).
\end{equation*}


\ 
\end{proof}

The following lemmas are helper lemmas that were used in the proof above. We use $\mathcal{C}^d$ to denote the uniform distribution over the unit sphere in $\R^d$.

\begin{lem} [Tail bound for spherical distribution] If $c \sim \mathcal{C}^d$, $v \in \mathbb{R}^d$ is a vector with $\|v\| = \Omega(\sqrt{d})$ and $t = \omega(1)$, the random variable $z = \langle v, c \rangle$ satisfies $\Pr[z \geq t] = e^{-O(t^2)}$.  
\label{l:sphericalangle} 
\end{lem}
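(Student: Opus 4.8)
The plan is to reduce the statement to a one-dimensional tail bound and then transfer it from the (non-product) uniform measure on the sphere to the standard Gaussian via the usual representation $c \sim g/\|g\|$. By rotational invariance of $\mathcal{C}^d$, the random variable $z=\langle v,c\rangle$ is distributed exactly as $\|v\|\,c_1$, where $c_1$ is the first coordinate of a uniform point on the sphere, and $c_1$ in turn is distributed as $g_1/\|g\|$ with $g=(g_1,\dots,g_d)\sim\mathcal{N}(0,I_d)$. Splitting $g=(g_1,g')$ with $g'\in\R^{d-1}$ and squaring (using that $c_1\ge s\ge 0$ forces $g_1\ge 0$), one gets for every $s\in[0,1)$ the identity
\[
\Pr[c_1\ge s] \;=\; \Pr\!\left[g_1 \ge \frac{s}{\sqrt{1-s^2}}\,\|g'\|\right],
\]
and the key point is that here $g_1$ is \emph{independent} of $\|g'\|$. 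We only care about the regime $t\le\|v\|/2$ (equivalently $s:=t/\|v\|\le 1/2$), which is the one arising in the application; outside it the claimed \emph{lower} bound on $\Pr[z\ge t]$ is not needed.

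Given the identity, I would lower-bound $\Pr[c_1\ge s]$ for $s\le 1/2$ in three steps. First, condition on the event $\{\|g'\|\le 2\sqrt d\}$, which has probability at least $1/2$ by Markov's inequality applied to $\|g'\|^2\sim\chi^2_{d-1}$ (so $\E\|g'\|^2=d-1$); on this event $\tfrac{s}{\sqrt{1-s^2}}\|g'\|\le 2\sqrt d\cdot\tfrac{s}{\sqrt{1-s^2}}\le 4s\sqrt d$ since $s\le 1/2$, so by independence $\Pr[c_1\ge s]\ge\tfrac12\Pr[g_1\ge 4s\sqrt d]$. Second, since $\|v\|=\Omega(\sqrt d)$, say $\|v\|\ge c_0\sqrt d$, we have $4s\sqrt d=4t\sqrt d/\|v\|\le (4/c_0)\,t$, hence $\Pr[g_1\ge 4s\sqrt d]\ge\Pr[g_1\ge (4/c_0)t]$. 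Third, apply the standard Gaussian anti-concentration bound $\Pr[g_1\ge a]\ge\tfrac{1}{\sqrt{2\pi}\,a}e^{-(a+1/a)^2/2}\ge e^{-a^2}$, valid for all sufficiently large $a$; since $t=\omega(1)$, $a:=(4/c_0)t$ is eventually large enough, giving $\Pr[g_1\ge (4/c_0)t]\ge e^{-(16/c_0^2)t^2}$. Chaining these, $\Pr[z\ge t]=\Pr[c_1\ge t/\|v\|]\ge\tfrac12 e^{-(16/c_0^2)t^2}=e^{-O(t^2)}$, as desired.

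The only real subtlety — and the reason the lemma is worth stating separately rather than dismissing as "a standard tail bound" — is precisely that $\mathcal{C}^d$ is not a product measure, so one cannot simply treat the coordinates of $c$ as independent sub-Gaussians; the algebraic identity above is exactly the device that decouples the single coordinate $g_1$ from the norm of the remaining $d-1$ coordinates, after which everything is routine. I note that the matching upper tail bound $\Pr[z\ge t]\le e^{-\Omega(t^2)}$, which is not needed for the deduction ``$\Pr[z\ge t]\le\exp(-\Omega(\log^2 n))\Rightarrow t\ge\Omega(\log^{0.9}n)$'', would follow from the same representation together with the (separately available) upper bound $\|v\|=O(\sqrt d)$ on the word vectors.
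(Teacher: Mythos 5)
Your proof is correct and follows essentially the same route as the paper's: both represent $c$ as a normalized Gaussian, reduce to the first coordinate by rotational invariance, decouple it from the independent norm of the remaining $d-1$ coordinates by conditioning that norm to be of typical size, and finish with a Gaussian lower tail bound at a threshold of order $t$. Your explicit restriction to the regime $t \le \|v\|/2$ makes precise an assumption the paper's algebra (the $\|v\|^2 - t^2$ denominator) leaves implicit, but the argument is otherwise the same.
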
 
\begin{proof} 
If $c=(c_1, c_2, \dots, c_d) \sim \mathcal{C}^d$, $c$ is in distribution equal to $\left( \frac{\tilde{c}_1}{\|\tilde{c}\|}, \frac{\tilde{c}_2}{\|\tilde{c}\|}, \dots, \frac{\tilde{c}_d}{\|\tilde{c}\|} \right)$ where the $\tilde{c}_i$ are i.i.d. samples from a univariate Gaussian with mean 0 and variance $\frac{1}{d}$. 
By spherical symmetry, we may assume that $v = (\|v\|, 0, \dots, 0)$. 
Let's introduce the random variable $r = \sum_{i=2}^d \tilde{c}^2_i$. 
Since 
\begin{equation*}  
\Pr\left[\langle v,c \rangle \geq t\right] = \Pr\left[\|v| \frac{\tilde{c}_1}{\|\tilde{c}\|} \geq t\right] \le \Pr\left[\frac{\|v\|\tilde{c}_1}{\|\tilde{c}\|} \geq t \mid r \ge \frac{1}{2}\right] \Pr\left[r \ge \frac{1}{2}\right]  + \Pr\left[\frac{\|v\|\tilde{c}_1}{\|\tilde{c}\|} \geq t \mid r \ge \frac{1}{2}\right] \Pr\left[r \ge \frac{1}{2}\right] 
\end{equation*}   

it's sufficient to lower bound $\Pr\left[r \leq 100\right]$ and $\Pr\left[\|v\| \frac{\tilde{c}_1}{\|c\|} \geq t \mid r \leq 100\right]$. The former probability is easily seen to be lower bounded by a constant by a Chernoff bound. Consider the latter one next. It holds that
\begin{equation*}
\Pr\left[\|v\| \frac{\tilde{c}_1}{\|\tilde{c}\|} \geq t \mid r \leq 100\right] = \Pr\left[\tilde{c}_1 \geq \sqrt{\frac{t^2 \cdot r}{\|v\|^2-t^2}} \mid r \leq 100\right] \geq \Pr\left[\tilde{c}_1 \geq \sqrt{\frac{100 t^2}{\|v\|^2-t^2}}\right]
\end{equation*}
Denoting $\tilde{t} = \sqrt{\frac{100 t^2}{\|v\|^2-t^2}}$, by a well-known Gaussian tail bound it follows that
$$\Pr\left[\tilde{c}_1 \geq \tilde{t} \right] = e^{-O(d \tilde{t}^2)}\left(\frac{1}{\sqrt{d} \tilde{t}} - \left(\frac{1}{\sqrt{d} \tilde{t}}\right)^3\right) = e^{-O(t^2)} $$ 

where the last equality holds since $\|v\| = \Omega(\sqrt{d})$ and $t = \omega(1)$. 
\end{proof} 

\begin{lem} If $c \sim \mathcal{C}^d$, $v \in \mathbb{R}^d$ is a vector with $\|v\| = \Theta(\sqrt{d})$ and $t = \omega(1)$, the random variable $z = \langle v,c \rangle$ satisfies 
$$\Exp\left[ \exp(z)\indicator{[t,+\infty]}(z)\right]  = \exp(-\Omega(t^2)) + \exp(-\Omega(d)) $$
\label{l:abelspherical} 
\end{lem}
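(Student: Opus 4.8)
The plan is to write $\Exp\!\left[\exp(z)\indicator{[t,+\infty]}(z)\right]$ in a ``layer-cake'' form, control the bulk of the resulting integral with the tail bound of Lemma~\ref{l:sphericalangle} (which I use in the form $\Pr[z\ge s]\le\exp(-\Omega(s^2))$ for $s=\omega(1)$), and dispose of the extreme tail, where $z$ is close to its deterministic maximum, by a crude estimate. Write $M:=\|v\|=\Theta(\sqrt d)$. Since $\|c\|=1$ we always have $z=\langle v,c\rangle\in[-M,M]$, so the statement is trivial when $t\ge M$ and we may assume $t<M$. For $z\ge t$, $\exp(z)=\exp(t)+\int_t^z\exp(s)\,ds$, hence $\exp(z)\indicator{[t,+\infty]}(z)=\exp(t)\indicator{[t,+\infty]}(z)+\int_t^{\infty}\exp(s)\indicator{[s,+\infty]}(z)\,ds$; taking expectations and using Tonelli (all terms nonnegative), together with $\Pr[z\ge s]=0$ for $s>M$, gives the identity
\begin{equation*}
\Exp\left[\exp(z)\indicator{[t,+\infty]}(z)\right]=\exp(t)\Pr[z\ge t]+\int_{t}^{M}\exp(s)\Pr[z\ge s]\,ds .
\end{equation*}

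Next I would fix the constant $c_0>0$ from Lemma~\ref{l:sphericalangle} so that $\Pr[z\ge s]\le\exp(-c_0 s^2)$ for every $s$ with $s=\omega(1)$ and $s\le M/2$; here it matters that $\|v\|=\Theta(\sqrt d)$, so that $\|v\|^2-s^2=\Theta(d)$ uniformly on this range and the exponent in Lemma~\ref{l:sphericalangle} does not degrade. Now split the integral at $M/2$. On $[M/2,M]$ bound crudely: $\Pr[z\ge s]\le\Pr[z\ge M/2]\le\exp(-c_0 M^2/4)=\exp(-\Omega(d))$, while $\exp(s)\le\exp(M)=\exp(O(\sqrt d))$ and the interval has length $\le M/2$, so this piece is at most $M\exp(M)\exp(-\Omega(d))=\exp(-\Omega(d))$ since $\sqrt d=o(d)$. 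On $[t,M/2]$ (nonempty only if $t<M/2$) use $\Pr[z\ge s]\le\exp(-c_0 s^2)$; since $t=\omega(1)$, we have $c_0 s^2/2\ge s$ for all $s\ge t$ once $t$ is large enough, so $\exp(s)\exp(-c_0 s^2)\le\exp(-c_0 s^2/2)$, and a standard Gaussian-tail estimate gives $\int_{t}^{\infty}\exp(-c_0 s^2/2)\,ds\le\frac{1}{c_0 t}\exp(-c_0 t^2/2)=\exp(-\Omega(t^2))$. The same absorption of the linear term shows $\exp(t)\Pr[z\ge t]\le\exp(t-c_0 t^2)=\exp(-\Omega(t^2))$ when $t\le M/2$, whereas when $M/2<t<M$ one has $t^2=\Omega(d)$ and $\exp(t)\Pr[z\ge t]\le\exp(M)\exp(-c_0 t^2)=\exp(-\Omega(d))$. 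Summing the pieces yields $\Exp[\exp(z)\indicator{[t,+\infty]}(z)]\le\exp(-\Omega(t^2))+\exp(-\Omega(d))$, which is the claim.

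The only genuinely delicate point is the uniformity of the exponent $c_0$ over the whole range of thresholds $s$ against which we integrate. If $\|v\|$ were allowed to be much larger than $\sqrt d$, then for $s$ near $\|v\|$ the bound $\Pr[z\ge s]\le\exp(-c_0 s^2)$ would weaken, and, worse, the crude factor $\exp(\|v\|)$ coming from the extreme tail would no longer be $\exp(o(d))$ and could overwhelm the $\exp(-\Omega(d))$ gain. Cutting the integral at $\|v\|/2$ is exactly the device that keeps $\|v\|^2-s^2=\Theta(d)$ on the ``good'' part and confines the ``bad'' part to an interval of length $O(\sqrt d)$ on which even the worst-case value $\exp(\|v\|)$ of $\exp(z)$ is harmless; the remainder is a one-line Gaussian-tail computation plus the observation that, because $t=\omega(1)$, the linear term $s$ in $s-c_0 s^2$ is dominated by a constant fraction of $c_0 s^2$.
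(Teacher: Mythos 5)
Your proof is correct and is essentially the paper's argument in a different parametrization: the paper applies the layer-cake formula $\Exp[X]=\int_0^\infty\Pr[X\ge u]\,du$ to $X=\exp(z)\indicator{[t,+\infty]}(z)$ and then substitutes $\tilde u=\log u$, which produces exactly your integral $\int_t^{\|v\|}\exp(s)\Pr[z\ge s]\,ds$, and both arguments then combine a Gaussian-type tail bound for the spherical measure with a crude $\exp(O(\sqrt d))\cdot\exp(-\Omega(d))$ estimate for the regime where $z$ is near its deterministic maximum $\|v\|$. One small citation caveat: you invoke Lemma~\ref{l:sphericalangle} in the upper-bound form $\Pr[z\ge s]\le\exp(-\Omega(s^2))$ although its statement reads $e^{-O(t^2)}$; the upper bound you need is true (and, as you note, uniform for $s\le\|v\|/2$ after absorbing the $\exp(-\Omega(d))$ contribution from the event that $r=\sum_{i\ge 2}\tilde c_i^2$ is atypically small) and is precisely what the paper re-derives inside its own proof via the $\chi^2$ concentration for $r$, so this is a labeling mismatch rather than a gap.
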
 
\begin{proof}

Similarly as in Lemma~\ref{l:sphericalangle}, if $c=(c_1, c_2, \dots, c_d) \sim \mathcal{C}^d$, $c$ is in distribution equal to $\left( \frac{\tilde{c}_1}{\|\tilde{c}\|}, \frac{\tilde{c}_2}{\|\tilde{c}\|}, \dots, \frac{\tilde{c}_d}{\|\tilde{c}\|} \right)$ where the $\tilde{c}_i$ are i.i.d. samples from a univariate Gaussian with mean 0 and variance $\frac{1}{d}$. Again, by spherical symmetry, we may assume $v = (\|v\|, 0, \dots, 0)$. Let's introduce the random variable $r = \sum_{i=2}^d \tilde{c}^2_i$. Then,
for an arbitrary $u > 1$, some algebraic manipulation shows 
$$ \Pr\left[ \exp\left(\langle v,c \rangle\right)\indicator{[t,+\infty]}(\langle v, c\rangle)\geq u\right] = \Pr\left[ \exp\left(\langle v,c \rangle\right) \geq u \wedge \langle v, c\rangle \geq t \right] = $$ 
\begin{equation} 
\Pr\left[\exp\left(\|v\| \frac{\tilde{c}_1}{\|\tilde{c}\|}\right) \geq u \wedge \|v\| \frac{\tilde{c}_1}{\|\tilde{c}\|} \geq u \right] =  \Pr\left[\tilde{c}_1 = \max\left(\sqrt{\frac{\tilde{u}^2 r}{\|v\|^2 - \tilde{u}^2}}, \sqrt{\frac{t^2 r}{\|v\|^2 - t^2}}\right) \right] 
\label{eq:abelexpr1}
\end{equation}  
where we denote $\tilde{u} = \log u$. Since $\tilde{c}_1$ is a mean 0 univariate Gaussian with variance $\frac{1}{d}$, and $\|v\| = \Omega(\sqrt{d})$ we have $\forall x \in \mathbb{R}$ 
$$\Pr\left[\tilde{c}_1 \geq \sqrt{\frac{x^2 r}{\|v\|^2 - u^2}} \right] = O\left(e^{-\Omega(x^2 r)}\right)$$ 
Next, we show that $r$ is lower bounded by a constant with probability $1-\exp(-\Omega(d))$. Indeed, $r$ is in distribution equal to $\frac{1}{d} \chi^2_{d-1}$, where $\chi^2_k$ is a Chi-squared distribution with $k$ degrees of freedom. Standard concentration bounds \cite{laurent2000adaptive} imply that $\forall \xi \geq 0, \Pr[r-1 \leq -2\sqrt{\frac{\xi}{d}}] \leq \exp(-\xi)$. Taking $\xi = \alpha d$ for $\alpha$ a constant implies that with probability $1-\exp(-\Omega(d))$,  $r \geq M$ for some constant $M$. 
We can now rewrite
$$\Pr\left[\tilde{c}_1 \geq \sqrt{\frac{x^2 r}{\|v\|^2 - x^2}} \right] =  $$
$$ \Pr\left[\tilde{c}_1 \geq \sqrt{\frac{x^2 r}{\|v\|^2 - x^2}}  \mid r \geq M\right] \Pr[r \geq M] + \Pr\left[\tilde{c}_1 \geq \sqrt{\frac{x^2 r}{\|v\|^2 - x^2}} \mid r \leq M\right] \Pr[r \leq M]$$  
The first term is clearly bounded by $ e^{-\Omega(x^2)}$ and the second by $\exp(-\Omega(d))$. 
Therefore, 
\begin{equation} 
\Pr\left[\tilde{c}_1 \geq \sqrt{\frac{x^2 r}{\|v\|^2 - x^2}} \right] = O\left(\max\left(\exp\left(-\Omega\left(x^2\right)\right), \exp\left(-\Omega\left(d\right)\right)\right)\right) 
\label{eq:abelexpr2}
\end{equation} 
Putting \ref{eq:abelexpr1} and \ref{eq:abelexpr2} together, we get that 
\begin{equation} 
\Pr\left[ \exp\left(\langle v,c \rangle\right)\indicator{[t,+\infty]}(\langle v, c\rangle)\geq u\right] = O\left(\max\left(\exp\left(-\Omega\left(\min\left(d, \left(\max\left(\tilde{u},t\right)\right)^2\right)\right)\right)\right)\right) 
\label{eq:abelexpr3}
\end{equation}
(where again, we denote $\tilde{u} = \log u$) 

For any random variable $X$ which has non-negative support, it's easy to check that 
$$\Exp[X] = \int_{0}^{\infty} \Pr[X \geq x] dx $$ 
Hence, 
$$\Exp\left[ \exp(z)\indicator{[t,+\infty]}(z)\right] = \int_{0}^{\infty} \Pr\left[\exp(z)\indicator{[t,+\infty]}(z) \geq u \right] du = \int_{0}^{\exp(\|v\|)} \Pr\left[\exp(z)\indicator{[t,+\infty]}(z) \geq u \right] du $$
To bound this integral, we split into the following two cases: 
\begin{itemize} 
\item Case $t^2 \geq d$: $\max\left(\tilde{u},t\right) \geq t$, so $\min\left(d, \left(\max\left(\tilde{u},t\right)\right)^2\right) = d$. Hence, \ref{eq:abelexpr3} implies 
$$\Exp\left[ \exp(z)\indicator{[t,+\infty]}(z)\right] = \exp(\|v\|)\exp(-\Omega(d)) = \exp(-\Omega(d))  $$
where the last inequality follows since $\|v\| = O(\sqrt{d})$. 
\item Case $t^2 < d$: In the second case, we will split the integral into two portions: 
$u \in [0, \exp(t)]$ and $u \in [\exp(t), \exp(\|v\|)]$. 

When $u \in [0, \exp(t)]$, $\max\left(\tilde{u},t\right)= t$, so $\min(d,\left(\max\left(\tilde{u},t\right)\right)^2) = t^2$. Hence, 
$$ \int_{0}^{\exp(t)} \Pr\left[\exp(z)\indicator{[t,+\infty]}(z) \geq u \right] du \leq \exp(t) \exp(-\Omega(t^2)) = -\exp(\Omega(t^2))$$ 

When $u \in [\exp(t), \exp(\|v\|)]$, $\max\left(\tilde{u},t\right)= \tilde{u}$. 
But $\tilde{u} \leq \log(\exp(\|v\|)) = O(\sqrt{d})$, so $\min(d,\left(\max\left(\tilde{u},t\right)\right)^2) = \tilde{u}$. Hence, 
$$ \int_{\exp(t)}^{\exp(\|v\|)} \Pr\left[\exp(z)\indicator{[t,+\infty]}(z) \geq u \right] du \leq \int_{\exp(t)}^{\exp(\|v\|)} \exp(-(\log(u))^2) du$$
Making the change of variable $\tilde{u} = \log(u)$, the we can rewrite the last integral as
$$ \int_{t}^{\|v\|} \exp(-\tilde{u}^2) \exp(\tilde{u}) d\tilde{u} = O(\exp(-t^2)) $$ 
where the last inequality is the usual Gaussian tail bound. 
\end{itemize} 
In either case, we get that 
$$\int_{0}^{\exp(\|v\|)} \Pr\left[\exp(z)\indicator{[t,+\infty]}(z) \geq u \right] du =  \exp(-\Omega(t^2)) + \exp(-\Omega(d))) $$ 
which is what we want.
\end{proof} 

As a corollary to the above lemma, we get the following: 

\begin{cor} If $c \sim \mathcal{C}^d$, $v \in \mathbb{R}^d$ is a vector with $\|v\| = \Theta(\sqrt{d})$ and 
$t = \Omega(\log^{.9}n)$ then  
$$\Exp\left[ \exp(z)\indicator{[t,+\infty]}(z)\right]  = \exp(-\Omega(\log^{1.8}n)) $$
\label{c:abelspherical} 
\end{cor}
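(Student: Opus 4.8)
The plan is to derive Corollary~\ref{c:abelspherical} as an immediate specialization of Lemma~\ref{l:abelspherical}, so the only real work is a one-line substitution plus a remark on which exponent dominates. First I would check the hypotheses: the corollary assumes exactly $c\sim\mathcal{C}^d$ and $\|v\|=\Theta(\sqrt d)$, and the assumption $t=\Omega(\log^{.9}n)$ in particular gives $t=\omega(1)$, so Lemma~\ref{l:abelspherical} applies verbatim and yields
\[
\Exp\left[\exp(z)\indicator{[t,+\infty]}(z)\right]=\exp(-\Omega(t^2))+\exp(-\Omega(d)).
\]

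The second step is purely arithmetic. Squaring the hypothesis gives $t^2=\Omega(\log^{1.8}n)$, hence $\exp(-\Omega(t^2))=\exp(-\Omega(\log^{1.8}n))$, which is already the target order. It then remains to absorb the $\exp(-\Omega(d))$ term. In the setting where this corollary is invoked inside the proof of Theorem~\ref{thm:main} (after the reduction to word vectors $v_w$ with $\|v_w\|=\Omega(\sqrt d)$), the low-dimensional case $\sqrt d=o(\log^2 n)$ is disposed of separately by the elementary bound $\exp(\langle v_w,c\rangle)\le\exp(\|v_w\|)=\exp(O(\sqrt d))$ together with $\Pr[\overline{\mathcal{F}}]=\exp(-\Omega(\log^2 n))$; so we may assume $d=\Omega(\log^{1.8}n)$, and then $\exp(-\Omega(d))=\exp(-\Omega(\log^{1.8}n))$ as well. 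Summing the two contributions gives exactly $\exp(-\Omega(\log^{1.8}n))$, as claimed.

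If one wanted a version of the corollary free of any ambient lower bound on $d$, I would instead read off the sharper exponent from the case split already present in the proof of Lemma~\ref{l:abelspherical}: the case $t^2<d$ there contributes only $\exp(-\Omega(t^2))$ with no $d$-dependent term, and the case $t^2\ge d$ contributes $\exp(-\Omega(d))$, so the bound is really $\exp(-\Omega(\min(t^2,d)))$; plugging in $t=\Omega(\log^{.9}n)$ shows the corollary is accurate precisely when $d=\Omega(\log^{1.8}n)$, matching the regime in which it is used. I do not expect any genuine obstacle here: all the substantive ingredients — the continuous Abel-type inequality, the reduction from the uniform distribution on the sphere to i.i.d.\ Gaussian coordinates (Lemma~\ref{l:sphericalangle}), and the Gaussian tail estimates — are already established in Lemmas~\ref{l:sphericalangle} and~\ref{l:abelspherical}, and the corollary amounts to bookkeeping about which of the two exponents dominates at the polylogarithmic scale.
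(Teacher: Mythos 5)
Your main line---invoke Lemma~\ref{l:abelspherical} to get $\exp(-\Omega(t^2))+\exp(-\Omega(d))$ and then substitute $t^2=\Omega(\log^{1.8}n)$---is exactly the paper's argument, but only for its second case, $d=\Omega(\log^4 n)$. The divergence, and the gap, is in how you dispose of the residual $\exp(-\Omega(d))$ term. The corollary is stated, and is applied in the appendix proof of Theorem~\ref{thm:main}, with no lower bound on $d$, so ``we may assume $d=\Omega(\log^{1.8}n)$ because the small-$d$ case is handled at the call site'' is not available: in the appendix the small-$d$ case is \emph{not} handled separately at the call site---that two-regime discussion appears only in the informal sketch in Section~2, while the formal proof simply writes down $t=\Omega(\log^{.9}n)$ and cites Corollary~\ref{c:abelspherical}. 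The case split on $d$ therefore has to live inside the corollary's proof, which is what the paper does: for $d=o(\log^{4}n)$ it uses the pointwise bound $\exp(z)\le\exp(\|v\|)=\exp(O(\sqrt d))$ multiplied by the tail probability $\Pr[z\ge t]$ from Lemma~\ref{l:sphericalangle}, and for $d=\Omega(\log^4 n)$ it argues as you do.

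Your closing remark that the corollary ``is accurate precisely when $d=\Omega(\log^{1.8}n)$'' also misreads your own $\exp(-\Omega(\min(t^2,d)))$ bound: that is merely an upper bound, and the statement remains true for small $d$ because the hypothesis $\|v\|=\Theta(\sqrt d)$ then forces $z\le\|v\|=O(\sqrt d)\ll t$, so $\indicator{[t,+\infty]}(z)$ vanishes almost surely and the left-hand side is essentially $0$; alternatively the paper's pointwise bound $\exp(O(\sqrt d))\cdot\Pr[z\ge t]$ covers this regime. Adding either of these one-line arguments for $d$ below the threshold closes the gap, after which your proof is complete and coincides with the paper's.
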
 
\begin{proof}
We claim the proof is trivial if $d = o(\log^{4}n)$. Indeed, in this case, $\exp(\langle v,c\rangle) \le \exp(\|v\|) = \exp(O(\sqrt{d}))$. Hence, 
$$\Exp\left[ \exp(z)\indicator{[t,+\infty]}(z)\right]  = \exp(O(\sqrt{d})) \Exp[\indicator{[t,+\infty]}(z)] = \exp(O(\sqrt{d})) \Pr[z \geq t] $$
Since by Lemma~\ref{l:sphericalangle}, $\Pr[z \geq t] \le \exp(-\Omega(\log^2 n)$, we get
$$\Exp\left[ \exp(z)\indicator{[t,+\infty]}(z)\right] = \exp(O(\sqrt{d}) - \Omega(\log^{2}n))  = \exp(-\Omega(\log^{1.8}n))$$ 
as we wanted. 

So, we may, without loss of generality assume that $d = \Omega(\log^{4}n)$. In this case, Lemma~\ref{l:abelspherical} implies  
$$\Exp\left[ \exp(z)\indicator{[t,+\infty]}(z)\right]  = \exp(-\log^{1.8}n) + \exp(-\Omega(d)))  = \exp(-\log^{1.8}n)$$ 
where the last inequality holds because $d = \Omega(\log^{4}n)$ and $t^2 = \Omega(\log^{.9}n)$, so we get the claim we wanted.   

\end{proof} 
%
%
%
%

%

\begin{lem}[Continuous Abel's Inequality]\label{lem:abel}
	Let $0\le r(x)\le 1$ be a function such that such that $\Exp[r(x)] = \rho$. Moreover, suppose increasing function $u(x)$ satisfies that $\Exp[|u(x)|] < \infty$. Let $t$ be the real number  such that $\Exp[\indicator{[t,+\infty]}] = \rho$. 
	Then we have 
		\begin{equation}
		\Exp[u(x)r(x)]\le \Exp[u(x)\indicator{[t,+\infty]}]
		\end{equation}
\end{lem}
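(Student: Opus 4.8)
The plan is to prove this by a rearrangement (\textquotedblleft bathtub\textquotedblright) argument: among all weight functions bounded in $[0,1]$ with a prescribed mean $\rho$, the upper-tail indicator $\indicator{[t,+\infty]}$ is the one maximizing correlation with any increasing $u$. Concretely, I would introduce $g(x) = \indicator{[t,+\infty]}(x) - r(x)$ and first record that $\Exp[g(x)] = \rho - \rho = 0$, which is exactly the defining property of the threshold $t$.

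The heart of the argument is the pointwise inequality $g(x)\bigl(u(x) - u(t)\bigr) \ge 0$ for every $x$. Indeed, when $x \ge t$ we have $g(x) = 1 - r(x) \ge 0$ and $u(x) \ge u(t)$ since $u$ is increasing, so the product of the two factors is nonnegative; when $x < t$ we have $g(x) = -r(x) \le 0$ and $u(x) \le u(t)$, so again the product is nonnegative. Taking expectations and expanding gives $0 \le \Exp[g(x)(u(x)-u(t))] = \Exp[g(x)u(x)] - u(t)\Exp[g(x)] = \Exp[g(x)u(x)]$, where splitting the expectation into the two pieces is legitimate because $|g(x)| \le 1$ forces $\Exp[|g(x)u(x)|] \le \Exp[|u(x)|] < \infty$ and $u(t)$ is a finite constant. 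Substituting $g = \indicator{[t,+\infty]} - r$ back in yields $\Exp[u(x)\indicator{[t,+\infty]}(x)] - \Exp[u(x)r(x)] \ge 0$, which is the claim.

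There is no real obstacle here beyond bookkeeping; the entire content is the sign analysis of $g(x)\bigl(u(x)-u(t)\bigr)$. The only things I would be careful to note are: (i) the hypothesis presupposes existence of a real $t$ with $\Exp[\indicator{[t,+\infty]}] = \rho$, which in the intended application holds because $x = \langle v,c\rangle$ has a continuous distribution, so $t \mapsto \Pr[x \ge t]$ is continuous and attains the value $\rho$; and (ii) all integrals in play are finite because $r$ and $\indicator{[t,+\infty]}$ are bounded by $1$ and $\Exp[|u|] < \infty$. (If one insisted on allowing an atom at $t$, the fix is to replace $\indicator{[t,+\infty]}$ by a threshold randomized exactly at $t$ so the mean is still $\rho$; the same pointwise inequality and hence the same conclusion go through.)
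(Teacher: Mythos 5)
Your proof is correct, and it takes a genuinely different route from the paper's. The paper argues via integration by parts: it sets $G(z) = \int_{z}^{\infty} f(x)r(x)\,dx$ and $H(z) = \int_{z}^{\infty} f(x)\indicator{[t,+\infty]}(x)\,dx$, shows the pointwise domination $G(z)\le H(z)$ (trivially for $z\ge t$ since $r\le 1$, and for $z<t$ because $G(z)\le \Exp[r]=\rho=H(z)$), and then converts $\Exp[u(x)r(x)]=-\int u\,dG$ into $\int G\,u'$ so that $G\le H$ and $u'\ge 0$ finish the job. Your argument instead works with $g=\indicator{[t,+\infty]}-r$ and the single pointwise inequality $g(x)(u(x)-u(t))\ge 0$, then takes expectations using $\Exp[g]=0$. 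What your version buys is generality and robustness: it needs no density $f$, no differentiability of $u$ (the paper's proof uses $u'$), and no analysis of the boundary terms $u(x)G(x)$ at $\pm\infty$, which the paper handles somewhat tersely. You also correctly flag the one hypothesis both proofs quietly rely on, namely the existence of a real $t$ with $\Pr[x\ge t]=\rho$ (automatic here since $\langle v,c\rangle$ has a continuous distribution), and note the standard randomized-threshold fix if there were an atom. Nothing is missing.
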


\begin{proof}
Let $G(z) = \int_{z}^{\infty} f(x)r(x) dx$, and $H(z) = \int_{z}^{\infty} f(x)\indicator{[t,+\infty]}(x)dx$. Then we have that $G(z)\le H(z)$ for all $z$. Indeed, for $z\ge t$, this is trivial since $r(z)\le 1$. For $z\le t$, we have  $H(z) = \Exp[\indicator{[t,+\infty]}] = \rho = \Exp[r(x)]\ge \int_{z}^{\infty} f(x)r(x) dx.$ Then by integration by parts we have, 
	\begin{align*}
				\int_{-\infty}^{\infty} u(x)f(x)r(x)dx& 
				=- \int_{-\infty}^{\infty} u(x) dG \\
&				= -u(x)G(x)\mid_{-\infty}^{\infty} + \int_{-\infty}^{+\infty} G(x)u'(x)dx \\
&				\le   \int_{-\infty}^{+\infty} H(x)u'(x)dx \\
& = \int_{-\infty}^{\infty} u(x)f(x)\indicator{[t,+\infty]}(x)dx, 
		\end{align*}
		where at the third line we use the fact that $u(x)G(x)\rightarrow 0$ as $x\rightarrow \infty$ and that $u'(x)\ge 0$, and at the last line we integrate by parts again. 
\end{proof}

\begin{lem}\label{lem:helper1}
	Let $v\in \R^d$ be a fixed vector with norm $\|v\| \le \kappa \sqrt{d}$ for absolute constant $\kappa$. Then for random variable $c$ with uniform distribution over the sphere, we have that 
	\begin{equation}
	\log \Exp[\exp(\inner{v}{d})] = \|v\|^2/{2d} \pm \epsilon
	\end{equation}
	where $\epsilon = \widetilde{O}(\frac{1}{d})$. 
\end{lem}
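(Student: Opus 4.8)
The plan is to use the rotational invariance of the uniform distribution on the sphere to reduce to a one-dimensional integral, then compare the power series of $\Exp[\exp(\langle v,c\rangle)]$ against that of $\exp(\|v\|^2/(2d))$ term by term. First, since $c$ is uniform on the unit sphere in $\R^d$, the scalar $\langle v,c\rangle$ has the same law as $\|v\|\,c_1$, where $c_1$ is the first coordinate of a uniformly random unit vector; so it suffices to understand the moments of $c_1$. Writing $c = g/\|g\|$ with $g\sim\mathcal N(0,I_d)$ shows $c_1^2\sim\mathrm{Beta}(1/2,(d-1)/2)$, hence all odd moments of $c_1$ vanish and, for $k\ge 1$, $\Exp[c_1^{2k}] = (2k-1)!!/\big(d(d+2)\cdots(d+2k-2)\big)$. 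Since $\Exp[\exp(\|v\|\,|c_1|)]\le \Exp[\exp(\langle v,c\rangle)] + \Exp[\exp(-\langle v,c\rangle)]<\infty$, Fubini justifies expanding term by term, and using $(2k-1)!!/(2k)! = 1/(2^k k!)$ this gives
$$\Exp[\exp(\langle v,c\rangle)] = \sum_{k\ge 0}\frac{(\sigma^2/2)^k}{k!}\,a_k,\qquad \sigma^2 := \frac{\|v\|^2}{d},\qquad a_k := \prod_{j=0}^{k-1}\frac{d}{d+2j}\ \ (a_0=1).$$

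Since $\exp(\sigma^2/2) = \sum_{k\ge 0}(\sigma^2/2)^k/k!$, the difference of the two series is $\sum_{k\ge 1}\frac{(\sigma^2/2)^k}{k!}(a_k-1)$. Each factor lies in $(0,1]$, so $0<a_k\le 1$; moreover $a_k = \exp\!\big(-\sum_{j=0}^{k-1}\log(1+2j/d)\big)$ with $0\le \sum_{j=0}^{k-1}\log(1+2j/d)\le \sum_{j=0}^{k-1}2j/d\le k^2/d$, whence $0\le 1-a_k\le k^2/d$ for \emph{every} $k$. Using $\sum_{k\ge 0}x^k k^2/k! = (x^2+x)e^x$, this yields
$$\Big|\Exp[\exp(\langle v,c\rangle)] - e^{\sigma^2/2}\Big| \le \frac1d\sum_{k\ge 1}\frac{(\sigma^2/2)^k k^2}{k!} = \frac1d\Big(\frac{\sigma^4}{4}+\frac{\sigma^2}{2}\Big)e^{\sigma^2/2}.$$
Because $\|v\|\le\kappa\sqrt d$ forces $\sigma^2\le\kappa^2 = O(1)$, the right-hand side is at most $(C/d)\,e^{\sigma^2/2}$ for $C=C(\kappa)$, so $\Exp[\exp(\langle v,c\rangle)] = (1\pm C/d)\exp(\|v\|^2/(2d))$, and taking logarithms gives $\log\Exp[\exp(\langle v,c\rangle)] = \|v\|^2/(2d)\pm O(1/d)$, which implies the stated bound with $\epsilon=\widetilde O(1/d)$.

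The main obstacle is the uniform estimate $1-a_k\le k^2/d$ in the second paragraph: the crude per-term expansion of $\prod_{j}(1+2j/d)^{-1}$ only controls $1-a_k$ when $2j/d\le 1$, i.e.\ for $k\lesssim\sqrt d$, and one must instead pass through $\log(1+x)\le x$ to get a bound valid for all $k$; once that is in place the Poisson-type series $\sum_k(\sigma^2/2)^k k^2/k!$ absorbs it with an $O(1)$ constant (so no separate tail argument for large $k$ is even needed). The remaining items---rotational reduction, the Beta-moment identity, and the Fubini justification---are routine.
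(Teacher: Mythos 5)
Your proof is correct, and it takes a genuinely different route from the paper's. The paper writes $c = g/\|g\|$ for Gaussian $g$, isolates the first coordinate $x=g_1$ and the norm $y$ of the remaining coordinates, conditions on a high-probability event ($x \le 20\log d$, $y = \Theta(\sqrt d)$), Taylor-expands $rx/\sqrt{x^2+y^2}$ around $rx/y$, applies the Gaussian moment generating function conditioned on $y$, and finally controls the fluctuation of $y^2$ about $d-1$ via the $\chi^2$ concentration and another expansion; the conditioning steps are what introduce the polylogarithmic factors in its $\widetilde O(1/d)$ error. You instead compute the distribution of $\langle v,c\rangle$ exactly --- reducing by rotational invariance to $\|v\|c_1$ with $c_1^2 \sim \mathrm{Beta}(1/2,(d-1)/2)$ --- and compare the resulting power series $\sum_k \frac{(\sigma^2/2)^k}{k!}a_k$ term by term against $e^{\sigma^2/2}$, using the uniform bound $0 \le 1-a_k \le k^2/d$ obtained via $\log(1+x)\le x$ (correctly avoiding the naive product expansion that only works for $k \lesssim \sqrt d$). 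Your argument is exact rather than conditional, needs no tail-event bookkeeping, and actually yields the stronger clean bound $\epsilon = O(1/d)$ with no logarithmic factors; the paper's route is more robust in the sense that the same conditioning machinery (Propositions on conditional expectations over high-probability events) is reused elsewhere in the appendix, but for this lemma in isolation your computation is the tighter and more transparent one. All the individual steps --- the Beta moment identity, the identity $(2k-1)!!/(2k)! = 1/(2^k k!)$, the Fubini interchange (justified since $|c_1|\le 1$), and the Poisson-type sum $\sum_k x^k k^2/k! = (x^2+x)e^x$ --- check out.
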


\begin{proof}
	Let $g\in \mathcal{N}(0,I)$, then $g/\|g\|$ has the same distribution as $c$. Let $r = \|v\|$. Since $c$ is spherically symmetric, we could, we can assume without loss of generality that $v = (r,0,\dots,0)$. Let $x= g_1$ and $y = \sqrt{g_2^2+\dots+g_d^2}$. Therefore $x\in \mathcal{N}(0,1)$ and $y^2$ has $\chi^2$ distribution with mean $d-1$ and variance $O(d)$.  
	
	Let $\mathcal{F}$ be the event that $x\le 20 \log d$ and $1.5\sqrt{d}\ge y\ge 0.5\sqrt{d}$. Note that the $\Pr[\mathcal F] \ge 1- \exp(-\Omega(\log^{1.8}(d))$. By Proposition~\ref{prop:helper4}, we have that 
	$ \Exp[\exp(\inner{v}{c})]  = \Exp[\exp(\inner{v}{c})\mid \mathcal{F}]\cdot (1\pm \Omega(-\log^{1.8}d))$. 
	
	Conditioned on event $\mathcal F$, we have 
	\begin{align}
		\Exp[\exp(\inner{v}{c})\mid \mathcal{F}] & = \Exp\left[\exp(\frac{rx}{\sqrt{x^2+y^2}})\mid \mathcal{F}\right]\nonumber\\
		& = \Exp\left[\exp(\frac{rx}{y} -\frac{rx^3}{y\sqrt{x^2+y^2}(y+\sqrt{x^2+y^2})} )\mid \mathcal{F}\right]\nonumber\\
		& = \Exp\left[\exp(\frac{rx}{y})\cdot \exp(\frac{rx^3}{y\sqrt{x^2+y^2}(y+\sqrt{x^2+y^2})} )\mid \mathcal{F}\right]\nonumber\\
		& = \Exp\left[\exp(\frac{rx}{y})\mid \mathcal F\right] \cdot (1\pm O(\frac{\log^{3} d}{d}) )\label{eqn:eqn17}
	\end{align}
	where we used the fact that $r\le \kappa \sqrt{d}$. Let $\mathcal{E}$ be the event that $1.5\sqrt{d}\ge y\ge 0.5\sqrt{d}$. By using Proposition~\ref{prop:helper3}, we have that 
	\begin{equation}
	\Exp[\exp(rx/y)\mid \mathcal F] =
	\Exp[\exp(rx/y)\mid \mathcal{E}] \pm \exp(-\Omega(\log^2(d))
	\end{equation} 
	Then let $z = y^2/(d-1)$ and $w = z-1$. Therefore $z$ has $\chi^2$ distribution with mean 1 and variance $1/(d-1)$, and $w$ has mean 0 and variance $1/(d-1)$. 
	\newcommand{\calF}{\mathcal F}
	\begin{align*}
		\Exp\left[\exp(\frac{rx}{y})\mid \mathcal{E}\right] &= \Exp[\Exp[\exp(rx/y)\mid y]\mid \mathcal{E}]= \Exp[\exp(r^2/y^2)\mid \mathcal{E}] \\
		&  = \Exp[\exp(r^2/(d-1) \cdot 1/z^2 )\mid \mathcal{E}]\\ 
		& = \Exp[\exp(r^2/(d-1) \cdot (1+ \frac{2w+w^2}{(1+w)^2}))\mid \mathcal E]\\ 
		& = \exp(r^2/(d-1)) \Exp[\exp( 1+ \frac{2w+w^2}{(1+w)^2}))\mid \mathcal{E}]\\ 
		& = \exp(r^2/(d-1)) \Exp[ 1+ 2w \pm O(w^2)\mid \mathcal{E}]\\ 
		& = \exp(r^2/(d-1)^2) (1\pm 1/d)
	\end{align*}
	
	where the second-to-last line uses the fact that conditioned on $1/2\ge \mathcal E$, $w\ge -1/2$ and therefore the Taylor expansion approximates the exponential accurately,  and 
	the last line uses the fact that $|\Exp[w\mid \mathcal{E}]| = O(1/d)$ and $\Exp[w^2 \mid \mathcal E] \le O(1/d)$. Combining the series of approximations above completes the proof. 
	
\end{proof}

We finally provide the proofs for a few helper propositions on conditional probabilities for high probability events used in the lemma above.  

\begin{proposition}\label{prop:helper3}
	Suppose $x\sim \mathcal{N}(0,\sigma^2)$ with $\sigma = O(1)$. Then for any event $\mathcal E$ with $\Pr[\mathcal E] = 1-O(-\Omega(\log^2 d))$, we have that $\Exp[\exp(x)] = \Exp[\exp(x)\mid \mathcal{E}] \pm \exp(-\Omega(\log^2(d))$. 
\end{proposition}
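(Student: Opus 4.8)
The plan is to reduce everything to one Cauchy--Schwarz estimate on the contribution of the bad event $\overline{\mathcal{E}}$. Write $\Exp[\exp(x)] = \Exp[\exp(x)\mathbf{1}_{\mathcal{E}}] + \Exp[\exp(x)\mathbf{1}_{\overline{\mathcal{E}}}]$ and recall $\Exp[\exp(x)\mid \mathcal{E}] = \Exp[\exp(x)\mathbf{1}_{\mathcal{E}}]/\Pr[\mathcal{E}]$. Thus the two quantities being compared differ only through (i) the missing term $\Exp[\exp(x)\mathbf{1}_{\overline{\mathcal{E}}}]$ and (ii) the normalization factor $1/\Pr[\mathcal{E}]$, and it suffices to show each of these perturbs the answer by at most $\exp(-\Omega(\log^2 d))$.

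First I would bound the tail term. By Cauchy--Schwarz, $\Exp[\exp(x)\mathbf{1}_{\overline{\mathcal{E}}}] \le \sqrt{\Exp[\exp(2x)]}\cdot\sqrt{\Pr[\overline{\mathcal{E}}]}$. Since $x\sim\mathcal{N}(0,\sigma^2)$ with $\sigma = O(1)$, the Gaussian moment generating function gives $\Exp[\exp(2x)] = \exp(2\sigma^2) = O(1)$, while by hypothesis $\Pr[\overline{\mathcal{E}}] = \exp(-\Omega(\log^2 d))$. Hence $\Exp[\exp(x)\mathbf{1}_{\overline{\mathcal{E}}}] = \exp(-\Omega(\log^2 d))$ (Cauchy--Schwarz merely halves the implied constant in the exponent, which is immaterial), and therefore $\Exp[\exp(x)\mathbf{1}_{\mathcal{E}}] = \Exp[\exp(x)] \pm \exp(-\Omega(\log^2 d))$.

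Next I would handle the normalization. We have $\Exp[\exp(x)] = \exp(\sigma^2/2) = \Theta(1)$, and $1/\Pr[\mathcal{E}] = 1/(1-\exp(-\Omega(\log^2 d))) = 1 + \exp(-\Omega(\log^2 d))$. Combining,
\begin{align*}
\Exp[\exp(x)\mid \mathcal{E}]
&= \frac{\Exp[\exp(x)] \pm \exp(-\Omega(\log^2 d))}{\Pr[\mathcal{E}]} \\
&= \bigl(\Exp[\exp(x)] \pm \exp(-\Omega(\log^2 d))\bigr)\bigl(1 + \exp(-\Omega(\log^2 d))\bigr),
\end{align*}
and expanding the product---using $\Exp[\exp(x)] = O(1)$ to absorb the cross term into a single $\exp(-\Omega(\log^2 d))$---gives $\Exp[\exp(x)\mid\mathcal{E}] = \Exp[\exp(x)] \pm \exp(-\Omega(\log^2 d))$, which is the claim.

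There is essentially no obstacle here; the only point requiring a little care is bookkeeping of the constants hidden in $\Omega(\cdot)$, and in particular that the Cauchy--Schwarz step (or, alternatively, a Hölder step, or a direct split of the Gaussian integral over $\{x\le \log^{0.9}d\}$ and its complement) allows the implied constant in the conclusion to be smaller than the one in the hypothesis. All of these routes yield the same $\exp(-\Omega(\log^2 d))$ bound.
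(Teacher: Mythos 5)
Your proof is correct, and it takes a genuinely lighter route than the paper's. The paper bounds the bad-event contribution $\Exp[\exp(x)\mathbf{1}_{\overline{\mathcal{E}}}]$ by conditioning to define $r(x)=\Exp[\mathbf{1}_{\overline{\mathcal{E}}}\mid x]$, invoking the continuous Abel's inequality (Lemma~\ref{lem:abel}) to replace $r$ by the worst-case indicator $\mathbf{1}_{[t,\infty]}$ with $t=\Omega(\log d)$, and then explicitly evaluating the truncated Gaussian integral $\int_t^\infty e^x e^{-x^2/\sigma^2}\,dx$; your single Cauchy--Schwarz step $\Exp[\exp(x)\mathbf{1}_{\overline{\mathcal{E}}}]\le \Exp[\exp(2x)]^{1/2}\Pr[\overline{\mathcal{E}}]^{1/2}$ replaces all of that, and you are right that halving the constant inside $\Omega(\cdot)$ is harmless for the stated conclusion. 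Your treatment of the normalization $1/\Pr[\mathcal{E}]$ matches the paper's lower-bound step in substance. What the paper's heavier machinery buys is reusability: the same Abel-plus-tail-bound template is what carries over to Proposition~\ref{prop:helper4}, where $z=\langle v,c\rangle$ with $\|v\|=\Theta(\sqrt{d})$ and $c$ uniform on the sphere, so $\Exp[\exp(2z)]$ can be as large as $\exp(O(\sqrt{d}))$ and Cauchy--Schwarz against $\Pr[\overline{\mathcal{E}}]^{1/2}=\exp(-\Omega(\log^2 d))$ no longer gives a useful bound when $d$ is large. For the present proposition, where $\sigma=O(1)$ keeps $\Exp[\exp(2x)]=O(1)$, your argument is complete and arguably preferable.
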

\begin{proof}
Let's denote by $\bar{\mathcal{E}}$ the complement of the event $\mathcal{E}$. We will consider the upper and lower bound separately. 
Since 
$$\Exp[\exp(x)] = \Exp[\exp(x) \mid \mathcal{E}] \Pr[\mathcal{E}] + \Exp[\exp(x) \mid \bar{\mathcal{E}}] \Pr[\bar{\mathcal{E}}] $$
we have that 
\begin{equation} \Exp[\exp(x)] \le \Exp[\exp(x) \mid \mathcal{E}]  + \Exp[\exp(x) \mid \bar{\mathcal{E}}] \Pr[\bar{\mathcal{E}}] \label{eq:lbh3} \end{equation} 
and 
\begin{equation} \Exp[\exp(x)] \ge \Exp[\exp(x) \mid \mathcal{E}] (1-\exp(-\Omega(\log^2 d))) \ge \Exp[\exp(x) \mid \mathcal{E}] - \Exp[\exp(x) \mid \mathcal{E}] \exp(-\Omega(\log^2 d))) \label{eq:uph3} \end{equation}

Consider the upper bound \eqref{eq:lbh3} first. To show the statement of the lemma, it suffices to bound $\Exp[\exp(x) \mid \bar{\mathcal{E}}] \Pr[\bar{\mathcal{E}}]$.

Working towards that, notice that 
$$\Exp[\exp(x)\mid \bar{\mathcal{E}}]\Pr[\bar{\mathcal{E}}]= \Exp[\exp(x)\mathbf{1}_{\bar{\mathcal {E}}}] = \Exp[\exp(x)\Exp[\mathbf{1}_{\bar{\mathcal {E}}}\vert x]] =  \Exp[\exp(x)r(x)]$$
if we denote $r(x) = \Exp[\mathbf{1}_{\bar{\mathcal{E}}}\vert x]$. We wish to upper bound $\Exp[\exp(x) r(x)]$. By Lemma \ref{lem:abel}, we have 
$$ \Exp[\exp(x) r(x)] \le \Exp[\exp(x) \mathbf{1}_{[t, \infty]}]$$   
where $t$ is such that $\Exp[\mathbf{1}_{[t, \infty]}] = \Exp[r(x)]$. However, since $\Exp[r(x)] = \Pr[\bar{\mathcal{E}}] = \exp(-\Omega(\log^2 d))$, it must be the case that $t = \Omega(\log d)$ by the standard Gaussian tail bound, and the assumption that $\sigma = O(1)$. 
In turn, this means 
$$\Exp[\exp(x) \mathbf{1}_{[t, \infty]}] \le \frac{1}{\sigma \sqrt{2 \pi}}\int_{t}^{\infty} e^{x}e^{-\frac{x^2}{\sigma^2}} dx = 
\frac{1}{\sigma \sqrt{2 \pi}}\int_{t}^{\infty} e^{-(\frac{x}{\sigma} - \frac{\sigma}{2})^2 + \frac{\sigma^2}{4}} dx = 
e^{\frac{\sigma^2}{4}}  \frac{1}{\sqrt{2 \pi}}\int_{t/\sigma}^{+\infty} e^{-(x' - \frac{\sigma}{2})^2} dx' $$ 
where the last equality follows from the change of variables $x = \sigma x'$. However, 
$$ \frac{1}{\sqrt{2 \pi}}\int_{t/\sigma}^{+\infty} e^{-(x' - \frac{\sigma}{2})^2} dx' $$ 
 is nothing more than $\Pr[x' > \frac{t}{\sigma}]$, where $x'$ is distributed like a univariate gaussian with mean $\frac{\sigma}{2}$ and variance 1. Bearing in mind that $\sigma = O(1)$
$$ e^{\frac{\sigma^2}{4}}  \frac{1}{\sqrt{2 \pi}}\int_{t/\sigma}^{+\infty} e^{-(x' - \frac{\sigma}{2})^2} dx' = \exp(-\Omega(t^2)) = \exp(-\Omega(\log^2d))$$ 
by the usual Gaussian tail bounds, which proves the lower bound we need. 

We proceed to consider the lower bound \ref{eq:uph3}. To show the statement of the lemma, we will bound $\Exp[\exp(x) \mid \mathcal{E}]$. Notice trivially that since $\exp(x) \ge 0$, 
$$\Exp[\exp(x) \mid \mathcal{E}] \le \frac{\Exp[\exp(x)]}{\Pr[\mathcal{E}]}  $$ 
Since $\Pr[\mathcal{E}] \ge 1 - \exp(\Omega(\log^2 d))$, $\frac{1}{\Pr[\mathcal{E}]} \le 1 + \exp(O(\log^2))$. So, it suffices to bound $\Exp[\exp(x)]$. However, 
$$\Exp[\exp(x)] = \frac{1}{\sigma \sqrt{2 \pi}}\int_{t = -\infty}^{+\infty} e^{x} e^{-\frac{x^2}{\sigma^2}} dx = \frac{1}{\sigma \sqrt{2 \pi}}\int_{t = -\infty}^{+\infty} e^{-(\frac{x}{\sigma} - \frac{\sigma}{2})^2 + \frac{\sigma^2}{4}} dx = \frac{1}{\sqrt{2 \pi}} \int_{t = -\infty}^{+\infty} e^{-(x' - \frac{\sigma}{2})^2 + \frac{\sigma^2}{4}} dx'$$
where the last equality follows from the same change of variables $x = \sigma x'$ as before. Since 
$ \int_{t = -\infty}^{+\infty} e^{-(x' - \frac{\sigma}{2})^2} dx' = \sqrt{2 \pi} $, we get  
$$\frac{1}{\sqrt{2 \pi}} \int_{t = -\infty}^{+\infty} e^{-(x' - \frac{\sigma}{2})^2 + \frac{\sigma^2}{4}} dx' = e^{\frac{\sigma^2}{4}} = O(1) $$ 
Putting together with the estimate of $\frac{1}{\Pr[\mathcal{E}]}$, we get that $\Exp[\exp(x) \mid \mathcal{E}] = O(1)$. Plugging this back in \ref{eq:uph3}, we get the desired upper bound. 
\end{proof} 
\begin{proposition}\label{prop:helper4}
	Suppose  $c \sim \mathcal{C}$ and $v$ is an arbitrary vector with $\|v\| = O(\sqrt{d})$. Then for any event $\mathcal E$ with $\Pr[\mathcal E]\ge 1-\exp(-\Omega(\log^2 d))$, we have that $\Exp[\exp(\inner{v}{c})] = \Exp[\exp(\inner{v}{c})\mid \mathcal{E}] \pm \exp(-\log^{1.8} d)$. 
\end{proposition}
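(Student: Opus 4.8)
The plan is to follow the template of the proof of Proposition~\ref{prop:helper3}, with the Gaussian tail computations replaced by two applications of Lemma~\ref{lem:helper1}. Write $z = \inner{v}{c}$ and split
\[
\Exp[\exp(z)] = \Exp[\exp(z)\mid \mathcal{E}]\,\Pr[\mathcal{E}] + \Exp[\exp(z)\mathbf{1}_{\overline{\mathcal{E}}}],
\]
where $\overline{\mathcal{E}}$ is the complement of $\mathcal{E}$, so $\Pr[\overline{\mathcal{E}}] \le \exp(-\Omega(\log^2 d))$. The whole argument reduces to two facts: the stray term $\Exp[\exp(z)\mathbf{1}_{\overline{\mathcal{E}}}]$ is at most $\exp(-\Omega(\log^2 d))$, and $\Exp[\exp(z)]$ itself is $O(1)$; everything else is bookkeeping with $\Pr[\mathcal{E}] = 1-\exp(-\Omega(\log^2 d))$.

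For the stray term I would first apply Cauchy--Schwarz to get $\Exp[\exp(z)\mathbf{1}_{\overline{\mathcal{E}}}] \le \Exp[\exp(2z)]^{1/2}\,\Pr[\overline{\mathcal{E}}]^{1/2}$. Since $\|2v\|_2 = O(\sqrt{d})$, Lemma~\ref{lem:helper1} applied to the vector $2v$ gives $\log \Exp[\exp(2z)] = \|2v\|_2^2/(2d) \pm \widetilde{O}(1/d) = O(1)$, hence $\Exp[\exp(2z)]^{1/2} = O(1)$; combined with $\Pr[\overline{\mathcal{E}}]^{1/2} \le \exp(-\Omega(\log^2 d))$ this yields $\Exp[\exp(z)\mathbf{1}_{\overline{\mathcal{E}}}] \le \exp(-\Omega(\log^2 d))$. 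The naive pointwise bound $\exp(z)\le \exp(\|v\|_2) = \exp(O(\sqrt{d}))$ is too crude here, because $\sqrt{d}$ need not be $o(\log^2 d)$; exploiting the finiteness of the second exponential moment (equivalently, that the near-Gaussian variable $\inner{v}{c}$ has variance $\|v\|_2^2/d = O(1)$) is what makes the term negligible. One could alternatively route through the spherical-tail machinery of Lemma~\ref{l:abelspherical}, but that requires $\|v\|_2 = \Theta(\sqrt{d})$, whereas here we only assume $\|v\|_2 = O(\sqrt{d})$, so Cauchy--Schwarz is the cleaner path.

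Finally I would assemble the two-sided estimate. Dropping the non-negative stray term gives $\Exp[\exp(z)\mid\mathcal{E}] \le \Exp[\exp(z)]/\Pr[\mathcal{E}] \le \Exp[\exp(z)]\bigl(1 + 2\exp(-\Omega(\log^2 d))\bigr)$ for $d$ large, and since $\Exp[\exp(z)] = O(1)$ by Lemma~\ref{lem:helper1} (now applied to $v$ itself) this is $\Exp[\exp(z)] + \exp(-\Omega(\log^2 d))$. In the other direction, using $\Pr[\mathcal{E}] \le 1$ in the decomposition together with the bound on the stray term gives $\Exp[\exp(z)\mid\mathcal{E}] \ge \Exp[\exp(z)] - \exp(-\Omega(\log^2 d))$. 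Combining the two inequalities yields $\bigl|\Exp[\exp(z)\mid\mathcal{E}] - \Exp[\exp(z)]\bigr| \le \exp(-\Omega(\log^2 d)) \le \exp(-\log^{1.8}d)$ for $d$ sufficiently large, which is exactly the assertion. The only genuine obstacle is the estimate of $\Exp[\exp(z)\mathbf{1}_{\overline{\mathcal{E}}}]$ discussed above — in particular, recognizing that a pointwise bound on $\exp(z)$ will not suffice and that one must use its second moment; the remaining steps are elementary.
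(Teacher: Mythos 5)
Your decomposition of $\Exp[\exp(z)]$ over $\mathcal{E}$ and $\overline{\mathcal{E}}$ and the final bookkeeping match the paper's proof, and controlling the stray term $\Exp[\exp(z)\mathbf{1}_{\overline{\mathcal{E}}}]$ via Cauchy--Schwarz and a second exponential moment would in principle work. The genuine gap is where you get that moment bound: you invoke Lemma~\ref{lem:helper1} (applied to $2v$ and to $v$), but the paper's proof of Lemma~\ref{lem:helper1} itself begins by citing Proposition~\ref{prop:helper4} to pass from $\Exp[\exp(\inner{v}{c})]$ to $\Exp[\exp(\inner{v}{c})\mid\mathcal{F}]$ for a high-probability event $\mathcal{F}$. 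The dependency order in the appendix is Lemmas~\ref{l:sphericalangle}, \ref{l:abelspherical}, \ref{lem:abel} $\rightarrow$ Proposition~\ref{prop:helper4} $\rightarrow$ Lemma~\ref{lem:helper1}, so your argument is circular as written. Your parenthetical remark that $\inner{v}{c}$ is ``near-Gaussian with variance $\|v\|^2/d=O(1)$'' is not a substitute: finite variance does not by itself give a finite exponential moment, and the sub-Gaussianity of linear functionals of the uniform spherical measure is precisely the kind of fact that needs its own proof at this point in the chain (it is true, e.g.\ via Levy concentration or the $\tilde{c}_1/\|\tilde{c}\|$ representation used in Lemma~\ref{l:sphericalangle}, but you have not supplied it).

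The fix is either to prove $\Exp[\exp(2\inner{v}{c})]=O(1)$ directly from the spherical representation (not routing through Lemma~\ref{lem:helper1}), or to follow the paper's route: split off the part where $\inner{v}{c}<0$ (bounded by $\Pr[\overline{\mathcal{E}}]$), inflate $v$ to $\alpha v$ with $\alpha>1$ so that $\|\alpha v\|=\Theta(\sqrt d)$ --- this is how the paper removes the obstacle you raised about Lemma~\ref{l:abelspherical} requiring a matching lower bound on $\|v\|$ --- and then apply the continuous Abel inequality (Lemma~\ref{lem:abel}) with $r(z)=\Exp[\mathbf{1}_{\overline{\mathcal{E}}}\mid z]$, together with Lemmas~\ref{l:sphericalangle} and~\ref{l:abelspherical}, to get $\Exp[\exp(z)r(z)]\le\exp(-\Omega(\log^{1.8}d))$. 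Your handling of the lower-bound direction via $\Exp[\exp(z)\mid\mathcal{E}]\le\Exp[\exp(z)]/\Pr[\mathcal{E}]$ is the same as the paper's and is fine once a non-circular bound on $\Exp[\exp(z)]$ is in place (the paper only needs and only proves $O(\exp(\log^{.9}d))$ there, not $O(1)$).
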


\begin{proof}[Proof of Proposition~\ref{prop:helper4}]
	Let $z = \inner{v}{c}$. We proceed similarly as in the proof of Proposition~\ref{prop:helper3}. We have
	$$ \Exp[\exp(z)] = \Exp[\exp(z)\mid \mathcal{E}]\Pr[\mathcal E] + \Exp[\exp(z)\mid \bar{\mathcal{E}}]\Pr[\bar{\mathcal E}]  $$ 
	and 
	\begin{equation} \Exp[\exp(z)] \le \Exp[\exp(z)\mid \mathcal{E}] + \Exp[\exp(z)\mid \bar{\mathcal{E}}]\Pr[\bar{\mathcal E}] \label{eq:uph4} \end{equation}
	and 
	\begin{equation} \Exp[\exp(z)] \ge \Exp[\exp(z)\mid \mathcal{E}]\Pr[\mathcal E] = \Exp[\exp(z)\mid \mathcal{E}] - \Exp[\exp(z)\mid \mathcal{E}] \exp(-\Omega(\log^2d))\label{eq:lwrh4} \end{equation}  
	We again proceed by separating the upper and lower bound. 
%
%

We first consider the upper bound \ref{eq:uph4}. 

Notice that that 
$$\Exp[\exp(z)\mid \bar{\mathcal{E}}]\Pr[\bar{\mathcal E}]= \Exp[\exp(z)\mathbf{1}_{\bar{\mathcal E}}] $$

We can split the last expression as 
\begin{equation*}
\Exp \left[\expinner{v_w,c} \mathbf{1}_{\langle v_w,c \rangle > 0} \mathbf{1}_{\overline{\mathcal{E}}} \right] + \Exp \left[ \expinner{v_w,c} \mathbf{1}_{\langle v_w,c \rangle < 0} \mathbf{1}_{\overline{\mathcal{E}}} \right]\,.
\end{equation*}
The second term is upper bounded by 
$$\Exp [\mathbf{1}_{\overline{\mathcal{E}}}]\le \exp(-\Omega(\log^2 n))$$ 
We proceed to the first term of (\ref{eqn:eqn21}) and observe the following property of it:
$$ \Exp \left[\expinner{v_w,c} \mathbf{1}_{\langle v_w,c \rangle > 0} \mathbf{1}_{\overline{\mathcal{E}}}\right] \leq \Exp \left[\expinner{\alpha v_w,c} \mathbf{1}_{\langle v_w,c \rangle > 0}\ \mathbf{1}_{\overline{\mathcal{E}}} \right] \leq \Exp \left[\expinner{\alpha v_w,c} \mathbf{1}_{\overline{\mathcal{E}}} \right]$$
where $\alpha > 1$. Therefore, it's sufficient to bound
$$\Exp \left[ \exp(z) \mathbf{1}_{\overline{\mathcal{E}}} \right] $$
when $\|v_w\| = \Theta(\sqrt{d})$.   
Let's denote $r(z) = \Exp[\mathbf{1}_{\bar{\mathcal E}}\vert z]$. 

%
		Using Lemma~\ref{lem:abel}, we have that
		
		\begin{equation}
		\Exp_c[\exp(z)r(z)]\le \Exp\left[ \exp(z)\indicator{[t,+\infty]}(z)\right]
		\end{equation}
		where $t$ satisfies that 
		$\Exp_c[\mathbf{1}_{[t,+\infty]}] = \Pr[z\ge t] = \Exp_c[r(z)] \leq \exp(-\Omega(\log^2 d))$.
		Then, we claim $\Pr[z\ge t] \le \exp(-\Omega(\log^2 d))$ implies that $t \ge \Omega(\log^{.9} d)$.
		
		Indeed, this follows by directly applying Lemma \ref{l:sphericalangle}. 
		%
		%
		Afterward, applying Lemma \ref{l:abelspherical}, we have: 
		\begin{equation}
		\Exp[\exp(z)r(z)]\le \Exp\left[ \exp(z)\indicator{[t,+\infty]}(z)\right] = \exp(-\Omega(\log^{1.8}d))
		\end{equation}
which proves the upper bound we want. 
		
We now proceed to the lower bound \ref{eq:lwrh4}, which is again similar to the lower bound in the proof of Proposition~\ref{prop:helper3}: we just need to bound $\Exp[\exp(z)\mid \mathcal{E}]$. Same as in Proposition~\ref{prop:helper3}, since $\exp(x) \ge 0$, 
$$\Exp[\exp(z) \mid \mathcal{E}] \le \frac{\Exp[\exp(z)]}{\Pr[\mathcal{E}]}  $$ 
Consider the event $\mathcal{E}':z \leq t$, for $t = \Theta(\log^{.9}d)$, which by Lemma \ref{l:sphericalangle} satisfies $\Pr[\mathcal{E}'] \ge 1 - \exp(-\Omega(\log^2d))$. By the upper bound we just showed, 
$$\Exp[\exp(z)] \le \Exp[\exp(z) \mid \mathcal{E}'] + \exp(-\Omega(\log^2n)) = O(\exp(\log^{.9}d)) $$ 
where the last equality follows since conditioned on $\mathcal{E}'$, $z = O(\log^{.9}d)$. Finally, this implies  
$$\Exp[\exp(z) \mid \mathcal{E}] \le \frac{1}{\Pr[\mathcal{E}]}  O(\exp(\log^{.9}d)) = O(\exp(\log^{.9}d)) $$ 
where the last equality follows since $\Pr[\mathcal{E}] \ge 1 - \exp(-\Omega(\log^2n))$. Putting this together with \ref{eq:lwrh4}, we get	
$$ \Exp[\exp(z)] \ge \Exp[\exp(z)\mid \mathcal{E}]\Pr[\mathcal E] = \Exp[\exp(z)\mid \mathcal{E}] - \Exp[\exp(z)\mid \mathcal{E}] \exp(-\Omega(\log^2d)) \ge $$
$$ \Exp[\exp(z)\mid \mathcal{E}] - O(\exp(\log^{.9}d))\exp(-\Omega(\log^2d)) \ge \Exp[\exp(z)\mid \mathcal{E}]  - \exp(-\Omega(\log^2d))$$
which is what we needed.   	
	\end{proof}

\subsection{Analyzing partition function $Z_c$}\label{sec:zc}

In this section, we prove Lemma~\ref{thm:Z_c}. We basically first prove that for the means of $Z_c$ are all $(1+o(1))$-close to each other, and then prove that $Z_c$ is concentrated around its mean. It turns out the concentration part is non trivial because the random variable of concern, $\expinner{v_w,c}$ is not well-behaved in terms of the tail. Note that $\expinner{v_w,c}$ is NOT sub-gaussian for any variance proxy. 
This essentially disallows us to use an existing concentration inequality directly. We get around this issue by considering the truncated version of $\expinner{v_w,c}$, which is bounded, and have similar tail properties as the original one, in the regime that we are concerning. 

We bound the mean and variance of $Z_c$ first in the Lemma below. 

%

\begin{lem}\label{lem:exp_var_Z_c}
	For any fixed unit vector $c\in \R^d$,  we have that $\Exp[Z_c]\ge n$ and $\mathbb{V}[Z_c] \le O(n)$. 
\end{lem}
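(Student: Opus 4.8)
\textbf{Proof proposal for Lemma~\ref{lem:exp_var_Z_c}.}
The plan is to exploit the fact that $Z_c = \sum_{w} \expinner{v_w,c}$ is a sum of $n$ i.i.d.\ terms (one per word), so both $\Exp[Z_c]$ and $\mathbb{V}[Z_c]$ reduce to a single one-dimensional computation involving $\expinner{v,c}$ for a generic word vector $v = s\cdot\hat v$, with $\hat v\sim\mathcal N(0,I)$ and $s$ the scalar norm variable ($\Exp[s]=\tau=\Theta(1)$, $s\le\kappa$ a.s.). The whole argument is just two applications of the Gaussian moment generating function identity~\eqref{eqn:gaussian}, conditioning on $s$ first and then averaging over $s$.

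For the mean, I would write $\Exp[Z_c] = n\,\Exp_v[\expinner{v,c}]$ by linearity, then condition on $s$: given $s$, the scalar $\inner{v}{c} = s\inner{\hat v}{c}$ is Gaussian with mean $0$ and variance $s^2\|c\|_2^2 = s^2$ (using $\|c\|_2=1$), so by~\eqref{eqn:gaussian} the inner conditional expectation equals $\exp(s^2/2)$. Averaging, $\Exp[Z_c] = n\,\Exp_s[\exp(s^2/2)] \ge n$ since $\exp(s^2/2)\ge 1$ pointwise. (This also recovers the closed form $\Exp[Z_c] = f(1)$ with $f(\alpha)=n\,\Exp_s[\exp(s^2\alpha/2)]$ used in the sketch.)

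For the variance, independence of the $v_w$'s gives $\mathbb{V}[Z_c] = n\,\mathbb{V}[\expinner{v,c}] \le n\,\Exp_v[\exp(2\inner{v}{c})]$. Conditioning on $s$ again, $2\inner{v}{c}$ is Gaussian with mean $0$ and variance $4s^2$, so~\eqref{eqn:gaussian} gives $\Exp_{v\mid s}[\exp(2\inner{v}{c})] = \exp(2s^2)$. Since $s\le\kappa$ almost surely, $\Exp_v[\exp(2\inner{v}{c})] = \Exp_s[\exp(2s^2)]\le \exp(2\kappa^2)$, and hence $\mathbb{V}[Z_c]\le n\exp(2\kappa^2) = O(n)$, treating $\kappa$ as an absolute constant as elsewhere in the paper.

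There is no real obstacle here: the only thing to be slightly careful about is that the relevant random variable $\expinner{v,c}$ is heavy-tailed (not sub-Gaussian), so one should bound the variance through the second moment $\Exp[\exp(2\inner{v}{c})]$ rather than trying to invoke a concentration inequality at this stage — the almost-sure bound $s\le\kappa$ is exactly what makes this second moment finite and $O(1)$. The harder concentration step (showing $Z_c$ is close to its mean with high probability) is deferred to the truncation argument used in the proof of Lemma~\ref{thm:Z_c} itself, and is not needed for this lemma.
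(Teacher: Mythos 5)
Your proposal is correct and follows essentially the same route as the paper's proof: reduce to a single word via independence, condition on the norm $s$, apply the Gaussian moment generating function identity, and use $s\le\kappa$ a.s.\ to bound the second moment. The only cosmetic difference is that the paper obtains $\Exp[Z_c]\ge n$ via $e^x\ge 1+x$ and symmetry rather than from $\exp(s^2/2)\ge 1$, which is an equally valid (and equivalent) observation.
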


\begin{proof}[Proof of Lemma~\ref{lem:exp_var_Z_c}]
		
		Recall that by definition $$Z_c = \sum_{w}\exp(\langle v_w, c\rangle).$$ We fix context $c$ and view $v_w$'s as random variables throughout this proof. 
		Recall that $v_w$ is composed of $v_w = s_w \cdot \hat{v}_w$, where $s_w$ is the scaling and $\hat{v}_w$ is from spherical Gaussian with identity covariance $I_{d\times d}$. Let $s$ be a random variable that has the same distribution as $s_w$.  
	
	We lowerbound the mean of $Z_c$ as follows: 
	
	$$\Exp[Z_c] =n\Exp\left[\exp(\langle v_w,c\rangle)\right]\ge n\Exp\left[1+\langle v_w,c\rangle\right] = n$$
	where the last equality holds because of the symmetry of the spherical Gaussian distibution.  
	On the other hand, to upperbound the mean of $Z_c$, we  condition on the scaling $s_w$, 
	\begin{align*}
		\E[Z_c] &= n\Exp[\exp(\langle v_w, c\rangle)] \\
		&=n \Exp\left[\Exp\left[\exp(\langle v_w, c\rangle)\mid s_w\right]\right]
	\end{align*}
	
	Note that conditioned on $s_w$, we have that $\langle v_w, c\rangle$ is a Gaussian random variable with variance 
	$\sigma^2 = s_w^2$. Therefore,
	\begin{align*}
		\Exp\left[\exp(\langle v_w, c\rangle)\mid s_w\right] &= \int_x \frac{1}{\sigma\sqrt{2\pi}} \exp(-\frac{x^2}{2\sigma^2}) \exp(x)dx \\
		&=\int_x \frac{1}{\sigma\sqrt{2\pi}} \exp(-\frac{(x-\sigma^2)^2}{2\sigma^2} + \sigma^2/2)dx \\
		&= \exp(\sigma^2/2) 
	\end{align*} 
	
	
	It follows that $$\Exp[Z_c] = n\Exp[\exp(\sigma^2/2) ] = n\Exp[\exp(s_w^2/2)]= n\Exp[\exp(s^2/2)].$$
	%
	%
	
	We calculate the variance of $Z_c$ as follows: 
	
	\begin{align*}
		\mathbb{V}[Z_c] & = \sum_w\mathbb{V}\left[\exp(\langle v_w, c\rangle)\right] \le n\Exp[\exp(2\langle v_w, c\rangle)] \\
		&=n \Exp\left[\Exp\left[\exp(2\langle v_w, c\rangle)\mid s_w\right]\right]
	\end{align*}
	By a very similar calculation as above, using the fact that $2\langle v_w,c\rangle$ is a Gaussian random variable with variance 
	$4 \sigma^2 = 4 s_w^2$, 
	\begin{align*}
		\Exp\left[\exp(2\langle v_w, c\rangle)\mid s_w\right] 
		&= \exp(2\sigma^2) 
	\end{align*}

	Therefore, we have that

	
	\begin{align*}
	\mathbb{V}[Z_c]&\le n \Exp\left[\Exp\left[\exp(2\langle v_w, c\rangle)\mid s_w \right]\right]\\
	& = n\Exp\left[\exp(2\sigma^2)\right] =  n\Exp\left[\exp(2s^2)\right]\le \Lambda n
  \end{align*}
	
	for $\Lambda = \exp(8 \kappa^2)$ a constant, and at the last step we used the facts that $s\le \kappa$ a.s.
\end{proof}

Now we are ready to prove Lemma~\ref{thm:Z_c}. 
\begin{proof}[Proof of Lemma~\ref{thm:Z_c}]
	
	We fix the choice of $c$, and the proving the concentration using the randomness of $v_w$'s first. 
		Note that that $\exp(\langle v_w,c\rangle) $ is neither sub-Gaussian nor sub-exponential (actually the Orlicz norm of random variable $\exp(\langle v_w,c\rangle) $ is never bounded).
		This prevents us from applying the usual concentration inequalities. The proof deals with this issue in a slightly more specialized manner. 
		
		Let's define $\mathcal{F}_w$ be the event that $|\langle v_w,c\rangle|\le \frac{1}{2}\log  n$. 
		We claim that $\Pr[\mathcal{F}_w] \ge 1-\exp(-\Omega(\log^2n))$. Indeed note that $\inner{v_w}{c} \mid s_w$ has a Gaussian distribution with standard deviation 
		$s_w \|c\| = s_w \leq 2\kappa$ a.s. 
		Therefore by the Gaussianity of $\inner{v_w}{c}$ we have that $$\Pr[|\langle v_w,c\rangle|\ge \eta \log  n\mid  s_w]\le 2\exp(-\Omega(\frac{1}{4}\log^2 n/\kappa^2)) = \exp(-\Omega(\log^2 n)),$$
		where $\Omega(\cdot)$ hides the dependency on $\kappa$ which is treated as absolute constants. 
		Taking expectations over $s_w$, we obtain that  $$\Pr[\mathcal{F}_w ]  = \Pr[|\langle v_w,c\rangle|\le \frac 1 2 \log  n]\ge 1- \exp(-\Omega(\log^2 n)).$$
		 Note that by definition, we in particular have that conditioned on $\mathcal{F}_w$, it holds that $\exp(\inner{v_w}{c})\le \sqrt{n}$.

		Let the random variable $X_w$ have the same distribution as $\exp(\langle v_w,c\rangle) \vert_{\mathcal{F}_w}$. We prove that the random variable $Z'_c = \sum_{w}X_w$ concentrates well.  By convexity of the exponential function, we have that the mean of $Z_c'$ is lowerbounded
		
		$$\Exp[Z_c'] =n\Exp\left[\exp(\langle v_w,c\rangle)\vert_{\mathcal{F}_w}\right]\ge n\exp( \Exp \left[\langle v_w,c\rangle \vert_{\mathcal{F}_w} \right])= n$$
		
		and the variance is upperbounded by 
		
		\begin{align*}
		\mathbb{V} [Z_c'] &\le n\Exp\left[\exp(\langle v_w,c\rangle)^2\vert_{\mathcal{F}_w}\right] \\
		&\le \frac{1}{\Pr[\mathcal{F}_w]} \Exp\left[\exp(\langle v_w,c\rangle)^2\right] \\
		&\le  \frac{1}{\Pr[\mathcal{F}_w]}\Lambda n \le 1.1 \Lambda n
		\end{align*}
		
		where the second line uses the fact that 
		\begin{eqnarray*}
		\Exp\left[\exp(\langle v_w,c\rangle)^2\right]  &&= \Pr[\mathcal{F}_w]\Exp\left[\exp(\langle v_w,c\rangle)^2\vert {\mathcal{F}_w}\right] + \Pr[\overline{\mathcal{F}}_w]\Exp\left[\exp(\langle v_w,c\rangle)^2\vert {\overline{\mathcal{F}}_w}\right] \\
		&& \ge \Pr[\mathcal{F}_w]\Exp\left[\exp(\langle v_w,c\rangle)^2\vert {\mathcal{F}_w}\right]. 
		\end{eqnarray*}
		
		Moreover, by definition, for any $w$, $|X_w|\le \sqrt{n}$. Therefore by Bernstein's inequality, we have that 
		$$\Pr\left[|Z_c' -\Exp[Z_c']| > \epsilon n \right]\le \exp(-\frac{\frac{1}{2}\epsilon^2 n^2}{1.1\Lambda n + \frac{1}{3}\sqrt{n}∑\cdot \epsilon n})$$

		Note that $\Exp[Z_c']\ge n$, therefore for $\epsilon \gg \frac{\log^2 n}{\sqrt{n}}$, we have, 
				\begin{align}
					\Pr\left[|Z_c' -\Exp[Z_c']| > \epsilon \Exp[Z_c']\right] & \le \Pr\left[|Z_c' -\Exp[Z_c']| > \epsilon n \right] \le \exp(-\frac{\frac{1}{2}\epsilon^2 n^2}{\Lambda n + \frac{1}{3}\sqrt{n}\cdot \epsilon n})\nonumber \\
					& \le \exp(-\Omega(\min\{  \epsilon^2n/\Lambda, \epsilon \sqrt{n}\})) \nonumber\\
					& \le \exp(-\Omega(\log^2 n)) \nonumber
				\end{align}
				
				
		Let $\mathcal{F} = \cup_w \mathcal{F}_w$ be the union of all $\mathcal{F}_w$. Then by union bound, it holds that $\Pr[\bar{\mathcal{F}}]\le \sum_{w}\Pr[\bar{\mathcal{F}_w}] \le n\cdot \exp(-\Omega(\log^2n)) = \exp(-\Omega(\log^2n))$.  We have that by definition, $Z_c'$ has the same distribution as $Z_c\vert_{\mathcal{F}}$. Therefore, we have that 
		
		\begin{equation}
		\Pr[|Z_c -\Exp[Z_c']| > \epsilon \Exp[Z_c']\mid \mathcal{F}]\le \exp(-\Omega(\log^2 n)) \label{eqn:eqn12}
		\end{equation}
		
		and therefore
				\begin{align}
					\Pr[|Z_c -\Exp[Z_c']| > \epsilon  \Exp[Z_c']] &= \Pr[\mathcal{F}]\cdot \Pr[|Z_c -\Exp[Z_c']| > \epsilon  \Exp[Z_c']\mid \mathcal{F}] + \Pr[\bar{\mathcal{F}}]  \Pr[|Z_c -\Exp[Z_c']| > \epsilon  \Exp[Z_c']\mid \bar{\mathcal{F}}] \nonumber\\
					& \le \Pr[|Z_c -\Exp[Z_c']| > \epsilon  \Exp[Z_c']\mid \mathcal{F}]  + \Pr[\bar{\mathcal{F}}] \nonumber\\
					& \le \exp(-\Omega(\log^2n)) \label{eqn:eqn13}
					\end{align}
			where at the last line we used the fact that $\Pr[\bar{\mathcal{F}}]\le\exp(-\Omega(\log^2n))$ and equation~\eqref{eqn:eqn12}. 
			
			Let $Z =  \Exp[Z_c'] =  \Exp[\exp(\inner{v_w}{c})\mid |\inner{v_w}{c}| < \frac{1}{2}\log n]$ (note that $\Exp[Z_c']$ only depends on the norm of $\|c\|$ which is equal to 1). Therefore we obtain that with high probability over the randomness of $v_w$'s, 
			
			\begin{equation}
				(1-\epsilon_z)Z\le Z_c\le (1+\epsilon_z)Z\label{eqn:lem_zc_proof}
			\end{equation}
			Taking expectation over the randomness of $c$, we have that 
			\begin{equation*}
			\Pr_{c,v_w}[\textrm{~\eqref{eqn:lem_zc_proof} holds} ]\ge 1-\exp(-\Omega(\log^2 n))
			\end{equation*}
			
			Therefore by a standard averaging argument (using Markov inequality), we have 
			\begin{equation*}
			\Pr_{v_w}\left[\Pr_c[\textrm{~\eqref{eqn:lem_zc_proof} holds} ]\ge 1-\exp(-\Omega(\log^2 n))\right] \ge 1-\exp(-\Omega(\log^2 n))
			\end{equation*}
			For now on we fix a choice of $v_w$'s so that $\Pr_c[\textrm{~\eqref{eqn:lem_zc_proof} holds} ]\ge 1-\exp(-\Omega(\log^2 n))$ is true. Therefore in the rest of the proof, only $c$ is random variable, and with probability $1 - \exp(-\Omega(\log^2 n))$ over the randomness of $c$, it holds that, 
			\begin{equation}
			(1-\epsilon_z)Z\le Z_c\le (1+\epsilon_z)Z. \label{eqn:good}
			\end{equation}
	\end{proof}
%

\section{Maximum likelihood estimator for co-occurrence} \label{sec:mlecooc}

Let $\LengthDocument$ be the corpus size, and $X_{w,w'}$ the number of times words $w, w'$ co-occur within a context of size $10$ in the corpus. According to the model, the probability of this event at any particular
time is $\log p(w,w')  \propto \nbr{v_{w} +  v_{w'}}_2^2.$ Successive samples from a random walk are not 
independent of course, but if the random walk mixes fairly quickly (and the mixing time of our random walk
is related to the {\em logarithm} of the number of words) then the 
set of $X_{w,w'}$'s over all word pairs is distributed  up to a very close approximation as a multinomial distribution $\mul(\tilde{L}, \{p(w,w')\})$ where $\tilde{L} = \sum_{w,w'} X_{w,w'}$ is the total number of word pairs in consideration (roughly $10 \LengthDocument$). 

Assuming this approximation, we show below that the maximum likelihood values for the word vectors correspond to the following optimization, 
\begin{align*}  
\min_{\cbr{v_w}, C} & \sum_{w,w'} X_{w,w'} \rbr{ \log(X_{w,w'}) - \nbr{v_{w}\!+\! v_{w'}}_2^2 - C}^2   \mbox{\qquad\qquad\qquad\qquad (Objective {\bf SN})}
\end{align*} 

Now we give the derivation of the objective.
According to the multinomial distribution, maximizing the likelihood of $\{X_{w,w'}\}$ is equivalent to maximizing
\begin{align*}
\ell & = \log\left(\prod_{(w, w')} p(w, w')^{X_{w,w'}} \right) = \sum_{(w, w')} X_{w,w'} \log p(w, w').
\end{align*}

To reason about the likelihood, denote the logarithm of the ratio between the expected count and the empirical count as
\begin{align*}
\Delta_{w,w'} = \log\left(\frac{\tilde{L} p(w, w')  }{ X_{w,w'} } \right).   
\end{align*}
Note that 
\begin{align}
 \ell & = \sum_{(w, w')} X_{w,w'}\log p(w, w')  \nonumber \\
& =  \sum_{(w, w')}  X_{w,w'} \left[\log\frac{X_{w,w'}}{\tilde{L}}  +   \log\left(\frac{\tilde{L} p(w, w')  }{ X_{w,w'} } \right) \right]  \nonumber \\
& =  \sum_{(w, w')}  X_{w,w'}  \log\frac{X_{w,w'}}{\tilde{L}}  +   \sum_{(w, w')}  X_{w,w'}  \log\left(\frac{\tilde{L} p(w, w')  }{ X_{w,w'} } \right)  \nonumber\\
& =  c  +   \sum_{(w, w')}   X_{w,w'}  \Delta_{w,w'} \label{eqn:likelihood_app}
\end{align}
where we let $c$ denote the constant $\sum_{(w, w')}  X_{w,w'}  \log\frac{X_{w,w'}}{\tilde{L}}$. 
Furthermore, we have 
\begin{align*}
\tilde{L} 
& = \sum_{(w,w')} \tilde{L} p_{w,w'} \\
& = \sum_{(w,w')} X_{w,w'} e^{\Delta_{w,w'} } \\
& = \sum_{(w,w')}  X_{w,w'} (1 + \Delta_{w,w'} + \Delta_{w,w'}^2/2 + O(|\Delta_{w,w'}|^3) )
\end{align*}
and also $\tilde{L} = \sum_{(w,w')} X_{w,w'}$. So
\begin{align*}
 \sum_{(w,w')}  X_{w,w'} \Delta_{w,w'}  =  - \left(\sum_{(w,w')}  X_{w,w'} \Delta_{w,w'}^2/2 + \sum_{(w,w')}  X_{w,w'} O(|\Delta_{w,w'}|^3) \right).
\end{align*}
Plugging this into (\ref{eqn:likelihood}) leads to
\begin{align}
c - \ell  = \sum_{(w,w')}  X_{w,w'} \Delta_{w,w'}^2/2 + \sum_{(w,w')}  X_{w,w'} O(|\Delta_{w,w'}|^3). \label{eqn:finalexpresion_app}
\end{align}

When the last term is much smaller than the first term on the right hand side, maximizing the likelihood is approximately equivalent to minimizing the first term on the right hand side, which is our objective:
\begin{align*}
\sum_{(w, w')}   X_{w,w'}  \Delta_{w,w'}^2 \approx \sum_{(w, w')}   X_{w,w'} \left(\|v_w + v_{w'} \|_2^2/(2d) - \log X_{w,w'} + \log\tilde{L} - 2\log Z \right)^2
\end{align*}
where $Z$ is the partition function.

We now argue that the last term is much smaller than the first term on the right hand side in (\ref{eqn:finalexpresion_app}). For a large $X_{w,w'}$, the $\Delta_{w,w'}$  is close to 0 and thus the induced approximation error is small. Small $X_{w,w'}$'s only contribute a small fraction of the final objective (\ref{eqn:finalexpresion}), so we can safely ignore the errors. 
To see this, note that the objective $\sum_{(w,w')}  X_{w,w'} \Delta_{w,w'}^2$ and the error term $\sum_{(w,w')}  X_{w,w'} O(|\Delta_{w,w'}|^3)$ differ by a factor of  $|\Delta_{w,w'}|$ for each $X_{w,w'}$. 
For large $X_{w,w'}$'s,  $|\Delta_{w,w'}| \ll 1$, and thus their corresponding errors are much smaller than the objective. So we only need to consider the $X_{w,w'}$'s that are small constants. The co-occurrence counts obey a power law distribution (see, e.g.~\cite{pennington2014glove}). That is, if one sorts $\{X_{w,w'}\}$ in decreasing order, then the $r$-th value in the list is roughly
$$
  x_{[r]} = \frac{k}{r^{5/4}}
$$ 
where $k$ is some constant. Some calculation shows that 
$$
  \tilde{L} \approx 4k, ~~ \sum_{X_{w,w'} \le x} X_{w,w'} \approx 4 k^{4/5} x^{1/5},
$$
and thus when $x$ is a small constant 
$$
  \frac{\sum_{X_{w,w'} \le x} X_{w,w'} }{\tilde{L}} \approx \left(\frac{4x}{\tilde{L}}\right)^{1/5} = O\left(\frac{1}{\tilde{L}^{1/5}} \right).
$$
So there are only a negligible mass of $X_{w,w'}$'s that are small constants, which vanishes when $\tilde{L}$ increases. Furthermore, we empirically observe that the relative error of our objective is $5\%$, which means that the errors induced by $X_{w,w'}$'s that are small constants is only a small fraction of the objective. Therefore, $\sum_{w,w'} X_{w,w'} O(|\Delta_{w,w'}|^3)$ is small compared to the objective and can be safely ignored.

\end{document}